\documentclass{article} % For LaTeX2e
\usepackage{iclr2027_conference,times}

\usepackage{amsmath,amsfonts,bm}

\def\1{\bm{1}}

\DeclareMathAlphabet{\mathsfit}{\encodingdefault}{\sfdefault}{m}{sl}
\SetMathAlphabet{\mathsfit}{bold}{\encodingdefault}{\sfdefault}{bx}{n}

\DeclareMathOperator*{\argmin}{arg\,min}

\usepackage{graphicx}
\usepackage{hyperref}
\usepackage{url}

\usepackage{booktabs}

\newcommand{\xb}{\mathbf{x}}
\newcommand{\yb}{\mathbf{y}}
\newcommand{\zb}{\mathbf{z}}
\newcommand{\bb}{\mathbf{b}}

\newcommand{\Ab}{\mathbf{A}}

\newcommand{\Cb}{\mathbf{C}}
\newcommand{\Kb}{\mathbf{K}}
\newcommand{\Pb}{\mathbf{P}}
\newcommand{\Qb}{\mathbf{Q}}
\newcommand{\Rb}{\mathbf{R}}
\newcommand{\Sb}{\mathbf{S}}
\newcommand{\Vb}{\mathbf{V}}
\newcommand{\Wb}{\mathbf{W}}
\newcommand{\Xb}{\mathbf{X}}
\newcommand{\Zb}{\mathbf{Z}}

\newcommand{\cU}{\mathcal{U}}
\newcommand{\cV}{\mathcal{V}}
\newcommand{\cT}{\mathcal{T}}
\newcommand{\cH}{\mathcal{H}}
\newcommand{\cS}{\mathcal{S}}
\newcommand{\cL}{\mathcal{L}}

\newcommand{\RR}{\mathbb{R}}

\newcommand{\EE}{\mathbb{E}}

\usepackage{amsthm}
\newtheorem{theorem}{Theorem}
\newtheorem{lemma}{Lemma}
\newtheorem{corollary}{Corollary}
\newtheorem{assumption}{Assumption}
\newtheorem{definition}{Definition}
\newtheorem{remark}{Remark}
\newtheorem{setting}{Setting}

\title{Neural Scaling Laws of Transformer Operator Network}

\author{
Haoran Yan$^{1}$,
Zhongjie Shi$^{1}$,
Yuanzhe Xi$^{2}$,
Peng Chen$^{3}$,
Wenjing Liao$^{1}$
\\[0.5em]
$^{1}$School of Mathematics, Georgia Institute of Technology,
Atlanta, GA 30332, USA
\\
$^{2}$Department of Mathematics, Emory University,
Atlanta, GA 30322, USA
\\
$^{3}$School of Computational Science and Engineering,
Georgia Institute of Technology,
Atlanta, GA 30332, USA
}

\iclrfinalcopy 
\begin{document}

\maketitle
\lhead{Preprint}

\begin{abstract}
Transformers have emerged as powerful architectures for learning solution operators of physical systems. Empirically the prediction error has been observed to decrease when the data size and model size increase, suggesting neural scaling behavior. Yet a theoretical understanding of such scaling laws for transformer-based operator learning remains limited. In this work, we develop a theoretical framework for characterizing the approximation and generalization errors of transformer-based operator learning. Our analysis builds on a local-to-global approximation principle that is naturally aligned with the softmax attention mechanism and yields discretization-invariant output functions. On approximation theory, we derive a universal approximation error of transformer-based operator learning for H\"older-regular operators. On generalization theory, we establish a power scaling law between the prediction error  and the training data size. The rate of convergence represented by the scaling exponent explicitly reflects the dimensions of the input and output domains, the regularity of the underlying functions and operators, and crucially, the intrinsic dimension of the input function class. By exploiting this intrinsic low-dimensional structure, our analysis yields a power-law generalization rate for operator learning, in contrast to the logarithmic-type power-law rates appearing in existing analyses of operator learning with feedforward neural networks.  Numerical experiments validate the predicted power-law scaling and confirm that the convergence rate varies systematically with the intrinsic dimension of the input function class. 
\end{abstract}

\section{Introduction}
\vspace{-0.2cm}
 Learning solution operators for differential equations and operators arising in inverse problems is a fundamental problem in science and engineering. Many problems in applications can be formulated as an operator between function spaces where the input may represent an initial condition, boundary condition, forcing term, material coefficient, or geometric description, and the output represents the corresponding solution or quantity of interest.
Operator-learning methods seek to learn the input-to-output map and can therefore be repeatedly evaluated for new inputs once training is completed. This perspective has motivated a rapidly growing family of neural operator architectures, including Deep Operator Networks (DeepONets) \citep{lu2021learning}, Fourier Neural Operators (FNOs) \citep{li2020fourier} and other neural operators, together with numerous subsequent developments in theory \citep{lanthaler2022error,kovachki2024operator,liu2024deep,liu2026scaling,yang2026generalization} and computation \citep{pathak2022fourcastnet,li2023geometry}. 

More recently, Transformers have become increasingly prominent in operator learning. 
The attention mechanism in Transformers  \citep{vaswani2017attention} provides a natural way to represent samples of an input function as tokens, aggregate information globally across these observations, and generate predictions at arbitrary output locations. A growing family of transformer-based operator architectures has demonstrated strong performance for learning solution operators of partial differential equations and other physical systems, including the Galerkin Transformer \citep{cao2021choose}, OFormer  \citep{li2022transformer}, GNOT \citep{hao2023gnot}, Position-induced Transformer (PiT) \citep{chen2024positional}, Transolver \citep{wu2024transolver}, and many others \citep{bryutkin2024hamlet,luo2024hierarchical}.
Numerical experiments in these works \citep{li2022transformer,hao2023gnot} have demonstrated systematic improvements as training data increase, suggesting neural scaling behavior, which appears in a wide range of AI models \citep{kaplan2020scaling,havrilla2024understanding}.

These empirical advances motivate a quantitative theoretical understanding of how the approximation and generalization performance of transformer-based operator learning scales with model complexity, training-data size, and the underlying structure of the function spaces. Understanding these scaling laws can in turn guide the allocation of model capacity and training data, enabling more efficient use of computational resources.

When feedforward neural networks are utilized for operator learning, neural scaling laws about the approximation and generalization errors are theoretically characterized for DeepONets \citep{lanthaler2022error,liu2026scaling}, FNOs \citep{kovachki2021universal} and an encoder-decoder network \citep{liu2024deep}. Under regularity assumptions on the input and output functions and Lipschitz continuity of the operator, existing bounds yield only slow log-power scaling of the approximation and generalization errors with respect to model size and training-data size, respectively \citep{kovachki2024operator,liu2026scaling}. This slow log-power scaling arises from the difficulty of learning in infinite-dimensional spaces, and it is shown to be optimal for Lipschitz functionals  \citep[Proposition 2.21]{lanthaler2026parametric}. This slow log-power scaling can be improved to the power scaling law when the learning problem exhibits a low-dimensional structure \citep{liu2024deep,liu2025generalization,liu2026scaling}.

For transformer-based operator learning, existing works have investigated the approximation properties through universal approximation theory \citep{calvello2025continuum,shih2025transformers,zappala2026universal,furuya2026function}. However, these works do not provide a quantitative scaling between the approximation error and the network size, and the generalization error has not been analyzed in the literature.

To fill this gap, we develop a theoretical framework to answer the following question:
\begin{center}
\textit{How does transformer-based operator learning error scale with model size and data size? 
}
\end{center}

In this paper, we answer this question by establishing quantitative approximation and generalization theories of transformer-based operator learning.
We impose mild H\"older continuity assumptions on the input and output functions, as well as on the target operator. The complexity of the input function class is characterized by the $L^2$ covering number, which aligns with classical complexity definitions on function/metric spaces  and statistical learning theory \citep{hurewicz2015dimension,kolmogorov1959varepsilon,wainwright2019high}. 
This complexity characterization arises naturally in scientific applications where the input functions are governed by a small number of physical parameters, such as parameterized initial or boundary conditions, source locations and amplitudes, or geometric and material parameters.

To the best of our knowledge, this paper is the first to theoretically characterize neural scaling laws for Transformer-based operator learning. Our main contributions are summarized as follows.

\begin{itemize}
\vspace{-0.2cm}

\item
\textbf{Approximation and model scaling.}
On approximation theory, we explicitly construct a two-block softmax Transformer network to approximate $\gamma$-H\"older continuous operators in Theorem~\ref{thm:transformer_operator_approx}. When the transformer has model size $N_{\rm total}$, the operator approximation error scales like $N_{\rm total}^{-1/s}$ where the parameter $s$ is specified in \eqref{eq:complexity_exponent} with explicit dependence on the operator regularity, function regularity, and input function-space complexity.

\item
\textbf{Generalization and data scaling.} 
On generalization theory, we derive a generalization error bound for the empirical risk minimizer in Theorem~\ref{thm:generalization_bound}. When a transformer with a proper architecture is trained on $n$ input-output function pairs, the squared operator generalization error scales like $n^{-\frac{2}{2+s}}$, which characterizes a data scaling law between the operator prediction error and the number of training data size. 
\vspace{-0.2cm}

\end{itemize}

Our approximation and generalization results quantify the effects of operator regularity, function regularity, input-function space complexity, input-function discretization, model size, training-data size, and output observation noise. By leveraging the low-dimensional complexity of the input function space, we prove power scaling laws.
This contrasts with existing quantitative analyses for feedforward-neural-network-based operator learning, and more broadly infinite-dimensional learning, which exhibit logarithmic-type power scaling laws when the low-dimensional input function complexity is not considered \citep{mhaskar1997neural,kovachki2024operator,lanthaler2026parametric,shi2025nonlinear,liu2026scaling}. 

Our proof builds on a novel local-to-global approximation scheme based on two-level Softmax Partitions Of Unity (POU): one POU is over the input function space and the other POU is over the output domain. The proof is then followed by an explicit realization of this two-level approximation by a two-block Transformer. This construction is naturally aligned with the Softmax attention mechanism and yields discretization-invariant outputs. The point-wise feedforward layers are only used for exact linear feature assembly.

{\bf Organization.}
The remainder of the paper is organized as follows. Section~\ref{sec:operator_learning_setup} introduces the operator-learning problem, data-generation setting, and Transformer architecture. Section~\ref{sec:main_results} presents the approximation and
generalization results, together with the resulting model and data
scaling laws. Section~\ref{sec:proof_sketch} provides proof sketches for the main theorems. Section~\ref{sec:experiments} presents numerical experiments on the Burgers' and KdV solution operators. Additional definitions, architectural details, extensions, complete proofs, supporting lemmas, and experimental details are in the appendices.

{\bf Notation.}
For a positive integer $m$, we write $[m]:=\{1,\ldots,m\}$. We use lower-case letters for scalars, bold lower-case letters for vectors, and bold upper-case letters for matrices. For a probability measure $\rho$, we write $\|f\|_{L^2(\rho)}^2:=\int |f|^2\,d\rho$. For a collection of network parameters $\Theta$, $\|\Theta\|_\infty$ denotes the maximum absolute value of its entries, and $\operatorname{vec}(\Zb)$ denotes the column-wise vectorization of $\Zb$.

\vspace{-0.2cm}
\section{Operator Learning Problem and Transformer Architecture}
\label{sec:operator_learning_setup}
\vspace{-0.2cm}

\subsection{Problem setup}
\vspace{-0.2cm}

We consider supervised learning of a nonlinear operator $G:\cU\to\cV$ from finitely many evaluations of input functions and pointwise output queries. Let $\Omega_{\cU}=[0,1]^{d_1}$ be the input domain and let $\Omega_{\cV}\subseteq[0,1]^{d_2}$ be the output query domain. Here $\cU$ is a class of scalar-valued functions on $\Omega_{\cU}$, and $\cV$ is a class of scalar-valued functions on $\Omega_{\cV}$.

\begin{setting}[Data Generation]
\label{setting:data}
A training data set $\cS$ is generated under the following protocol:
\begin{enumerate}
    \item \textbf{Function sampling.}
    The input functions $u_1,\ldots,u_n$ are drawn i.i.d. from a probability distribution $\rho_u$ supported on $\cU$.

    \item \textbf{Input discretization.}
    Let $X_{n_x}:=(\xb_1,\ldots,\xb_{n_x})\subset\Omega_{\cU}$ be an input grid, shared across all sampled functions and independent of the sampled functions. For each $u_i$, define its discretized representation by
    \begin{equation}
    \label{eq:input_discretization}
    S_{n_x}(u_i) := \bigl((\xb_1,u_i(\xb_1)),\ldots (\xb_{n_x},u_i(\xb_{n_x})) \bigr).
    \end{equation}

    \item \textbf{Output query sampling.}
    The query points $\{\yb_{i,j}\}_{j=1}^{n_y}$ are drawn i.i.d. from a probability distribution $\rho_y$ supported on $\Omega_{\cV}$, independently of $\{u_i\}_{i=1}^n$. The observed scalar outputs are
    $$
    v_{i,j} = G(u_i)(\yb_{i,j})+\xi_{i,j},
    \qquad
    i\in[n],\ j\in[n_y],
    $$
    where the noise variables $\{\xi_{i,j}\}_{i\in[n],\,j\in[n_y]}$ are independent, independent of all input functions and query points, and mean-zero sub-Gaussian with variance proxy $\sigma^2$.
\end{enumerate}
The training sample can be written as $\cS = \left\{\left(S_{n_x}(u_i),\yb_{i,j},v_{i,j}\right): i\in[n],\ j\in[n_y] \right\}$. The data-generation protocol is illustrated in Figure~\ref{fig:data_generation}.
\begin{figure}
    \centering
    \includegraphics[width=0.8\linewidth]{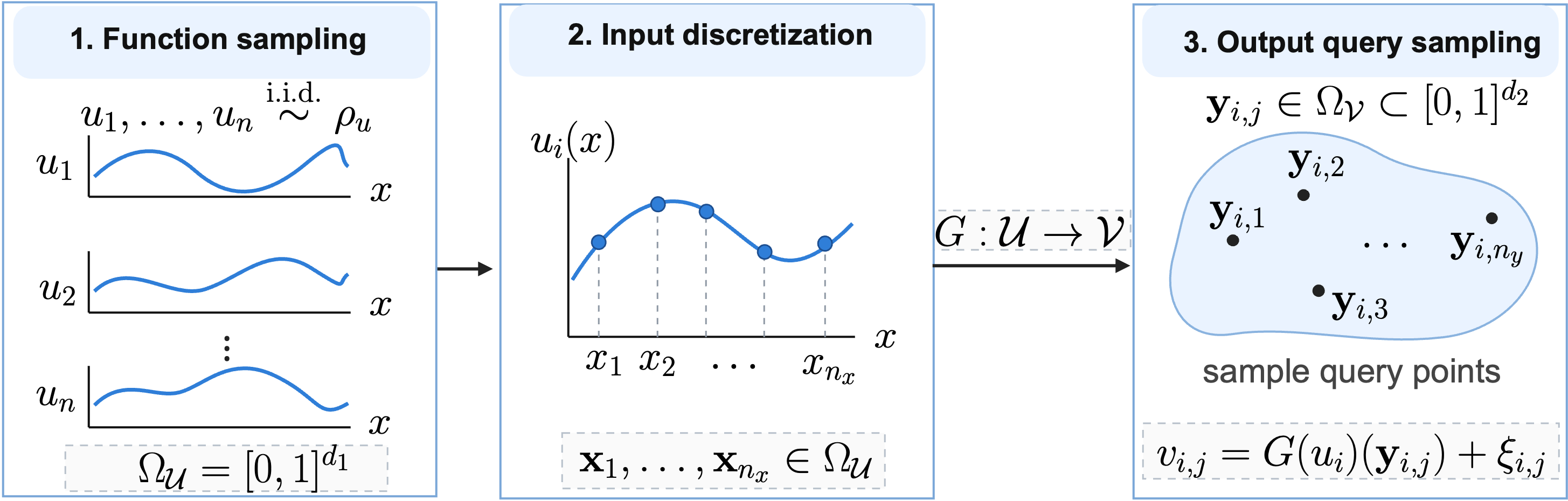}
    \caption{
    Illustration of the operator-learning data-generation setting. Each input function is observed on a shared input grid, while output values are queried at independently sampled locations.
    }
    \label{fig:data_generation}
\end{figure}
\end{setting}

For any predictor $T$ mapping $(S_{n_x}(u),\yb)$ to a scalar, define the
population risk by
\begin{equation}
\label{eq:population_risk}
\cL(T) := \EE_{u\sim\rho_u}\EE_{\yb\sim\rho_y}\left[\left(T(S_{n_x}(u),\yb)-G(u)(\yb) \right)^2 \right].
\end{equation}
For a hypothesis class $\cH$, define the empirical risk on the training sample
$\cS$ by
\begin{equation}
\label{eq:empirical_risk}
\textstyle
\widehat{\cL}_{\cS}(T) := \frac{1}{nn_y}\sum_{i=1}^{n}\sum_{j=1}^{n_y}\left(T(S_{n_x}(u_i),\yb_{i,j})-v_{i,j}\right)^2.
\end{equation}
Let 
\begin{equation}
\label{eq:minimizer}
\widehat T\in\argmin_{T\in\cH}\widehat{\cL}_{\cS}(T)
\end{equation} be an empirical
risk minimizer. Since $\widehat T$ depends on the random training
sample $\cS$, our statistical objective is to bound the expected population risk 
\begin{equation}
\label{eq:statistical_objective}
\EE_{\cS}\cL(\widehat T) = \EE_{\cS}\EE_{u\sim\rho_u}\EE_{\yb\sim\rho_y}\left[\left(\widehat T(S_{n_x}(u),\yb)-G(u)(\yb)\right)^2\right],
\end{equation}
where $\cH$ is chosen as the Transformer hypothesis class introduced in Subsection~\ref{subsec:transformer_architecture}.

\vspace{-0.2cm}
\subsection{Transformer Architecture}
\label{subsec:transformer_architecture}
\vspace{-0.2cm}

Given the discretized input function $S_{n_x}(u)$ defined in \eqref{eq:input_discretization} and an output query $\yb\in\Omega_{\cV}$, we organize the input observations and the query into a sequence of tokens.
\begin{equation}
\label{eq:transformer_input_tokens}
\Xb(u,\yb) :=
\begin{bmatrix}
\xb_1 & \cdots & \xb_{n_x} & \mathbf 0_{d_1} & \cdots & \mathbf 0_{d_1}
\\
\mathbf 0_{d_2} & \cdots & \mathbf 0_{d_2} & \yb & \cdots & \mathbf 0_{d_2}
\\
u(\xb_1) & \cdots & u(\xb_{n_x}) & 0 & \cdots & 0
\end{bmatrix}.
\end{equation}
The first $n_x$ columns encode the spatial locations and corresponding function values in $S_{n_x}(u)$, while one additional column encodes the output query $\yb$.

A preprocessing map converts \eqref{eq:transformer_input_tokens}
into the initial hidden representation
\begin{equation}
\label{eq:transformer_preprocessing}
\Zb_0
=
\mathcal P\bigl(S_{n_x}(u),\yb\bigr)
=
\Wb_E\Xb(u,\yb)
+
\bb_E\mathbf 1_P^\top
+
\Pb_{\rm pos}
\in\RR^{D\times P},
\end{equation}
where $\Wb_E$ and $\bb_E$ define a token-wise affine embedding and
$\Pb_{\rm pos}$ is a fixed structural-positional encoding. The explicit
choice of the preprocessing map used in the constructive approximation
proof is given in Appendix~\ref{app:transformer_realization}.

The representation is then propagated through $L$ encoder blocks. For $\ell\in[L]$, let $A_\ell$ denote a multi-head self-attention layer and $F_\ell$ a point-wise ReLU feedforward layer, and write
\begin{equation}
\label{eq:transformer_encoder_blocks}
\widehat{\Zb}_\ell = A_\ell(\Zb_{\ell-1}),
\qquad
\Zb_\ell = F_\ell(\widehat{\Zb}_\ell).
\end{equation}
The precise definitions of $A_\ell$ and $F_\ell$ are provided in
Appendix~\ref{app:transformer_architecture}. 
The scalar prediction is obtained through the linear readout
\begin{equation}
\label{eq:transformer_readout}
T_\Theta\bigl(S_{n_x}(u),\yb\bigr) = \mathbf c_{L+1}^{\top}\operatorname{vec}(\Zb_L).
\end{equation}

We next define the Transformer hypothesis class used in our analysis.

\begin{definition}[Transformer Network Class]
\label{def:transformer_class}
For depth $L\in\mathbb N$, embedding dimension $D\in\mathbb N$, sequence length $P\in\mathbb N$, layer configurations $\{H^\ell\}_{\ell=1}^L$, $\{d_k^\ell\}_{\ell=1}^L$, $\{d_v^\ell\}_{\ell=1}^L$, $\{d_{\rm ff}^\ell\}_{\ell=1}^L\subset\mathbb N$, and parameter magnitude bound $M_{\max}>0$, define
$$
\begin{aligned}
&\cT\Big(L, D, P, \{H^\ell\}_{\ell=1}^L, \{d_k^\ell\}_{\ell=1}^L, \{d_v^\ell\}_{\ell=1}^L, \{d_{\rm ff}^\ell\}_{\ell=1}^L, M_{\max}\Big)
\\
&\quad =
\left\{T_\Theta \ \middle|\
\begin{array}{l}
T_\Theta \text{ is an $L$-block encoder-only Transformer of the form \eqref{eq:transformer_preprocessing}--\eqref{eq:transformer_readout},} \\
\text{with embedding dimension $D$, sequence length $P$, and parameter bound } \|\Theta\|_\infty\le M_{\max}, \\
\text{and for each block $\ell\in[L]$, there are $H^\ell$ heads, query/key dimension $d_k^\ell$,} \\
\text{value dimension $d_v^\ell$, and FFN hidden width $d_{\rm ff}^\ell$}
\end{array}
\right\}.
\end{aligned}
$$
\end{definition}

\vspace{-0.2cm}
\section{Main Results}
\label{sec:main_results}
\vspace{-0.2cm}

We now present the main theoretical results. We first state the assumptions used throughout the analysis. Under these assumptions, we establish a quantitative approximation result for Transformer-based operator learning and derive the resulting model scaling law. We then establish a generalization bound for the empirical risk minimizer and derive the corresponding data scaling law. 

\vspace{-0.2cm}
\subsection{Assumptions}
\label{subsec:assumptions}
\vspace{-0.2cm}

To establish the approximation and generalization theory, we introduce the following assumptions on the operator, the input function class, and the output function class.

\begin{assumption}[Operator Continuity]
\label{assum:operator}
The operator $G:\cU\to\cV$ satisfies a $\gamma$-H\"older continuity condition in the $L^2(\Omega_{\cU})$ and $L^2(\rho_y)$ metrics. Namely, there exist constants $L_G>0$ and $\gamma\in(0,1]$ such that, for all $u_1,u_2\in\cU$,
\begin{equation}
\label{eq:operator_continuity}
\|G(u_1)-G(u_2)\|_{L^2(\rho_y)} \le L_G\|u_1-u_2\|_{L^2(\Omega_{\cU})}^{\gamma}.
\end{equation}
\end{assumption}
\begin{assumption}[Input Function Class]
\label{assum:input_space}
The input function class $\cU$ satisfies the following conditions.
\begin{enumerate}
    \item \textbf{Smoothness.}
    There exist constants $\alpha>0$ and $B_{\cU}>1$ such that
    \begin{equation}
    \label{eq:input_holder_bound}
    \sup_{u\in\cU}
    \|u\|_{C^\alpha(\Omega_{\cU})}
    \le B_{\cU},
    \end{equation}
    where the H\"older norm is defined in Definition~\ref{def:holder_norm} of Appendix~\ref{app:additional_definitions}.

    \item \textbf{Low-dimensional structure.}
    For every $r_u\in(0,1]$, there exists a  $r_u$-cover
    $\{u_k\}_{k=1}^{C_U}\subset\cU$ under the $L^2(\Omega_{\cU})$ norm, that is,
    $\sup_{u\in\cU}\min_{k\in[C_U]}\|u-u_k\|_{L^2(\Omega_{\cU})}\le r_u$. Moreover, the covering number satisfies
    \begin{equation}
    \label{eq:input_covering}
    C_U\le C_1r_u^{-d_{\cU}},
    \end{equation}
    for some constants $C_1\ge1$ and $d_{\cU}\ge0$.
\end{enumerate}
\end{assumption}

\begin{assumption}[Output Function Class]
\label{assum:output_space}
There exist constants $\beta \in (0,1]$ and $B_{\cV}>0$ such that
\begin{equation}
\label{eq:output_holder_bound}
\sup_{v\in\cV}
\|v\|_{C^\beta(\Omega_{\cV})}
\le B_{\cV}.
\end{equation}
\end{assumption}

\begin{remark}
The assumptions above separate the three sources of complexity in the analysis. Assumption~\ref{assum:operator} is a stability condition for the operator: nearby input functions in $L^2(\Omega_{\cU})$ have nearby outputs in $L^2(\rho_y)$. The $C^\alpha$ and $C^\beta$ regularity assumptions for input and output functions in Assumption~\ref{assum:input_space}-1 and Assumption~\ref{assum:output_space} are standard in operator learning theory \citep{lanthaler2022error,lanthaler2026parametric,kovachki2024operator,liu2024deep,liu2026scaling}. 

The covering condition in Assumption~\ref{assum:input_space}-2 captures the low-dimensional structure of $\cU$ under the $L^2(\Omega_{\cU})$ metric. By introducing Assumption~\ref{assum:input_space}-2, we can incorporate the function complexity into the theoretical framework. Roughly speaking, $d_{\cU}$ denotes the number of parameters in the input function class.  Such intrinsic low-dimensional structure is not merely a theoretical assumption, but arises naturally in many scientific applications where input functions are determined by a small number of physical parameters, such as parametrized initial or boundary conditions, source locations and amplitudes, material coefficients, geometric parameters, or low-dimensional latent representations of random fields.
\end{remark}

\vspace{-0.2cm}
\subsection{Approximation theory and model scaling law}
\label{subsec:transformer_approximation}
\vspace{-0.2cm}

We first establish a quantitative approximation bound for the Transformer class  in Definition~\ref{def:transformer_class}.

\begin{theorem}[Transformer Approximation for Operator Learning]
\label{thm:transformer_operator_approx}
In Setting~\ref{setting:data}, suppose Assumptions~\ref{assum:operator}, \ref{assum:input_space},
\ref{assum:output_space} hold and $d_{\cU}/\gamma+d_2/\beta \ge d_1/\alpha$. For the input discretization, let $X_{n_x}$ be a sampling grid
constructed from tensor-product Gauss--Legendre points on a uniform partition of $\Omega_{\cU}=[0,1]^{d_1}$. Let $\epsilon_0\in(0,e^{-1}]$ be a structural constant. For any $\epsilon\in(0,\epsilon_0)$, suppose the sequence length $P=C_UC_V+1$ satisfies the compatibility condition $P\ge n_x+2$. Then there exists a Transformer network
$$
T_\Theta
\in
\cT\Big(
L,D,P,\{H^\ell\}_{\ell=1}^L,
\{d_k^\ell\}_{\ell=1}^L,
\{d_v^\ell\}_{\ell=1}^L,
\{d_{\rm ff}^\ell\}_{\ell=1}^L,
M_{\max}
\Big)
$$
such that
\begin{equation}
\label{eq:transformer_approximation_error}
\sup_{u\in\cU}
\left\|
T_\Theta(S_{n_x}(u),\cdot)-G(u)
\right\|_{L^2(\rho_y)}
\le
\epsilon.
\end{equation}
The network can be chosen with the following structural parameters:
\begin{itemize}
\item To control the input-discretization error, choose $n_x=\left\lceil \left({16B_{\cV}M_uC_{\rm quad}}/{\epsilon}\right)^{d_1/\alpha} \right\rceil$.

    \item $L=2$, $D=d_1+d_2+n_x+9$, and the sequence length satisfies
    $P\le C_P\epsilon^{-d_{\cU}/\gamma-d_2/\beta}$.    

    \item In block $\ell=1$, $H^1=(n_x+3)P+2$, $d_k^1=5$, $d_v^1=2$, and
    $d_{\rm ff}^1=2D$.

    \item In block $\ell=2$, $H^2=1$, $d_k^2=d_v^2=1$, and
    $d_{\rm ff}^2=2D$.
\end{itemize}
Here $M_u$ is defined in \eqref{eq:Mu_definition} of
Appendix~\ref{app:oracle_pou}, $C_P$ is the positive constant defined in
\eqref{eq:CP_definition} of Appendix~\ref{app:proof:transformer_operator_approx}, and $C_{\rm quad}$ is the quadrature constant in Lemma~\ref{lemma:quadrature} of Appendix~\ref{app:oracle_pou}. The maximum parameter magnitude $M_{\max}:= \|\Theta\|_\infty$ and the total number of network parameters $N_{\rm total}$ satisfy
\begin{equation}
\label{eq:transformer_parameter_magnitude}
M_{\max} \le C_{\rm mag}\epsilon^{-q_M}\left(\log{\epsilon^{-1}}\right)^{1+d_1/\alpha},
\end{equation}
\begin{equation}
\label{eq:transformer_parameter_count}
N_{\rm total} \le C_N \epsilon^{-s}\left(\log{\epsilon^{-1}}\right)^{2d_1/\alpha},
\end{equation}
where
\begin{equation}
\label{eq:complexity_exponent}
s := \frac{d_{\cU}}{\gamma} + \frac{d_2}{\beta} + \frac{2d_1}{\alpha}\left(1+\frac{2}{\gamma}\right)
\end{equation}
and
$$
q_M := \max\left\{2\left(
\frac{d_{\cU}}{\gamma} + \frac{d_2}{\beta}\right), \frac{d_1}{\alpha} \left(1+\frac{2}{\gamma}\right) + \frac{2}{\gamma}, \frac{d_1}{\alpha} \left(1+\frac{2}{\gamma}\right) + \frac{2}{\beta} \right\}.
$$
The positive constants $C_{\rm mag}$ and $C_N$ are defined in
\eqref{eq:Cmag_definition} and \eqref{eq:CN_definition}
in Appendix~\ref{app:proof:transformer_operator_approx}.
\end{theorem}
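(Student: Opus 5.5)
The plan is a two-level softmax partition-of-unity approximation. We first build an oracle approximant at the level of functions, then realize it exactly, up to controlled softmax leakage, with the two-block Transformer.

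\textbf{Step 1 (input discretization).} Fix $r_u$ and take the $r_u$-cover $\{u_k\}_{k=1}^{C_U}$ from Assumption~\ref{assum:input_space}-2. We need $\|u-u_k\|_{L^2(\Omega_{\cU})}^2$ computable from $S_{n_x}(u)$. Tensor-product Gauss--Legendre quadrature on a uniform partition gives
\[
\bigl|\textstyle\sum_{m} w_m (u-u_k)^2(\xb_m)-\|u-u_k\|_{L^2}^2\bigr|\le C_{\rm quad} M_u n_x^{-\alpha/d_1}
\]
by the $C^\alpha$ bound, via Lemma~\ref{lemma:quadrature}. The choice of $n_x$ makes this error at most of order $\epsilon/B_{\cV}$.

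\textbf{Step 2 (oracle POU).} Similarly fix $r_v$ and a grid cover $\{\yb_l\}_{l=1}^{C_V}$ of $\Omega_{\cV}$ with $C_V\lesssim r_v^{-d_2}$. Define the weights
\[
\phi_k(u)\propto \exp\bigl(-\lambda_u \widehat{\|u-u_k\|}^2\bigr),\qquad \psi_l(\yb)\propto\exp\bigl(-\lambda_y\|\yb-\yb_l\|^2\bigr),
\]
and the approximant
\[
\widetilde G(u)(\yb)=\sum_{k,l}\phi_k(u)\psi_l(\yb)\,G(u_k)(\yb_l).
\]
The error splits into three parts.
\begin{itemize}
\item A local part, bounded by $L_G(2r_u)^\gamma+B_{\cV}(2r_v)^\beta$ from Assumption~\ref{assum:operator} and Assumption~\ref{assum:output_space}. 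Here the $L^2(\rho_y)$ Hölder bound is used together with convexity of the square.
\item A softmax tail part: weight on far centers is at most $C_U e^{-\lambda_u r_u^2}$ times $B_{\cV}$.
\item A quadrature-perturbation part.
\end{itemize}
We choose $r_u\sim\epsilon^{1/\gamma}$ and $r_v\sim\epsilon^{1/\beta}$, so that $C_UC_V\lesssim\epsilon^{-d_{\cU}/\gamma-d_2/\beta}$, which gives the bound on $P$. We take the temperatures $\lambda_u\sim r_u^{-2}\log\epsilon^{-1}$ and similarly $\lambda_y$. These temperatures drive $M_{\max}$: $\lambda_u\sim\epsilon^{-2/\gamma}\log$ times quadrature weights and $n_x$ factors, and $\lambda_y\sim\epsilon^{-2/\beta}$. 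This matches the terms of $q_M$.

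\textbf{Step 3 (Transformer realization).} The first attention block has many heads, about $(n_x+3)P$. They copy each function value $u(\xb_m)$ into a dedicated coordinate of every token, which explains $D=d_1+d_2+n_x+9$. Each of the $P-1=C_UC_V$ pair-tokens stores, through the positional encoding $\Pb_{\rm pos}$, the data $u_k(\xb_m)$, $w_m$, $\yb_l$ and $G(u_k)(\yb_l)$. The following FFN then assembles linearly the logit
\[
-\lambda_u\sum_m w_m\bigl(u(\xb_m)^2-2u(\xb_m)u_k(\xb_m)+u_k(\xb_m)^2\bigr)-\lambda_y\|\yb-\yb_l\|^2 .
\]
The quadratic terms in $u$ and $\yb$ are common to all tokens, so they cancel in the softmax. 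The remaining terms are bilinear between the token data and the broadcast query or function. Hence they are realizable as query–key products with $d_k=5$, and only exact linear feature assembly is needed in the FFN. The second block is a single head with value $G(u_k)(\yb_l)$. Its softmax over the pair-tokens produces exactly $\sum\phi_k\psi_l G(u_k)(\yb_l)$, since the product of the two exponentials factorizes. The readout $\mathbf c_3$ extracts this value.

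\textbf{Step 4 (counting).} The parameter count is dominated by heads times $D^2$ plus the positional encoding, roughly $P\,n_x^2\sim\epsilon^{-d_{\cU}/\gamma-d_2/\beta}\epsilon^{-2d_1/\alpha}$ up to logarithms. The extra factor $(1+2/\gamma)$ arises because the quadrature accuracy must be at level $\epsilon^{1+2/\gamma}$. The reason is that the temperature $\lambda_u\sim\epsilon^{-2/\gamma}$ amplifies the quadrature error inside the exponent, so one must take $n_x\sim\epsilon^{-(d_1/\alpha)(1+2/\gamma)}$ in the count; I would recheck this against the stated $n_x$.

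\textbf{Main obstacle.} The main difficulty is this interplay in Step 2. The sharp softmax temperature needed for locality magnifies discretization errors in the exponent. The bounds must therefore track $\lambda_u\cdot C_{\rm quad}M_u n_x^{-\alpha/d_1}\lesssim 1$, and simultaneously keep the attention-only realization exact with bounded magnitudes.
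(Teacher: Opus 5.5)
\textbf{Steps 1, 2 and 4 match the paper; Step 3 has a genuine gap.} Your oracle is the paper's. The distance-form weights coincide with the affine-logit weights once the common $\|u\|^2$ and $\|\yb\|^2$ terms cancel, and $\lambda_u\sim r_u^{-2}\log(1/\epsilon)$ is $M_u$. Step 4 correctly traces the factor $(1+2/\gamma)$ to $M_u\sim\epsilon^{-2/\gamma}$ amplifying the quadrature error. Your ``main obstacle'' inequality double-counts the temperature: the condition the stated $n_x$ enforces is $M_uC_{\rm quad}n_x^{-\alpha/d_1}\le\epsilon/(16B_{\cV})$.

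Step 3 is where the theorem's architecture is actually established, and the mechanism you describe does not work within it.
\begin{itemize}
\item The point-wise ReLU FFN cannot form $u(\xb_m)u_k(\xb_m)$ or $\yb^\top\yb_l$ from two coordinates of the same token. So the logit is not a ``linear feature assembly'' of what you store.
\item Calling these terms ``query--key products'' does not rescue it. Scores only enter the softmax, and a head's output is a convex combination of value vectors, so a score is never written into the hidden state. Also, a score $\mathbf a^\top(\Kb^h)^\top\Qb^h\mathbf b$ is a bilinear form of rank at most $d_k$. It cannot equal $\sum_m w_ma_mb_m$ over the many positively weighted nodes when $d_k^1=5$ or $d_k^2=1$.
\item Storing all $u_k(\xb_m)$ and $\yb_l$ in each token while also broadcasting all $u(\xb_m)$ needs about $2n_x+d_2$ rows, more than $D=d_1+d_2+n_x+9$.
\item The blocks have no residual connection, so anything placed in $\Pb_{\rm pos}$ is lost after $A_1$ unless a head copies it.
\item With $P=C_UC_V+1$ you must also explain how the input, query and leftover tokens coexist with the pair-tokens in block~2's softmax.
\item Broadcasting $n_x$ values takes only about $n_x$ heads, so your mechanism does not account for $H^1=(n_x+3)P+2$.
\end{itemize}

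\textbf{The missing idea is to put the anchor-dependent coefficient into the value matrix of a dedicated head, not into the token.} For each feature row $r\in[n_x+3]$ and target column $j=(k,l)$ there is one head. Its value is, e.g., $2M_uw_ru_k(\xb_r)\mathbf e_{d_1+d_2+1}^\top$, and analogously for $Q_l(\yb)$, $N_k$ and $A_{k,l}$. Its query and key use the sinusoidal positional rows with temperature of order $M/c_P$. Column $j$ then attends to the single source column holding $u(\xb_r)$ (resp.\ $\yb$, or an entry equal to $1$), while every other column attends to a null column of value zero. Summing the $P$ heads of row $r$ through $\Wb_1^O$ gives $B_{r,k}(u)$ in column $j$ up to leakage $2P^2B_{\rm aff}e^{-M}$.

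The rest of the paper's construction is then simple. Two extra heads carry the constant, indicator and positional rows through $A_1$. $F_1$ only sums rows and subtracts an offset $M_{\rm out}$ in the null column. Block~2 is one head with query the all-ones row, key the logit row, and value the $A_{k,l}$ row.

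This routing is what produces the quantities in the statement:
\begin{itemize}
\item the stated $H^1$;
\item the count $N_{\rm total}\le C_{\rm arch}n_x^2P$, since each head has $14D$ parameters, not order $D^2$;
\item the term $2(d_{\cU}/\gamma+d_2/\beta)$ in $q_M$, which comes from $M/c_P$ with $1/c_P\le P^2/8$.
\end{itemize}
The temperatures alone do not produce that last term, so your claim that Step 2 ``matches the terms of $q_M$'' is unsupported.
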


The proof of Theorem~\ref{thm:transformer_operator_approx} is given in
Appendix~\ref{app:proof:transformer_operator_approx}.
The theorem has the following implications.

\begin{itemize}

\item \textbf{Model scaling law.}
Theorem~\ref{thm:transformer_operator_approx} quantifies how the
approximation accuracy improves as the Transformer model size increases. A model of size $N_{\rm total}$ can achieve an approximation error in the order of $N_{\rm total}^{-1/s}$ up to logarithmic factors.  %Hence, $1/s$ serves as the model scaling exponent. 
The scaling exponent $-1/s$ explicitly reflects the effect of the intrinsic input dimension $d_{\cU}$, and the function domain dimensions $d_1,d_2$, and the function and operator regularity. By leveraging the input intrinsic dimension $d_{\cU}$, Theorem \ref{thm:transformer_operator_approx} yields a power model scaling law. In contrast, without the exploration of the input intrinsic dimension, the model scaling law is proved in the order of $(\log N_{\rm total})^{-{\rm power}}$ for feedforward-type operator learning with   \citep{kovachki2024operator,liu2026scaling,lanthaler2026parametric}. For band-limited input functions, the rate of convergence can be further improved in Appendix~\ref{app:bandlimited}.

\item \textbf{First quantitative approximation result for transformer operator network.} Existing approximation theories for transformer-based operator learning establish universal approximation properties
\citep{calvello2025continuum,shih2025transformers,zappala2026universal,furuya2026function}, but do not explicitly quantify the relation between approximation accuracy and Transformer model size. To the best of our knowledge, Theorem~\ref{thm:transformer_operator_approx} is the first quantitative approximation result for transformer-based operator learning that explicitly characterizes the scaling between the
approximation error and the Transformer model size.

\end{itemize}

\vspace{-0.2cm}
\subsection{Generalization theory and data scaling law}
\label{subsec:generalization_bound}
\vspace{-0.2cm}

We next turn from approximation to statistical learning. Our goal is to bound the expected population risk $\EE_{\cS}\cL(\widehat T)$ in \eqref{eq:statistical_objective}. To uniformly control the prediction error relative to the bounded target operator, we use a clipped Transformer class.

For the output bound $B_{\cV}>0$ in Assumption~\ref{assum:output_space},
define the clipping map $\Pi_{B_{\cV}}(t) = \min\{\max\{t,-B_{\cV}\},B_{\cV}\}$. For a fixed approximation tolerance $\epsilon$, let $\cT_\epsilon$ denote the Transformer class specified in Theorem~\ref{thm:transformer_operator_approx}, and define the corresponding clipped hypothesis class by
$
\cH_\epsilon^{\rm clip} := \left\{\Pi_{B_{\cV}}\circ T: T\in\cT_\epsilon \right\}.
$
We then define the clipped empirical risk minimizer (ERM) by 
\begin{equation}
\widehat T_\epsilon \in \argmin_{T\in\cH_\epsilon^{\rm clip}} \widehat{\cL}_{\cS}(T).
\label{eq:minimizerclip}
\end{equation}

\begin{theorem}[Generalization Error of the Clipped Transformer ERM]
\label{thm:generalization_bound}
In Setting~\ref{setting:data}, suppose
Assumptions~\ref{assum:operator}, \ref{assum:input_space},
\ref{assum:output_space} hold and  $d_{\cU}/\gamma+d_2/\beta \ge d_1/\alpha$. Let $n,n_y\ge2$, and use the input discretization $X_{n_x}$ in Theorem~\ref{thm:transformer_operator_approx}. For any $\epsilon\in(0,\epsilon_0)$, suppose the sequence-length compatibility condition $P=C_UC_V+1\ge n_x+2$ is satisfied. Then the clipped ERM
$\widehat T_\epsilon$ in \eqref{eq:minimizerclip} satisfies
\begin{equation}
\label{eq:generalization_epsilon_bound}
\EE_{\cS}\cL(\widehat T_\epsilon)
\le
C_{\rm gen}
\left[
\epsilon^2
+
\left(
1+\frac{\sigma^2}{n_y}
\right)
\frac{
\epsilon^{-s}
\left(\log\frac{1}{\epsilon}\right)^{\frac{2d_1}{\alpha}+1}
}{n}
\right],
\end{equation}
where $s$ is defined in \eqref{eq:complexity_exponent}, and the constant $C_{\rm gen}>0$ is defined in \eqref{eq:Cgen_definition} of Appendix~\ref{app:proof:generalization_bound}, which depends only on the structural quantities in the assumptions and $\sigma$, but not on $\epsilon$, $n$, or $n_y$. 

Moreover, choose
$$
\epsilon = \left[\frac{1+\sigma^2/n_y}{n}(\log n)^{\frac{2d_1}{\alpha}+1}\right]^{\frac{1}{2+s}}.
$$
Provided that this choice lies in $(0,\epsilon_0)$ and satisfies the preceding sequence-length compatibility condition, we have
\begin{equation}
\label{eq:genscaling}
\EE_{\cS}\cL(\widehat T_{\epsilon})
\le
C_{\rm rate}\left[\frac{1+\sigma^2/n_y}{n}(\log n)^{\frac{2d_1}{\alpha}+1}\right]^{\frac{2}{2+s}},
\end{equation}
where the constant $C_{\rm rate}:=2C_{\rm gen}$.

\end{theorem}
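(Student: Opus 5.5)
The proof follows the standard bias--variance decomposition for least-squares ERM over a clipped class, in the style of Györfi et al. and of the operator-learning analyses in \citet{liu2024deep,liu2026scaling}. The new ingredients are the sample structure, $n$ i.i.d.\ functions each carrying $n_y$ i.i.d.\ queries, and the covering number of the Transformer class $\cT_\epsilon$.

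\textbf{Step 1: risk decomposition.} We split
$$\EE_{\cS}\cL(\widehat T_\epsilon)=\EE_{\cS}\bigl[\cL(\widehat T_\epsilon)-2\widehat{\cL}'_{\cS}(\widehat T_\epsilon)\bigr]+2\EE_{\cS}\widehat{\cL}'_{\cS}(\widehat T_\epsilon),$$
where $\widehat{\cL}'_{\cS}(T)=\frac{1}{nn_y}\sum_{i,j}(T(S_{n_x}(u_i),\yb_{i,j})-G(u_i)(\yb_{i,j}))^2$ is the noiseless empirical risk.

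For the second term we use the ERM property against the network $T^*$ of Theorem~\ref{thm:transformer_operator_approx}. Clipping $\Pi_{B_{\cV}}T^*$ does not increase its error, since $|G(u)|\le B_{\cV}$. Expanding $v_{i,j}=G(u_i)(\yb_{i,j})+\xi_{i,j}$ gives
$$\EE\widehat{\cL}'_{\cS}(\widehat T_\epsilon)\le \epsilon^2+\frac{2}{nn_y}\EE\sum_{i,j}\xi_{i,j}\bigl(\widehat T_\epsilon-\Pi T^*\bigr)(S_{n_x}(u_i),\yb_{i,j}).$$
The noise cross term is bounded by discretizing $\cH^{\rm clip}_\epsilon$ with a $\delta$-cover in sup norm and applying a sub-Gaussian maximal inequality over the cover. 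This yields a term of order
$$\sqrt{\frac{\sigma^2\log\mathcal N(\delta)}{nn_y}}\Bigl(\sqrt{\EE\widehat{\cL}'_{\cS}(\widehat T_\epsilon)}+\epsilon+\delta\Bigr)+\sigma\delta,$$
which we absorb by the inequality $2ab\le a^2/2+2b^2$. The result is a contribution of order $\frac{\sigma^2\log\mathcal N}{nn_y}$.

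\textbf{Step 2: first term (uniform deviation).} Here the sample is not fully i.i.d. The $n$ blocks $(u_i,\{\yb_{i,j}\}_j)$ are i.i.d., so we treat $g_T(u_i)=\frac{1}{n_y}\sum_j(T-G(u_i))^2(\yb_{i,j})$ as a single bounded variable with $|g_T|\le4B_{\cV}^2$ and mean $\cL(T)$. Its variance is at most $4B_{\cV}^2\cL(T)$. A Bernstein inequality over a sup-norm $\delta$-cover, followed by integration of the tail, gives
$$\EE\bigl[\cL-2\widehat{\cL}'\bigr]\lesssim \frac{B_{\cV}^2\log\mathcal N(\delta)}{n}+\delta.$$
This is why the noiseless part scales as $1/n$ with no $n_y$ gain, while the noise part carries $\sigma^2/n_y$; together they give the factor $1+\sigma^2/n_y$.

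\textbf{Step 3: covering number of $\cT_\epsilon$.} This is the main obstacle. We need
$$\log\mathcal N(\delta,\cH^{\rm clip}_\epsilon,\|\cdot\|_\infty)\lesssim N_{\rm total}\log\bigl(M_{\max}\,\mathrm{Lip}/\delta\bigr).$$
We bound the Lipschitz dependence of $T_\Theta$ on $\Theta$ block by block on the bounded input set; the inputs satisfy $|u|\le B_{\cU}$ and $\xb,\yb\in[0,1]$. Softmax attention is 1-Lipschitz in its logits in $\ell_1$/$\ell_\infty$, so each attention layer has parameter-Lipschitz constant polynomial in $D,H,d_k,M_{\max}$ and the size of the hidden state. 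The ReLU FFN layers are handled the same way. With $L=2$, the overall constant is $\mathrm{poly}(P,n_x,M_{\max})$, and by Theorem~\ref{thm:transformer_operator_approx} this is $\mathrm{poly}(\epsilon^{-1})$. The logarithm therefore contributes only a factor $\log(1/\epsilon)$ once $\delta=1/n$ is taken, since $n$ is itself polynomial in $\epsilon^{-1}$ at the final choice. Inserting $N_{\rm total}\le C_N\epsilon^{-s}(\log\epsilon^{-1})^{2d_1/\alpha}$ then gives $\log\mathcal N\lesssim\epsilon^{-s}(\log\epsilon^{-1})^{2d_1/\alpha+1}$. The technical difficulty is tracking how the hidden-state magnitude propagates through the exponentially large softmax logits; we will use the fact that softmax outputs are convex weights to keep the hidden states bounded by $\mathrm{poly}(M_{\max},D)$.

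\textbf{Step 4: balancing.} Combining Steps 1--3 gives \eqref{eq:generalization_epsilon_bound}. Substituting the stated $\epsilon$ makes $\epsilon^2$ equal to the variance term up to a $\log$ ratio. Indeed $\log(1/\epsilon)\le\log n$ for $n\ge2$ because $\epsilon\ge n^{-1/(2+s)}$, so both terms are at most
$$\Bigl[\tfrac{1+\sigma^2/n_y}{n}(\log n)^{2d_1/\alpha+1}\Bigr]^{2/(2+s)},$$
which yields \eqref{eq:genscaling} with $C_{\rm rate}=2C_{\rm gen}$.
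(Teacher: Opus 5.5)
Your proposal follows the paper's proof essentially step for step. The paper uses the same split $\EE_{\cS}\cL(\widehat T_\epsilon)=T_1+T_2$ via the noiseless empirical risk, and the same ERM comparison against the clipped approximant from Theorem~\ref{thm:transformer_operator_approx}. It controls the noise cross term with a sub-Gaussian maximal inequality over a sup-norm cover, imported as Lemma~3 of \citet{liu2026scaling}, and then solves the same quadratic inequality. For $T_2$ it uses a Bernstein-type bound over the $n$ i.i.d.\ function blocks (Lemma~4 there). The covering bound is the parameter-Lipschitz estimate of Lemma~\ref{lemma:transformer_covering}, followed by the same balancing.

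One step does not go through as written: the discretization scale $\delta=1/n$.
\begin{itemize}
\item \textbf{The $\log n$ term.} This choice puts $\log n$ inside $\log\mathcal N(\delta)$. Your reason for dropping it, that $n$ is polynomial in $\epsilon^{-1}$ at the final choice, only covers \eqref{eq:genscaling}. The first bound \eqref{eq:generalization_epsilon_bound} is claimed for arbitrary $\epsilon\in(0,\epsilon_0)$ and $n\ge2$, with a constant independent of $n$, and there $n$ and $\epsilon$ are unrelated.
\item \textbf{The paper's fix.} The paper takes $\theta=\epsilon^2/(1+\sigma)$. Then $\log(1/\theta)\lesssim\log\frac1\epsilon$, so $\log\mathcal N\lesssim\epsilon^{-s}(\log\frac1\epsilon)^{2d_1/\alpha+1}$ with no $n$-dependence. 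The discretization terms $(8\sigma+6)\theta$ and $8\theta$ are then absorbed into $\epsilon^2$.
\item \textbf{A fix keeping $\delta=1/n$.} Split into two cases. If $n\le\epsilon^{-(s+3)}$, then $\log n\le(s+3)\log\frac1\epsilon$. If $n>\epsilon^{-(s+3)}$, monotonicity of $\log n/n$ bounds the extra term by a constant times $\epsilon^{3}(\log\frac1\epsilon)^{2d_1/\alpha+1}\lesssim\epsilon^2$.
\end{itemize}

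There is also a smaller slip in Step~4. The inequality $\epsilon\ge n^{-1/(2+s)}$ fails at $n=2$ whenever $(1+\sigma^2/n_y)(\log 2)^{2d_1/\alpha+1}<1$, for example when $\sigma=0$. What you actually need is $\epsilon\ge n^{-1}$, which gives $\log\frac1\epsilon\le\log n$. This holds for all $n\ge2$ because $s\ge 2d_1/\alpha$, which is exactly how the paper argues.
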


The proof of Theorem~\ref{thm:generalization_bound} is given in
Appendix~\ref{app:proof:generalization_bound}. The theorem has the following implications.

\begin{itemize}
\item \textbf{Data scaling law.}
Theorem~\ref{thm:generalization_bound} quantifies a power scaling law between the generalization/prediction error and the training data size $n$ in \eqref{eq:genscaling}. The mean squared generalization error is in the order of  $n^{-\frac{2}{2+s}}$ up to logarithmic factors. The scaling exponent $-2/(2+s)$ characterizes the effect from the input intrinsic dimension $d_{\cU}$, and the function domain dimensions $d_1,d_2$, and the function/operator regularity. The output query size $n_y$  helps to reduce the noise contribution.
A large $n_y$ allows one to reduce the noise effect, since the upper bound in \eqref{eq:genscaling} depends on $\sigma^2/{n_y}$.

\item \textbf{First quantitative generalization result for transformer operator network.}
Empirical experiments on transformer-based operator learning have demonstrated systematic performance improvements as the training data size increases \citep{li2022transformer,hao2023gnot}, but a theoretical characterization of this data dependence remains underdeveloped. To the best of our knowledge, Theorem~\ref{thm:generalization_bound} provides the first quantitative generalization result that explicitly characterizes the data scaling law for transformer-based operator learning.

\end{itemize}

\vspace{-0.2cm}
\section{Proof Sketch}
\label{sec:proof_sketch}
\vspace{-0.2cm}

\subsection{Proof sketch of Theorem~\ref{thm:transformer_operator_approx}.}
\vspace{-0.2cm}

The proof consists of two main steps, corresponding to Appendices~\ref{app:oracle_pou} and~\ref{app:transformer_realization}.
First, building on the two-level Softmax partition-of-unity (POU) framework for in-context learning introduced by \citet{shi2026transformers}, we construct an oracle approximation of the target operator: one Softmax POU localizes the input function over the input function space, while the other localizes the output query over the output domain. Second, we show that this oracle approximation can be explicitly realized by a two-block Transformer, where the attention mechanism implements the localization and aggregation induced by the two-level POU constructions.

$\bullet$ {\bf Step 1: Oracle approximation via two-level Softmax POU
(Appendix~\ref{app:oracle_pou}).
}
We first construct an oracle approximation of the target operator independently of the Transformer architecture. The construction is illustrated in
Figure~\ref{fig:two_level_pou}. Let $\{u_k\}_{k=1}^{C_U}\subset\cU$ be an $r_u$-cover of the input function class and let $\{\zb_l\}_{l=1}^{C_V}\subset\Omega_{\cV}$ be an $r_y$-cover of the output domain. We use the localization parameters $M_u$ and $M_y$ defined in
\eqref{eq:Mu_definition} and \eqref{eq:My_definition} of Appendix~\ref{app:oracle_pou}. The input-space Softmax POU is constructed from the discretized inner
products
\begin{equation}
\label{eq:discrete_input_space_weight}
\textstyle
\widetilde\beta_{k,n_x}(u)
:=
\frac{
\exp\left(
2M_u\langle u,u_k\rangle_{n_x}
-
M_u\|u_k\|_{L^2(\Omega_{\cU})}^2
\right)
}{
\sum_{k'=1}^{C_U}
\exp\left(
2M_u\langle u,u_{k'}\rangle_{n_x}
-
M_u\|u_{k'}\|_{L^2(\Omega_{\cU})}^2
\right)
},
\end{equation}
while the output-domain Softmax POU is
\begin{equation}
\label{eq:eta_l}
\textstyle
\widetilde\eta_l(\yb)
:=
\frac{
\exp\left(
2M_y\yb^\top\zb_l-M_y\|\zb_l\|_2^2
\right)
}{
\sum_{l'=1}^{C_V}
\exp\left(
2M_y\yb^\top\zb_{l'}-M_y\|\zb_{l'}\|_2^2
\right)
}.
\end{equation}
Combining the two partitions gives the oracle approximation
\begin{equation}
\label{eq:discrete_two_level_pou}
\textstyle
\widetilde G_{r_u,r_y,n_x}(u)(\yb)
=
\sum_{k=1}^{C_U}
\sum_{l=1}^{C_V}
G(u_k)(\zb_l)
\widetilde\beta_{k,n_x}(u)
\widetilde\eta_l(\yb).
\end{equation}

\begin{figure}[t]
    \centering
    \includegraphics[width=0.9\linewidth]{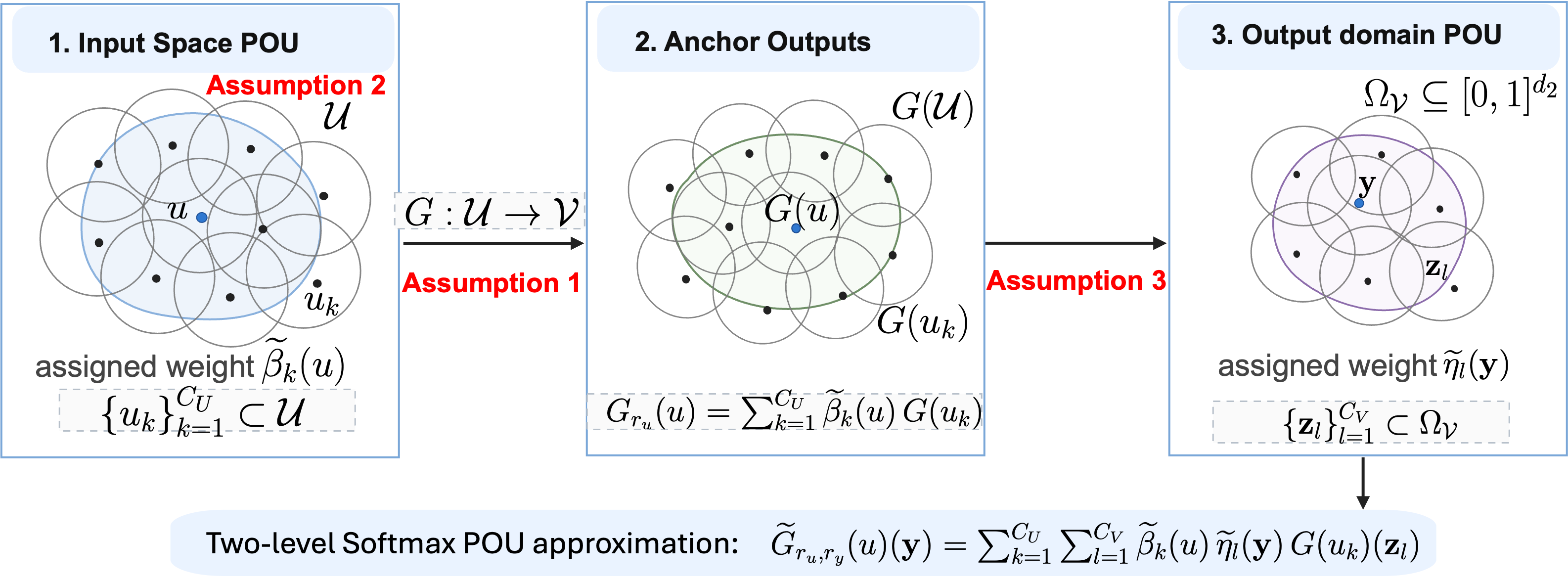}
    \caption{
    Illustration of the two-level Softmax POU approximation.
    The input-space POU localizes $u$ over function-space anchors
    $\{u_k\}_{k=1}^{C_U}$, while the output-domain POU localizes the
    query $\yb$ over anchors $\{\zb_l\}_{l=1}^{C_V}$.
    Their combination yields the oracle approximation in
    \eqref{eq:discrete_two_level_pou}.
    }
    \label{fig:two_level_pou}
\end{figure}

\textbf{Error control for the oracle construction
(Lemmas~\ref{lemma:input_space_pou}--\ref{lemma:total_approximation_error}
in Appendix~\ref{app:oracle_pou}).}
The supporting lemmas quantify the input-space localization, output-domain localization, and input-discretization errors. Together, they show that the oracle approximation converges to the target operator as $r_u$, $r_y$, and the input discretization are refined.

$\bullet$ {
\bf Step 2: Transformer realization of the oracle approximation
(Appendix~\ref{app:transformer_realization}).
}
We next show that the oracle approximation \eqref{eq:discrete_two_level_pou} can be realized by a two-block Transformer  in Section~\ref{subsec:transformer_architecture}. 

\textbf{Pair-token preprocessing (Lemma~\ref{lemma:pair_token_preprocess} in
Appendix~\ref{app:transformer_realization}).}
The preprocessing map in \eqref{eq:transformer_preprocessing} is chosen
so that the first $P-1=C_UC_V$ active tokens are indexed by the
input-output anchor pairs $(u_k,\zb_l)\in\{u_k\}_{k=1}^{C_U}\times\{\zb_l\}_{l=1}^{C_V}$.

\textbf{Linear feature extraction and joint-logit assembly
(Lemmas~\ref{lemma:pair_token_affine_mha} and
\ref{lemma:pair_token_joint_logit_ffn} in
Appendix~\ref{app:transformer_realization}).}
The first attention layer $A_1$ computes in parallel the linear features
appearing in the input- and output-localization logits in
\eqref{eq:discrete_input_space_weight} and \eqref{eq:eta_l}, together
with the corresponding anchor values $G(u_k)(\zb_l)$.
The point-wise feedforward layer $F_1$ assembles these features into the
joint localization logit $2M_u\langle u,u_k\rangle_{n_x}-M_u\|u_k\|_{L^2}^2
+2M_y\yb^\top\zb_l-M_y\|\zb_l\|_2^2$ for each anchor pair $(k,l)$.

\textbf{Softmax aggregation
(Lemma~\ref{lemma:pair_token_softmax_aggregation} in
Appendix~\ref{app:transformer_realization}).}
The second attention layer $A_2$ applies Softmax to the joint
localization logits. Since each joint logit is the sum of the
input-space and output-domain logits, the resulting normalized weight
factorizes as $\widetilde\beta_{k,n_x}(u)\widetilde\eta_l(\yb)$. Using $G(u_k)(\zb_l)$ as the corresponding values, $A_2$ therefore performs exactly the two-level aggregation in \eqref{eq:discrete_two_level_pou}. The second feedforward layer $F_2$ is chosen as the identity, and the linear readout in \eqref{eq:transformer_readout} extracts the scalar output.

The supporting lemmas in Appendix~\ref{app:transformer_realization}
show that this two-block construction realizes the oracle approximation
to the accuracy required in Theorem~\ref{thm:transformer_operator_approx}.
The full proof in Appendix~\ref{app:proof:transformer_operator_approx} combines this realization with the oracle approximation from Step~1 to
yield the approximation bound \eqref{eq:transformer_approximation_error}
and, by tracking the resulting architecture size, the model scaling law
\eqref{eq:transformer_parameter_count}.

\subsection{Proof sketch of Theorem~\ref{thm:generalization_bound}.}
\vspace{-0.2cm}

The proof consists of two main steps, corresponding to
Appendices~\ref{app:covering_number} and \ref{app:proof:generalization_bound}. First, we control the variance term, which measures the statistical estimation error of the empirical risk minimizer, through a covering-number bound for the clipped Transformer hypothesis class. Second, the approximation result in Theorem~\ref{thm:transformer_operator_approx} controls the bias term by showing that the Transformer class can approximate the target operator with squared error of order $\epsilon^2$. The proof of Theorem~\ref{thm:generalization_bound} combines these two bounds and balances them to obtain the resulting data scaling law.

\textbf{Covering-number control of the variance
(Lemma~\ref{lemma:transformer_covering} in
Appendix~\ref{app:covering_number}).}
Using the covering number defined in
Definition~\ref{def:covering_number} of
Appendix~\ref{app:additional_definitions},
Lemma~\ref{lemma:transformer_covering} gives
$$
\log\mathcal N
\left(
\eta,\cH_\epsilon^{\rm clip},
\|\cdot\|_{\infty,\infty}
\right)
\le
N_{\rm total}
\log\left(
{
503552P^7D^{17}M_{\max}^{21}B_{\cU}^5
}/{\eta}
\right).
$$

\textbf{Bias--variance tradeoff
(Appendix~\ref{app:proof:generalization_bound}).}
The approximation bound \eqref{eq:transformer_approximation_error}
controls the bias term, which is of order $\epsilon^2$. The covering-number estimate above controls the variance term through the empirical risk minimization property and the sub-Gaussian noise assumption. Combining these two gives \eqref{eq:generalization_epsilon_bound}. Balancing the two terms by the choice of $\epsilon$ in Theorem~\ref{thm:generalization_bound} yields the data scaling law \eqref{eq:genscaling}.

\vspace{-0.2cm}
\section{Experiments}
\label{sec:experiments}
\vspace{-0.2cm}

We use two nonlinear PDE solution operators to test a central implication of Theorem~\ref{thm:generalization_bound}: whether increasing the intrinsic dimension $d_{\cU}$ of the input function class is associated with slower empirical data scaling. We consider the one-dimensional viscous Burgers' equation $u_t+uu_x=\nu u_{xx}$ and the Korteweg--de Vries (KdV) equation $u_t+6uu_x+u_{xxx}=0$, representing dissipative and dispersive dynamics, respectively, and learn the final-time solution operator $G:u_0(\cdot)\mapsto u(\cdot,T)$.

For Burgers', we set $\nu=0.02$ and $T=0.05$ and consider parameterized compactly supported bump families with $d_{\cU}\in\{2,8\}$. For KdV, we use exact multi-soliton families on $x\in[0,44]$ at $T=1$ with $d_{\cU}\in\{4,8\}$.

For both equations, we use $n\in\{32,64,128,256,512,1024\}$ independently sampled training functions, while keeping the Transformer  architecture and training protocol fixed within each problem. Test MSE is evaluated in physical units over $128$ held-out functions and $1024$ uniformly spaced spatial points. Common implementation and evaluation details are provided in Appendix~\ref{app:common_experimental_setup}, with PDE-specific input
families and additional details in Appendices~\ref{app:Burgers'_experiments} and~\ref{app:kdv_experiments}.

\begin{figure}[t]
    \centering
    \begin{minipage}[t]{0.47\linewidth}
        \centering
        \includegraphics[width=\linewidth]
        {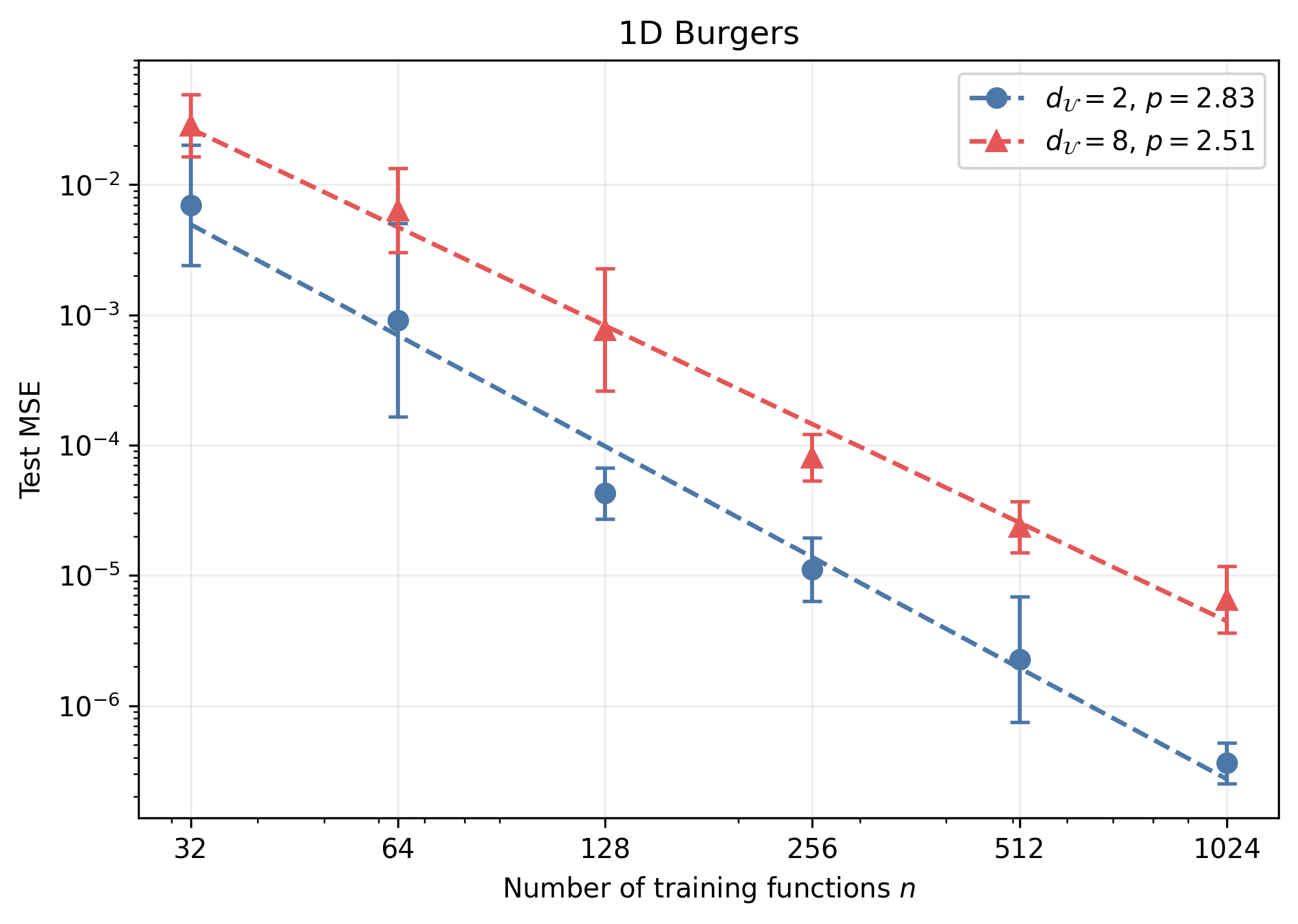}
    \end{minipage}
    \hspace{0.04\linewidth}
    \begin{minipage}[t]{0.47\linewidth}
        \centering
        \includegraphics[width=\linewidth]
        {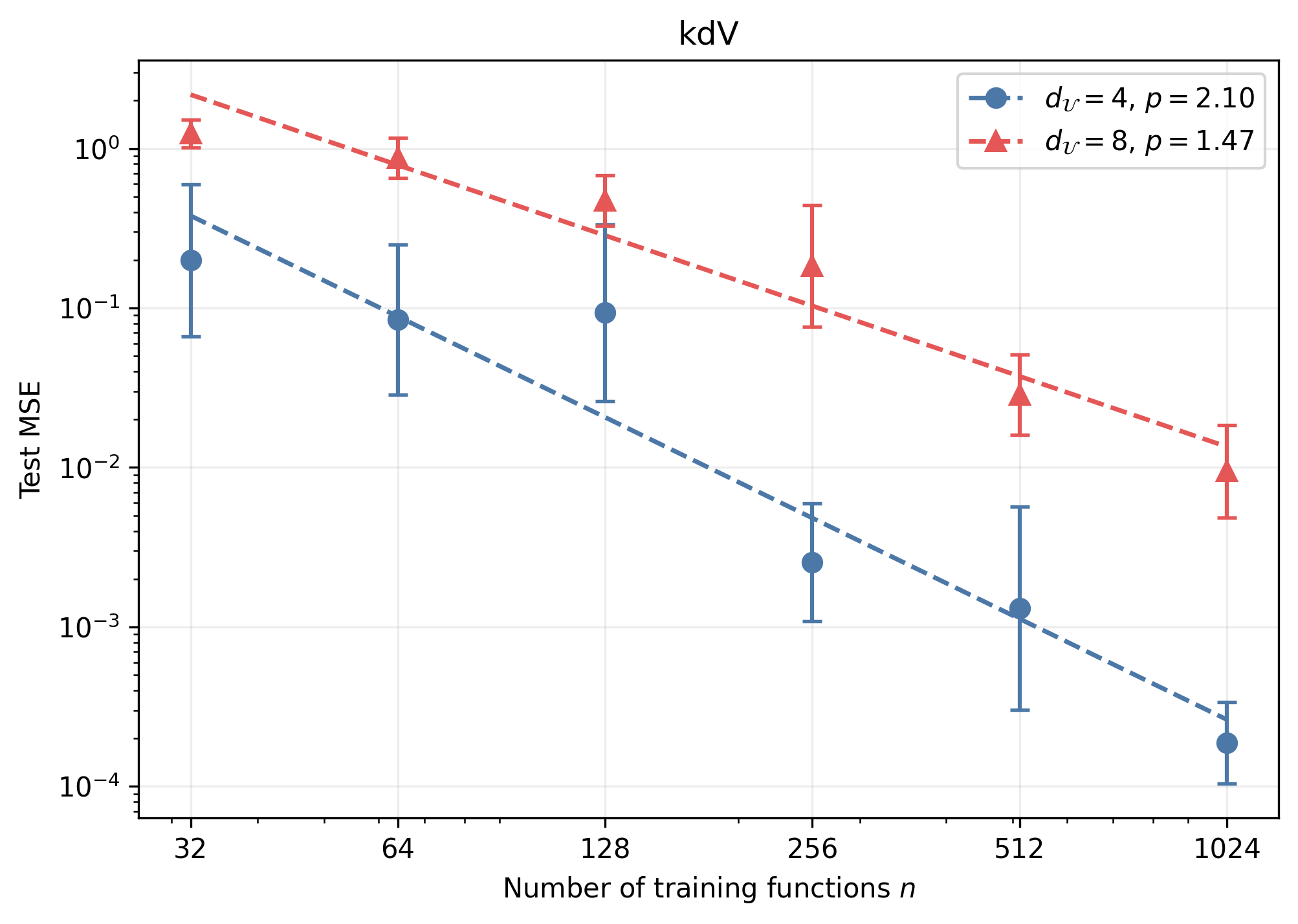}
    \end{minipage}
    \caption{
    Empirical data scaling for Burgers' (left) and KdV (right).
    Markers show the geometric mean test MSE over five repeats, with
    error bars corresponding to one sample standard deviation of the
    log MSE; dashed lines show the fitted power law
    $\mathrm{MSE}\approx Cn^{-p}$.
    }
    \label{fig:pde_scaling}
    \vspace{-0.2cm}
\end{figure}

Figure~\ref{fig:pde_scaling} shows clear power-law decay over
$n=32$--$1024$ for both equations. Burgers' yields $p=2.83$ for
$d_{\cU}=2$ and $p=2.51$ for $d_{\cU}=8$, while KdV yields
$p=2.10$ for $d_{\cU}=4$ and $p=1.47$ for $d_{\cU}=8$.
Despite the distinct dissipative and dispersive dynamics, both problems
exhibit the same ordering: the higher-dimensional input family has the
smaller empirical decay exponent. This trend is qualitatively consistent
with Theorem~\ref{thm:generalization_bound}, where increasing $d_{\cU}$
increases $s$ and moves the exponent $-2/(2+s)$ toward zero; the
empirical exponents are not interpreted as quantitative estimates of
the theoretical upper-bound rate.

\vspace{-0.2cm}
\section{Conclusion}
\vspace{-0.2cm}

Conclusion: We establish quantitative approximation and generalization theories for Transformer-based operator learning through a two-level Softmax POU construction and its explicit Transformer realization.   The resulting power model and data scaling laws explicitly capture the operator regularity and the regularity and complexity of the input and output function spaces. Experiments on Burgers' and KdV equations exhibit the predicted quantitative data scaling law and its dependence on the input intrinsic dimension.

Limitation: The resulting scaling laws in this paper are constructive upper bounds and are not intended to characterize optimal exponents or the performance of every trained architecture. Sharpening these bounds and understanding how architectural choices and optimization affect the observed scaling behavior are future-work directions.

\subsection*{Acknowledgement}

Haoran Yan, Zhongjie Shi and Wenjing Liao acknowledge support from the National Science Foundation un-
der the NSF DMS 2145167 and the U.S. Department of Energy under the DOE SC0024348. Peng Chen acknowledges support from the National Science Foundation under award NSF DMS-2245111 and the U.S. Department of Energy under contract DE-AC05-00OR22725. Yuanzhe Xi acknowledges support from the National Science Foundation under award NSF DMS-2038118 and DMS-2513118.

\subsection*{AI use statement}

In this work, we used generative AI tools to assist with polishing writing, finding related literature for references, designing the research and experiment idea, examining mathematical derivations, and implementation of numerical experiments. In particular, AI tools were used to assist with the design and analysis of the numerical experiments, the development and debugging of code for synthetic PDE datasets including Burgers' and KdV, and the verification and refinement of theoretical arguments. All AI-assisted mathematical arguments, code, numerical results, and manuscript content were independently reviewed and verified by the authors. We take responsibility for the final content of this work.

%\subsection*{Reproducibility statement}

%Complete proofs of the theoretical results are provided in the appendix. Additional details on the PDE benchmarks, input-function families, Transformer architecture, training protocol, and evaluation procedure are provided in Appendix~\ref{app:additional_experiments}.

\bibliography{iclr2027_conference}
\bibliographystyle{iclr2027_conference}

\newpage

\appendix

\section*{Appendix}

\section{Additional Definitions, Transformer Architecture, and Extensions}
\label{app:definitions_architecture_extensions}

This section collects additional definitions and Transformer architecture details used throughout the analysis, and provides the extension of our
approximation result to band-limited input functions.

\subsection{Additional Definitions}
\label{app:additional_definitions}

\begin{definition}[H\"older Norm]
\label{def:holder_norm}
Let $\alpha=m_\alpha+\nu_\alpha$, where
$m_\alpha\in\mathbb N_0$ and $\nu_\alpha\in(0,1]$.
For a function $f:\Omega\to\mathbb R$, define
\begin{equation}
\label{eq:holder_norm_definition}
\|f\|_{C^\alpha(\Omega)}
:=
\sum_{|\mathbf r|\le m_\alpha}
\|\partial^{\mathbf r}f\|_{L^\infty(\Omega)}
+
\max_{|\mathbf r|=m_\alpha}
\sup_{\mathbf x\neq\mathbf x'}
\frac{
|\partial^{\mathbf r}f(\mathbf x)
-\partial^{\mathbf r}f(\mathbf x')|
}{
\|\mathbf x-\mathbf x'\|_2^{\nu_\alpha}
},
\end{equation}
where $\mathbf r$ is a multi-index.
The same convention is used for $C^\beta(\Omega_{\cV})$.
\end{definition}

\begin{definition}[Covering Number]
\label{def:covering_number}
Let $(\mathcal F,d)$ be a metric space and let $\eta>0$.
The covering number $\mathcal N(\eta,\mathcal F,d)$ is the minimum
cardinality of an $\eta$-cover of $\mathcal F$ with respect to $d$.

For scalar-valued predictors considered in this paper, we use the uniform norm
$$
\|T\|_{\infty,\infty} :=
\sup_{u\in\cU,\;\yb\in\Omega_{\cV}}
|T(S_{n_x}(u),\yb)|.
$$
\end{definition}

\begin{definition}[Softmax Map]
\label{def:softmax}
For $\mathbf a=(a_1,\ldots,a_m)^\top\in\mathbb R^m$, define
\begin{equation}
\label{eq:softmax_definition}
[\operatorname{softmax}(\mathbf a)]_j
:=
\frac{\exp(a_j)}
{\sum_{r=1}^{m}\exp(a_r)},
\qquad j\in[m].
\end{equation}
\end{definition}

\subsection{Transformer Architecture}
\label{app:transformer_architecture}

We provide the precise definitions of the multi-head self-attention and
point-wise feedforward maps used in
Section~\ref{subsec:transformer_architecture}.
For
$
\Zb=[\zb^1,\ldots,\zb^P]\in\RR^{D\times P},
$
the $h$-th attention head in block $\ell$ uses
$$
\Qb_\ell^h,\Kb_\ell^h\in\RR^{d_k^\ell\times D},
\qquad
\Vb_\ell^h\in\RR^{d_v^\ell\times D}.
$$
The attention score matrix is defined by
\begin{equation}
\label{eq:attention_score}
\Sb_\ell^h(\Zb)
:=
(\Kb_\ell^h\Zb)^\top(\Qb_\ell^h\Zb)
\in\RR^{P\times P}.
\end{equation}
Using the Softmax map defined in
\eqref{eq:softmax_definition}, the corresponding column-wise attention
matrix is
\begin{equation}
\label{eq:attention_weights}
[\Ab_\ell^h(\Zb)]_{:,j}
:=
\operatorname{softmax}
\left(
[\Sb_\ell^h(\Zb)]_{:,j}
\right),
\qquad j\in[P].
\end{equation}
The $h$-th attention head is then
\begin{equation}
\label{eq:attention_head}
\operatorname{head}_\ell^h(\Zb)
:=
\Vb_\ell^h\Zb\,\Ab_\ell^h(\Zb)
\in\RR^{d_v^\ell\times P}.
\end{equation}
With
$
\Wb_\ell^O\in
\RR^{D\times(H^\ell d_v^\ell)},
$
the multi-head self-attention map is
\begin{equation}
\label{eq:mha_definition}
A_\ell(\Zb)
:=
\Wb_\ell^O
\begin{bmatrix}
\operatorname{head}_\ell^1(\Zb)
\\
\vdots
\\
\operatorname{head}_\ell^{H^\ell}(\Zb)
\end{bmatrix}
\in\RR^{D\times P}.
\end{equation}

The point-wise feedforward map is defined by
\begin{equation}
\label{eq:ffn_definition}
F_\ell(\Zb)
:=
\Wb_\ell^2
\sigma\!\left(
\Wb_\ell^1\Zb+\bb_\ell^1\mathbf 1_P^\top
\right)
+
\bb_\ell^2\mathbf 1_P^\top,
\end{equation}
where
$$
\Wb_\ell^1\in\RR^{d_{\rm ff}^\ell\times D},
\qquad
\Wb_\ell^2\in\RR^{D\times d_{\rm ff}^\ell},
$$
and $\sigma(t)=\max\{t,0\}$ is applied entrywise.

Therefore, the $\ell$-th encoder block used in our analysis is
\begin{equation}
\label{eq:encoder_block_definition}
\widehat{\Zb}_\ell
=
A_\ell(\Zb_{\ell-1}),
\qquad
\Zb_\ell
=
F_\ell(\widehat{\Zb}_\ell).
\end{equation}

\subsection{Extension to Band-limited Input Functions}
\label{app:bandlimited}

Recall from Theorem~\ref{thm:transformer_operator_approx} that the
complexity exponent $s$ defined in \eqref{eq:complexity_exponent} is
$$
s
=
\frac{d_{\cU}}{\gamma}
+
\frac{d_2}{\beta}
+
\frac{2d_1}{\alpha}
\left(1+\frac{2}{\gamma}\right).
$$
The last term, $\frac{2d_1}{\alpha}\left(1+\frac{2}{\gamma}\right)$, arises from the discretization of the input functions. Indeed, for a general $C^\alpha$ input class, controlling the quadrature error requires the number of input samples $n_x$ to grow as the target accuracy $\epsilon$ decreases. Since the Transformer construction in Theorem~\ref{thm:transformer_operator_approx} has parameter complexity of order $n_x^2P$, this discretization requirement contributes directly to the model scaling exponent.

We next consider a band-limited input class, for which the required functional inner products can be recovered exactly from a fixed number of samples. In this case, $n_x$ no longer grows with $\epsilon$, and the input-discretization contribution disappears from the polynomial scaling exponent.

\begin{corollary}[Scaling for Band-limited Input Functions]
\label{cor:bandlimited_scaling}
Suppose the assumptions of Theorem~\ref{thm:transformer_operator_approx} hold. In addition, assume that every $u\in\cU$ is periodic on $\Omega_{\cU}=[0,1]^{d_1}$ and has Fourier support contained in $\left\{ \mathbf k\in\mathbb Z^{d_1}: \|\mathbf k\|_\infty\le B \right\}$, where $B$ is fixed independently of $\epsilon$.

Then the input grid in Setting~\ref{setting:data} can be chosen with $n_x=n_x(B,d_1)$ independent of $\epsilon$ such that for $u,u_k\in\cU$, $\langle u,u_k\rangle_{n_x} = \langle u,u_k\rangle_{L^2(\Omega_{\cU})}$. So the quadrature term in Lemma~\ref{lemma:total_approximation_error} vanishes. The Transformer construction of Theorem~\ref{thm:transformer_operator_approx} then satisfies $N_{\rm total} \le C_{\rm band} \epsilon^{-s_{\rm band}}$, up to logarithmic factors, where $s_{\rm band} := \frac{d_{\cU}}{\gamma} + \frac{d_2}{\beta}$.
\end{corollary}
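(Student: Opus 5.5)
The plan is to reuse the proof of Theorem~\ref{thm:transformer_operator_approx} verbatim, changing only the input grid and the error budget allotted to quadrature.

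\textbf{Step 1: exact quadrature for band-limited inputs.} For periodic $u,u_k$ with Fourier support in $\{\|\mathbf k\|_\infty\le B\}$, the product $u u_k$ is a trigonometric polynomial with frequencies in $\{\|\mathbf k\|_\infty\le 2B\}$. Take the uniform tensor grid $\xb_{\mathbf j}=\mathbf j/m$, $\mathbf j\in\{0,\ldots,m-1\}^{d_1}$, with $m=2B+1$, so $n_x=(2B+1)^{d_1}$. Let $\langle\cdot,\cdot\rangle_{n_x}$ be the equal-weight rule $\frac1{n_x}\sum_{\mathbf j}$. For $e^{2\pi i\mathbf k\cdot\xb}$ this rule returns $1$ if every $k_i\equiv0 \pmod m$ and $0$ otherwise. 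Since $|k_i|\le 2B<m$, the only surviving frequency is $\mathbf k=0$, so the rule is exact: $\langle u,u_k\rangle_{n_x}=\langle u,u_k\rangle_{L^2(\Omega_{\cU})}$. This grid is not the Gauss--Legendre grid of the theorem. The construction only uses the grid through the discrete inner product $\langle u,u_k\rangle_{n_x}$ in \eqref{eq:discrete_input_space_weight}, so I would replace the quadrature weights in the first attention layer accordingly.

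\textbf{Step 2: oracle error without discretization.} In the error decomposition of Lemma~\ref{lemma:total_approximation_error}, the quadrature term is proportional to the discrepancy between $\langle u,u_k\rangle_{n_x}$ and the $L^2$ inner product, and it now vanishes. The remaining input-space and output-domain localization errors are still controlled by choosing $r_u\asymp\epsilon^{1/\gamma}$ and $r_y\asymp\epsilon^{1/\beta}$. This gives $C_U\lesssim\epsilon^{-d_{\cU}/\gamma}$ and $C_V\lesssim\epsilon^{-d_2/\beta}$, hence $P\lesssim\epsilon^{-d_{\cU}/\gamma-d_2/\beta}$. The temperatures $M_u,M_y$ grow only polynomially in $\epsilon^{-1}$ times logarithms, exactly as before.

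\textbf{Step 3: parameter count.} Keep the realization unchanged: $L=2$, $D=d_1+d_2+n_x+9$, $H^1=(n_x+3)P+2$, and constant head and FFN dimensions. The parameter count is then dominated by terms of order $H^1 D+D^2+P\cdot D$ (including the positional encoding and readout). This is $O(n_x^2P)$, and with $n_x=n_x(B,d_1)$ fixed it is $O(P)=O(\epsilon^{-s_{\rm band}})$ up to constants depending on $B,d_1$. Logarithmic factors enter only through the structural constants, giving $N_{\rm total}\le C_{\rm band}\epsilon^{-s_{\rm band}}$ up to logarithms.

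\textbf{Main obstacle.} The delicate point is the compatibility condition $P\ge n_x+2$ and the token layout, which place the $n_x$ samples among the $P$ tokens. With $n_x$ fixed this condition holds automatically for small $\epsilon$. The other check is that no hidden use of the Gauss--Legendre structure, such as weight sign or magnitude bounds in $M_{\max}$, breaks when the weights become uniform. Since uniform weights $1/n_x$ are positive and bounded, I expect the magnitude bound to go through unchanged.
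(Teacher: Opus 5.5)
Your proposal is correct and follows the paper's argument. The paper likewise integrates the bandwidth-$2B$ product $uu_k$ exactly on a uniform tensor grid, which removes the quadrature term in Lemma~\ref{lemma:total_approximation_error}. It then leaves the rest of the construction unchanged and collapses $N_{\rm total}=O(n_x^2P)$ to $O(P)=O(\epsilon^{-s_{\rm band}})$ once $n_x$ is fixed. You are slightly more explicit than the paper: you fix $n_x=(2B+1)^{d_1}$ with equal weights, and you check that the compatibility condition $P\ge n_x+2$ and the weight bound $w_j\le 1$ entering $B_{\rm aff}$ still hold.
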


\paragraph{Why the rate improves}
For $u,u_k\in\cU$, the product $uu_k$ has Fourier bandwidth at most $2B$. Hence a sufficiently fine tensor-product uniform grid integrates $uu_k$ exactly, so the quadrature error in Lemma~\ref{lemma:quadrature} vanishes and $n_x$ can be chosen independently of $\epsilon$. The remaining approximation argument is unchanged. In particular, $P=C_UC_V+1 = O\left( \epsilon^{-\frac{d_{\cU}}{\gamma}-\frac{d_2}{\beta}}\right)$. Since $n_x$ is now fixed, the parameter bound $N_{\rm total}=O(n_x^2P)$ reduces to $N_{\rm total}=O(P)$, which gives us $s_{\rm band}$.

\section{Proof of Theorem~\ref{thm:transformer_operator_approx}:
Approximation and Model Scaling}
\label{app:approximation_model_scaling}

Following the proof sketch in Section~\ref{sec:proof_sketch}, Appendix~\ref{app:oracle_pou} develops the two-level Softmax POU oracle approximation, Appendix~\ref{app:transformer_realization} establishes its Transformer realization, and Appendix~\ref{app:proof:transformer_operator_approx} combines these ingredients to prove Theorem~\ref{thm:transformer_operator_approx} and derive the model scaling law.

\subsection{Oracle Approximation via Two-Level Softmax POU}
\label{app:oracle_pou}

This subsection provides the supporting lemmas for Step~1 of the proof
sketch in Section~\ref{sec:proof_sketch}. Recall that the oracle
approximation is constructed using an input-function-space Softmax POU
and an output-domain Softmax POU, with localization parameters $M_u$ and $M_y$ defined in \eqref{eq:Mu_definition} and \eqref{eq:My_definition}, respectively.

\begin{lemma}[Input-function-space Softmax POU Approximation]
\label{lemma:input_space_pou}
Suppose Assumptions~\ref{assum:operator}, \ref{assum:input_space}, and \ref{assum:output_space} hold. Fix $r_u\in(0,1]$. Let $\{u_k\}_{k=1}^{C_U} \subset \mathcal U$ be an $r_u$-cover of $\mathcal{U}$ in $L^2(\Omega_{\mathcal{U}})$, where $C_U \le C_1 r_u^{-d_{\mathcal{U}}}$. Assume $r_u$ is sufficiently small so that $2B_{\cV}C_U \ge L_G r_u^\gamma$, and define
\begin{equation}
\label{eq:Mu_definition}
M_u
:=
\frac{1}{3r_u^2}
\log\left(
\frac{2B_{\cV}C_U}{L_Gr_u^\gamma}
\right).
\end{equation}

For $k\in[C_U]$, define the continuous input-space Softmax POU weight by
\begin{equation}
\label{eq:beta_k}
\widetilde\beta_k(u) :=
\frac{\exp\left(2M_u\langle u,u_k\rangle_{L^2(\Omega_{\cU})} - M_u\|u_k\|_{L^2(\Omega_{\cU})}^2 \right)}{\sum_{k'=1}^{C_U}\exp\left(2M_u\langle u,u_{k'}\rangle_{L^2(\Omega_{\cU})} - M_u\|u_{k'}\|_{L^2(\Omega_{\cU})}^2 \right)},
\end{equation}
where $\langle u, u_k \rangle_{L^2} = \int_{\Omega_{\cU}} u(\mathbf{x}) u_k(\mathbf{x}) d\mathbf{x}$. Define the input-space POU approximation by
\begin{equation}
\label{eq:input_space_pou_operator}
G_{r_u}(u) := \sum_{k=1}^{C_U} \widetilde\beta_k(u)G(u_k).
\end{equation} 
Then
\begin{equation}
\label{eq:input_space_pou_error}
\sup_{u\in\mathcal U}
\|G(u)-G_{r_u}(u)\|_{L^2(\rho_y)}
\le
(2^\gamma+1)L_Gr_u^\gamma.
\end{equation}
\end{lemma}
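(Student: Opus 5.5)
The key observation is that the Softmax logit is, up to a term independent of $k$, a negative squared distance:
\[
2M_u\langle u,u_k\rangle_{L^2(\Omega_{\cU})}-M_u\|u_k\|_{L^2(\Omega_{\cU})}^2=-M_u\|u-u_k\|_{L^2(\Omega_{\cU})}^2+M_u\|u\|_{L^2(\Omega_{\cU})}^2 .
\]
The term $M_u\|u\|^2$ cancels between numerator and denominator, so
\[
\widetilde\beta_k(u)=\frac{\exp(-M_u\|u-u_k\|^2)}{\sum_{k'}\exp(-M_u\|u-u_{k'}\|^2)}.
\]
These weights are nonnegative and sum to one. Hence, by the triangle inequality in $L^2(\rho_y)$,
\[
\|G(u)-G_{r_u}(u)\|_{L^2(\rho_y)}\le\sum_k\widetilde\beta_k(u)\,\|G(u)-G(u_k)\|_{L^2(\rho_y)}.
\]
I split this sum into near anchors $K_{\rm near}:=\{k:\|u-u_k\|\le 2r_u\}$ and far anchors, the rest.

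For near anchors, Assumption~\ref{assum:operator} gives $\|G(u)-G(u_k)\|\le L_G(2r_u)^\gamma$. Since the weights sum to at most one, the near part contributes at most $2^\gamma L_G r_u^\gamma$.

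For far anchors, I use the uniform output bound from Assumption~\ref{assum:output_space}. Because $\rho_y$ is a probability measure, $\|G(u)-G(u_k)\|_{L^2(\rho_y)}\le 2B_{\cV}$. It remains to bound each far weight. By the covering property there is $k^*$ with $\|u-u_{k^*}\|\le r_u$, so the denominator is at least $\exp(-M_u r_u^2)$. A far anchor has numerator at most $\exp(-4M_u r_u^2)$. Therefore each far weight is at most $\exp(-3M_u r_u^2)$. There are at most $C_U$ far anchors, so the far part contributes at most
\[
2B_{\cV}C_U\exp(-3M_ur_u^2).
\]
With the choice \eqref{eq:Mu_definition}, $3M_ur_u^2=\log\bigl(2B_{\cV}C_U/(L_Gr_u^\gamma)\bigr)$, so the far part is exactly $L_Gr_u^\gamma$. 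The hypothesis $2B_{\cV}C_U\ge L_Gr_u^\gamma$ guarantees $M_u\ge0$, which is needed for the Softmax to localize, i.e.\ for the numerator/denominator comparisons above to have the right direction.

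Adding the two parts gives $(2^\gamma+1)L_Gr_u^\gamma$ uniformly in $u\in\cU$, which is \eqref{eq:input_space_pou_error}.

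The main obstacle is choosing the near/far threshold and $M_u$ consistently. Taking the threshold at $2r_u$ leaves the logit gap $4r_u^2-r_u^2=3r_u^2$, which is exactly the factor $3$ in \eqref{eq:Mu_definition}; everything else is routine. One further check is measurability and finiteness of $G(u)$ in $L^2(\rho_y)$, which follows from $\|G(u)\|_{C^\beta}\le B_{\cV}$.
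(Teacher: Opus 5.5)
Your proposal is correct and follows essentially the same route as the paper. Both arguments rewrite the logits as negative squared distances, split the anchors at radius $2r_u$, bound each far weight by $e^{-3M_ur_u^2}$ using the nearest anchor in the denominator, and let the choice of $M_u$ make the far contribution exactly $L_Gr_u^\gamma$. The only difference is cosmetic: the paper shifts the logits by $M_ur_u^2$ so the denominator is bounded below by $1$, whereas you compare $e^{-4M_ur_u^2}$ with $e^{-M_ur_u^2}$ directly. Your remark that $M_u\ge 0$ is needed for these comparisons is a detail the paper leaves implicit.
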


The proof of Lemma~\ref{lemma:input_space_pou} is given in
Appendix~\ref{app:proof:lemma:input_space_pou}.

Since each expert $G(u_k)$ is $\beta$-Hölder continuous (Assumption \ref{assum:output_space}), we next approximate it using an output-domain Softmax POU over $\Omega_{\mathcal{V}}$.

\begin{lemma}[Output-domain POU Approximation]
\label{lemma:output_domain_pou}
Suppose Assumption~\ref{assum:output_space} holds. Fix $r_y\in(0,1]$. Let $\{\mathbf{z}_l\}_{l=1}^{C_V} \subset\Omega_{\mathcal V}$ be an $r_y$-cover of $\Omega_{\mathcal{V}}$ under the Euclidean metric, and define
\begin{equation}
\label{eq:My_definition}
M_y
:=
\frac{1}{3r_y^2}
\log\left(
\frac{2C_V}{r_y^\beta}
\right).
\end{equation}
Let the output-domain Softmax POU weights
$\{\widetilde\eta_l\}_{l=1}^{C_V}$ be defined as in
\eqref{eq:eta_l}.

For each $k \in [C_U]$, define the output-domain POU approximation by
\begin{equation}
\label{eq:output_domain_pou_approximation}
\widetilde v_k(\yb) := \sum_{l=1}^{C_V}G(u_k)(\zb_l)\widetilde\eta_l(\yb).
\end{equation} Then
\begin{equation}
\label{eq:output_domain_pou_error}
\sup_{k\in[C_U]}
\|G(u_k)-\widetilde v_k\|_{L^\infty(\Omega_{\cV})} \le (2^\beta+1)B_{\cV}r_y^\beta.
\end{equation}
\end{lemma}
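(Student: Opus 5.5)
Write $v:=G(u_k)$, so that $\|v\|_{C^\beta}\le B_{\cV}$ by Assumption~\ref{assum:output_space}, and fix $\yb\in\Omega_{\cV}$. The starting point is that the logits are Gaussian distances up to a common shift:
$$2M_y\yb^\top\zb_l-M_y\|\zb_l\|_2^2=-M_y\|\yb-\zb_l\|_2^2+M_y\|\yb\|_2^2 .$$
Hence $\widetilde\eta_l(\yb)\propto\exp(-M_y\|\yb-\zb_l\|_2^2)$ and $\sum_l\widetilde\eta_l(\yb)=1$. This gives
$$|v(\yb)-\widetilde v_k(\yb)|\le\sum_{l=1}^{C_V}\widetilde\eta_l(\yb)\,|v(\yb)-v(\zb_l)| .$$
We split the index set into the near set $I_{\rm near}=\{l:\|\yb-\zb_l\|_2\le 2r_y\}$ and its complement $I_{\rm far}$, in the same way as in the proof of Lemma~\ref{lemma:input_space_pou}.

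\textbf{Near anchors.} On $I_{\rm near}$ the Hölder seminorm gives $|v(\yb)-v(\zb_l)|\le B_{\cV}(2r_y)^\beta$. Since the weights sum to at most one, the near contribution is at most $2^\beta B_{\cV}r_y^\beta$.

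\textbf{Far anchors.} We bound the denominator from below and each far numerator from above.
\begin{itemize}
\item Because $\{\zb_l\}$ is an $r_y$-cover, there is some $l^*$ with $\|\yb-\zb_{l^*}\|_2\le r_y$. So the denominator is at least $\exp(-M_yr_y^2)$.
\item Each far numerator is at most $\exp(-4M_yr_y^2)$.
\item Therefore $\widetilde\eta_l(\yb)\le\exp(-3M_yr_y^2)=r_y^\beta/(2C_V)$ for $l\in I_{\rm far}$, by the choice of $M_y$ in \eqref{eq:My_definition}.
\item Using the crude bound $|v(\yb)-v(\zb_l)|\le 2\|v\|_{L^\infty}\le 2B_{\cV}$ and $|I_{\rm far}|\le C_V$, the far contribution is at most $C_V\cdot 2B_{\cV}\cdot r_y^\beta/(2C_V)=B_{\cV}r_y^\beta$.
\end{itemize}

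Adding the two contributions gives $(2^\beta+1)B_{\cV}r_y^\beta$. The bound is uniform in $\yb$ and in $k$, since only $\|G(u_k)\|_{C^\beta}\le B_{\cV}$ was used, and this yields \eqref{eq:output_domain_pou_error}.

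The only delicate step is the far-anchor bound, specifically matching the factor $3$ in $M_y$ to the gap between radii $2r_y$ and $r_y$, which gives $4-1=3$. A minor check: the Hölder seminorm part of $\|v\|_{C^\beta}$ is itself bounded by $B_{\cV}$, so the constants come out exactly as stated. Also note that $M_y>0$ needs $2C_V>r_y^\beta$, which holds since $C_V\ge1$ and $r_y\le1$.
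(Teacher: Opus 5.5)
Your proof is correct and follows essentially the same argument as the paper. The paper also rewrites the logits in distance form, splits the anchors at radius $2r_y$, and uses $2B_{\cV}C_Ve^{-3M_yr_y^2}=B_{\cV}r_y^\beta$ for the far term. The only difference is cosmetic: the paper shifts the logits by $M_y(r_y^2-\|\yb\|_2^2)$ so that the denominator is at least $1$, whereas you shift by $M_y\|\yb\|_2^2$ and bound the denominator below by $e^{-M_yr_y^2}$.
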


Lemma \ref{lemma:output_domain_pou} is proved in Appendix \ref{app:proof:lemma:output_domain_pou}. 

Combining the input-space and output-domain POU approximations, we define the continuous two-level Softmax POU approximation by
\begin{equation}
\label{eq:continuous_two_level_pou}
\widetilde G_{r_u,r_y}(u)(\yb) := \sum_{k=1}^{C_U}\sum_{l=1}^{C_V} \widetilde\beta_k(u)\widetilde\eta_l(\yb)G(u_k)(\zb_l).
\end{equation}

The input-space weights in \eqref{eq:beta_k} depend on functional inner products. Since the Transformer observes $u$ only through its values on a fixed input grid, we next replace these inner products by deterministic quadrature.

\begin{lemma}[Quadrature Error for Input Anchors]
\label{lemma:quadrature}
Suppose Assumption~\ref{assum:input_space} holds. Write $\alpha=m_\alpha+\nu$ for $m_\alpha\in\mathbb N_0$, and $\nu\in(0,1]$. Let $s_{\rm quad}:=\left\lceil\frac{m_\alpha+1}{2}\right\rceil$. For every $n_x\ge(2s_{\rm quad})^{d_1}$, define $q:=\left\lfloor\frac{n_x^{1/d_1}}{s_{\rm quad}}\right\rfloor$. 

Partition $\Omega_{\cU}=[0,1]^{d_1}$ into $q^{d_1}$ congruent cubes and apply the tensor-product $s_{\rm quad}$-point Gauss--Legendre quadrature rule on each cube. This gives $(s_{\rm quad}q)^{d_1}\le n_x$ quadrature nodes. If $(s_{\rm quad}q)^{d_1}<n_x$, append distinct points with zero weights so that the resulting fixed input grid has exactly $n_x$ points, $X_{n_x}$ in Setting~\ref{setting:data}. The associated quadrature weights $w_1,\ldots,w_{n_x}$ are nonnegative and satisfy 
\begin{equation}
\label{eq:quadrature_weight_stability}
w_j\ge 0,
\qquad
\sum_{j=1}^{n_x} w_j = 1,
\end{equation} 
Moreover, for every $h\in C^\alpha(\Omega_{\cU})$,
$$
\left|\int_{\Omega_{\cU}}h(\xb)\,d\xb - \sum_{j=1}^{n_x}w_jh(\xb_j) \right|
\le
C_{\rm quad,0} \|h\|_{C^\alpha(\Omega_{\cU})}n_x^{-\alpha/d_1},
$$
where $C_{\rm quad,0}>0$ depends only on $d_1$ and $\alpha$. Consequently, for the input-space cover $\{u_k\}_{k=1}^{C_U}\subset\cU$ used in Lemma~\ref{lemma:input_space_pou},
\begin{equation}
\label{eq:quadrature_error}
\sup_{u\in\cU}\max_{k\in[C_U]}
\left|\langle u,u_k\rangle_{L^2(\Omega_{\cU})} - \langle u,u_k\rangle_{n_x}\right|
\le
C_{\rm quad}n_x^{-\alpha/d_1},
\end{equation}
where $\langle u,u_k\rangle_{n_x} = \sum_{j=1}^{n_x}w_j u(\xb_j)u_k(\xb_j)$,
and $C_{\rm quad}>0$, defined explicitly in \eqref{eq:Cquad_definition}, depends only on $d_1$, $\alpha$, $B_{\cU}$, and the quadrature construction, but not on $u$, $u_k$, $C_U$, or $n_x$.
\end{lemma}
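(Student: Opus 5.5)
We prove the lemma by a standard composite-quadrature argument: local polynomial exactness of Gauss--Legendre rules, a Taylor/H\"older remainder on each cell, and a product estimate for the functions $h=uu_k$.

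\textbf{Weights and grid.} Let $q=\lfloor n_x^{1/d_1}/s_{\rm quad}\rfloor$. The assumption $n_x\ge(2s_{\rm quad})^{d_1}$ gives $q\ge2$ and $s_{\rm quad}q>n_x^{1/d_1}-s_{\rm quad}\ge n_x^{1/d_1}/2$. Hence $n_x^{1/d_1}/(2s_{\rm quad})\le q\le n_x^{1/d_1}/s_{\rm quad}$, so $q^{-1}\le 2s_{\rm quad}n_x^{-1/d_1}$. On each cube $Q$ of side $1/q$ we use the affinely mapped tensor Gauss--Legendre rule. Its weights are products of the one-dimensional Gauss weights, which are positive and sum to the length of the interval. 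So the weights on $Q$ are positive and sum to $|Q|=q^{-d_1}$. Summing over the $q^{d_1}$ cubes gives total weight $1$, and the padded points carry zero weight. The Gauss nodes are interior to each cube and are therefore distinct across cubes, so the appended points can be chosen distinct from them. This proves \eqref{eq:quadrature_weight_stability}.

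\textbf{Quadrature error.} The $s_{\rm quad}$-point Gauss rule is exact for univariate polynomials of degree at most $2s_{\rm quad}-1\ge m_\alpha$. The tensor rule is therefore exact on each cube for all polynomials of coordinate degree at most $m_\alpha$, in particular for the Taylor polynomial $p_Q$ of total degree $m_\alpha$ of $h$ at the center of $Q$. The Taylor remainder with H\"older continuous $m_\alpha$-th derivatives gives
\[
\sup_Q|h-p_Q|\le C(d_1,\alpha)\,\|h\|_{C^\alpha}\,\mathrm{diam}(Q)^{\alpha},
\qquad \mathrm{diam}(Q)=\sqrt{d_1}/q .
\]
Write $I_Q$ for the exact integral over $Q$ and $Q_Q$ for the local quadrature. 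Because the weights are nonnegative with sum $|Q|$, we have $|I_Q(h)-Q_Q(h)|=|I_Q(h-p_Q)-Q_Q(h-p_Q)|\le 2|Q|\sup_Q|h-p_Q|$. Summing over cubes, whose volumes add to $1$, yields
\[
\Bigl|\int h-\sum_j w_jh(\xb_j)\Bigr|\le 2C\,\|h\|_{C^\alpha}\,d_1^{\alpha/2}q^{-\alpha}\le C_{\rm quad,0}\|h\|_{C^\alpha}n_x^{-\alpha/d_1},
\]
with $C_{\rm quad,0}=2C\,d_1^{\alpha/2}(2s_{\rm quad})^\alpha$, which depends only on $d_1$ and $\alpha$.

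\textbf{Application to $h=uu_k$.} The only remaining ingredient is the product estimate $\|fg\|_{C^\alpha}\le C_{\rm prod}(d_1,\alpha)\|f\|_{C^\alpha}\|g\|_{C^\alpha}$. For derivatives up to order $m_\alpha$, apply the Leibniz rule. For the H\"older seminorm of $\partial^{\mathbf r}(fg)$ with $|\mathbf r|=m_\alpha$, split each term as $\partial^{\mathbf a}f\,\partial^{\mathbf b}g$. When the lower-order factor has order below $m_\alpha$, it is Lipschitz with constant bounded by the $C^\alpha$ norm, and since $\Omega_{\cU}=[0,1]^{d_1}$ has bounded diameter, Lipschitz control dominates the $\nu$-H\"older seminorm up to a factor $d_1^{(1-\nu)/2}$. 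Applying this with $f=u$, $g=u_k$ and Assumption~\ref{assum:input_space}-1 gives $\|uu_k\|_{C^\alpha}\le C_{\rm prod}B_{\cU}^2$ uniformly over $u\in\cU$ and $k\in[C_U]$. This yields \eqref{eq:quadrature_error} with
\begin{equation}
\label{eq:Cquad_definition}
C_{\rm quad}:=C_{\rm quad,0}\,C_{\rm prod}\,B_{\cU}^2 .
\end{equation}

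The main care point is bookkeeping rather than a conceptual obstacle. One must confirm that the tensor Gauss rule reproduces total-degree $m_\alpha$ polynomials (it does, since the coordinate degree is at most $m_\alpha\le 2s_{\rm quad}-1$) and track the constants in the product rule and the Taylor remainder so that they are independent of $u$, $u_k$, $C_U$ and $n_x$.
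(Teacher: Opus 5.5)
Your proposal is correct and follows essentially the same route as the paper. Both arguments use a composite tensor-product Gauss--Legendre rule that is exact on the degree-$m_\alpha$ Taylor polynomial of each subcube, the per-cube bound $2|K|\,\|h-p_K\|_{L^\infty(K)}$, the estimate $q\ge n_x^{1/d_1}/(2s_{\rm quad})$, and the H\"older product estimate applied to $h=uu_k$ with $u_k\in\cU$. The only differences are cosmetic: you obtain $\sum_j w_j=1$ from the product structure of the Gauss weights rather than from exactness on constants, you track the factor $\mathrm{diam}(K)=\sqrt{d_1}/q$ explicitly, and you sketch the product estimate that the paper simply cites as standard.
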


Lemma \ref{lemma:quadrature} is proved in Appendix \ref{app:proof:lemma:quadrature}.

% We next define the fully discrete two-level Softmax POU approximation by
% \begin{equation}
% \label{eq:discrete_two_level_pou}
% \widetilde{G}_{r_u, r_y, n_x}(u)(\yb) = \sum_{k=1}^{C_U} \left( \sum_{l=1}^{C_V} G(u_k)(\zb_l) \widetilde{\eta}_l(\yb) \right) \widetilde{\beta}_{k, n_x}(u),
% \end{equation}
% where the discrete input-space Softmax POU weight is
% \begin{equation}
% \label{eq:discrete_input_space_weight}
% \widetilde\beta_{k,n_x}(u) := \frac{\exp\left(2M_u\langle u,u_k\rangle_{n_x} - M_u\|u_k\|_{L^2(\Omega_{\cU})}^2 \right)}{\sum_{k'=1}^{C_U}
% \exp\left(2M_u\langle u,u_{k'}\rangle_{n_x} - M_u\|u_{k'}\|_{L^2(\Omega_{\cU})}^2 \right)}.
% \end{equation}
% The approximator in \eqref{eq:discrete_two_level_pou} serves as an
% oracle approximation of the target operator, with two-level Softmax POU approximations in the input space and output domain respectively. We next derive an approximation error for this oracle approximation.

We use the following standard Lipschitz stability property \cite[Corollary~A.7]{edelman2022inductive}.

\begin{lemma}[Softmax Lipschitz Stability]
\label{lemma:softmax_lipschitz}
For any $\mathbf a,\mathbf b\in\mathbb R^m$,
$$
\|\operatorname{softmax}(\mathbf a)
-
\operatorname{softmax}(\mathbf b)\|_1
\le
2\|\mathbf a-\mathbf b\|_\infty.
$$
\end{lemma}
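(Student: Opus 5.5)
The plan is to prove the bound through the mean value theorem applied along the segment between $\mathbf b$ and $\mathbf a$, combined with an exact evaluation of the $\ell_\infty\to\ell_1$ operator norm of the softmax Jacobian. Set $\mathbf p(t):=\operatorname{softmax}(\mathbf b+t(\mathbf a-\mathbf b))$ for $t\in[0,1]$. Since softmax is smooth, the fundamental theorem of calculus gives
$$
\operatorname{softmax}(\mathbf a)-\operatorname{softmax}(\mathbf b)=\int_0^1 J(\mathbf b+t(\mathbf a-\mathbf b))\,(\mathbf a-\mathbf b)\,dt ,
$$
where $J(\mathbf c)=\operatorname{diag}(\mathbf p)-\mathbf p\mathbf p^\top$ and $\mathbf p=\operatorname{softmax}(\mathbf c)$. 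It then suffices to show that $\|J(\mathbf c)\mathbf h\|_1\le 2\|\mathbf h\|_\infty$ for every $\mathbf c$ and every $\mathbf h\in\RR^m$, because integrating over $t\in[0,1]$ yields the claim.

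For the Jacobian bound, compute directly $(J\mathbf h)_j=p_j(h_j-\bar h)$, where $\bar h:=\sum_r p_rh_r$ is a convex combination of the entries of $\mathbf h$. Hence $|h_j-\bar h|\le |h_j|+|\bar h|\le 2\|\mathbf h\|_\infty$, and summing against the weights $p_j$, which add up to $1$, gives
$$
\|J\mathbf h\|_1=\sum_j p_j|h_j-\bar h|\le 2\|\mathbf h\|_\infty .
$$
An alternative sharpening uses $|h_j-\bar h|\le \max_r h_r-\min_r h_r$, which would even give constant $1$ on the oscillation. The stated constant $2$ is all that is required, however, and it also matches \cite[Corollary~A.7]{edelman2022inductive}, which we may simply cite.

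There is no real obstacle in this argument. The only point that needs care is confirming the Jacobian formula $\partial p_j/\partial c_i=p_j(\delta_{ij}-p_i)$, which is a one-line calculation. The integral form of the mean value theorem for vector-valued maps also avoids any issue about the existence of a single intermediate point.
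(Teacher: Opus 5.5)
Your argument is correct and complete. The paper does not prove this lemma; it only cites Corollary~A.7 of Edelman et al. Your proposal therefore differs by supplying a self-contained derivation, and it takes the standard route. You write the difference as $\int_0^1 J(\mathbf b+t(\mathbf a-\mathbf b))(\mathbf a-\mathbf b)\,dt$ with $J=\operatorname{diag}(\mathbf p)-\mathbf p\mathbf p^\top$, then bound $\|J\mathbf h\|_1=\sum_j p_j|h_j-\bar h|\le 2\|\mathbf h\|_\infty$.

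The citation buys brevity. Your version buys independence from the external reference and shows that the constant $2$ is not sharp. Your oscillation remark can be pushed one step further. The sum $\sum_j p_j|h_j-\bar h|$ is the mean absolute deviation of a probability law with mean $\bar h$ supported on $[m,M]:=[\min_r h_r,\max_r h_r]$. Since $x\mapsto|x-\bar h|$ is convex, this quantity is maximized by the two-point law on $\{m,M\}$ with the same mean. That gives at most $2(\bar h-m)(M-\bar h)/(M-m)\le \tfrac12(M-m)\le\|\mathbf h\|_\infty$. Hence the lemma actually holds with constant $1$.

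The sharper constant would only improve numerical constants downstream, such as the factor $4$ in the quadrature term of the total-approximation-error lemma and the factors in the covering-number proof. So the paper's constant $2$ is harmless.
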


Combining the error bounds in Lemmas~\ref{lemma:input_space_pou}, \ref{lemma:output_domain_pou}, and~\ref{lemma:quadrature}, we obtain the following error bound for the discrete oracle approximation \eqref{eq:discrete_two_level_pou}.

\begin{lemma}[Total Approximation Error]
\label{lemma:total_approximation_error}
Suppose Assumptions~\ref{assum:operator}, \ref{assum:input_space}, and \ref{assum:output_space} hold, and suppose the fixed quadrature rule satisfies Lemma~\ref{lemma:quadrature}. Let $r_u, r_y \in(0,1]$ , with $r_u$ satisfying the condition in
Lemma~\ref{lemma:input_space_pou}, and let $M_u$ and $M_y$ be defined as in \eqref{eq:Mu_definition} and \eqref{eq:My_definition}. Then
\begin{equation}
\label{eq:total_approximation_error}
\sup_{u\in\cU} \left\| G(u) - \widetilde G_{r_u,r_y,n_x}(u) \right\|_{L^2(\rho_y)}
\le
(2^\gamma+1)L_G r_u^\gamma + (2^\beta+1)B_{\cV}r_y^\beta + 4B_{\cV}M_uC_{\rm quad}n_x^{-\alpha/d_1}.
\end{equation}
\end{lemma}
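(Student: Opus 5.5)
The bound follows from a three-term triangle inequality. We pass from $G(u)$ to the continuous input-space POU $G_{r_u}(u)$, then to the continuous two-level approximation $\widetilde G_{r_u,r_y}(u)$, and finally to the discrete oracle $\widetilde G_{r_u,r_y,n_x}(u)$. For every $u\in\cU$,
\begin{align*}
\|G(u)-\widetilde G_{r_u,r_y,n_x}(u)\|_{L^2(\rho_y)}
&\le \|G(u)-G_{r_u}(u)\|_{L^2(\rho_y)}
+\|G_{r_u}(u)-\widetilde G_{r_u,r_y}(u)\|_{L^2(\rho_y)}\\
&\quad+\|\widetilde G_{r_u,r_y}(u)-\widetilde G_{r_u,r_y,n_x}(u)\|_{L^2(\rho_y)}.
\end{align*}
Lemma~\ref{lemma:input_space_pou} bounds the first term directly by $(2^\gamma+1)L_Gr_u^\gamma$.

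For the second term, note that $\widetilde G_{r_u,r_y}(u)=\sum_k\widetilde\beta_k(u)\widetilde v_k$, with $\widetilde v_k$ as in \eqref{eq:output_domain_pou_approximation}. Hence
$$G_{r_u}(u)-\widetilde G_{r_u,r_y}(u)=\sum_k\widetilde\beta_k(u)\bigl(G(u_k)-\widetilde v_k\bigr).$$
The weights $\widetilde\beta_k(u)$ are nonnegative and sum to one. Since $\rho_y$ is a probability measure, $\|\cdot\|_{L^2(\rho_y)}\le\|\cdot\|_{L^\infty(\Omega_{\cV})}$. Applying these two facts and then Lemma~\ref{lemma:output_domain_pou}, the second term is at most $\max_k\|G(u_k)-\widetilde v_k\|_{L^\infty}\le(2^\beta+1)B_{\cV}r_y^\beta$.

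For the third term, write the difference as $\sum_k(\widetilde\beta_k(u)-\widetilde\beta_{k,n_x}(u))\widetilde v_k(\yb)$. Each $\widetilde v_k$ is a convex combination of values $G(u_k)(\zb_l)$, and $|G(u_k)(\zb_l)|\le B_{\cV}$ by Assumption~\ref{assum:output_space}. Therefore $\|\widetilde v_k\|_{L^\infty}\le B_{\cV}$, and the third term is at most
$$B_{\cV}\,\|\widetilde\beta(u)-\widetilde\beta_{\cdot,n_x}(u)\|_1 .$$
Both weight vectors are softmaxes of logit vectors. These logits differ only in the inner-product part, namely $2M_u(\langle u,u_k\rangle_{L^2}-\langle u,u_k\rangle_{n_x})$; the $-M_u\|u_k\|^2$ terms are identical. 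Lemma~\ref{lemma:softmax_lipschitz} therefore gives an $\ell^1$ bound of $2\cdot 2M_u\max_k|\langle u,u_k\rangle_{L^2}-\langle u,u_k\rangle_{n_x}|$. Lemma~\ref{lemma:quadrature} then bounds this by $4M_uC_{\rm quad}n_x^{-\alpha/d_1}$. Multiplying by $B_{\cV}$ yields the third term $4B_{\cV}M_uC_{\rm quad}n_x^{-\alpha/d_1}$.

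Taking the supremum over $u\in\cU$ completes the argument. The only delicate step is the third term. There one must keep track of the constant from the softmax Lipschitz bound combined with the factor $2M_u$ in the logits, and confirm that the bound on $|\widetilde v_k|$ holds uniformly in $\yb$. Both are immediate once the convex-combination structure is used.
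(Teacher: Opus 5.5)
Your proposal is correct and follows the paper's proof essentially step for step. It uses the same three-term triangle inequality through $G_{r_u}(u)$ and $\widetilde G_{r_u,r_y}(u)$, the convex-combination and $L^\infty$ bound with Lemma~\ref{lemma:output_domain_pou} for the middle term, and, for the discretization term, Lemma~\ref{lemma:softmax_lipschitz} with logit gap $2M_uC_{\rm quad}n_x^{-\alpha/d_1}$ from Lemma~\ref{lemma:quadrature}, combined with $|\widetilde v_k(\yb)|\le B_{\cV}$.
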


Lemma \ref{lemma:total_approximation_error} is proved in Appendix \ref{app:proof:lemma:total_approximation_error}. 

\begin{remark}
In particular, since $M_u = \frac{1}{3r_u^2}\log\left(\frac{2B_{\cV}C_U}{L_Gr_u^\gamma}\right)$ and $C_U\le C_1r_u^{-d_{\cU}}$, we have $\log C_U \le \log C_1+d_{\cU}\log\frac{1}{r_u}$. Therefore, the dependence of the required input grid size $n_x$ on the intrinsic dimension $d_{\cU}$ arises through the logarithmic factor in $M_u$.
\end{remark}

\subsection{Transformer Realization of the Oracle Approximation}
\label{app:transformer_realization}

We now construct a Transformer network with $L=2$ encoder blocks to
realize the discrete two-level Softmax POU approximation. Let the
embedding dimension be $D=d_1+d_2+n_x+9$, and let the total sequence
length be $P=C_UC_V+1$. We assume $P\ge n_x+2$, so that the first
$n_x$ columns store the discretized input information, the
$(n_x+1)$-st column stores the query point, and the last column is
reserved as the null column. After the first encoder block, the first
$P-1=C_UC_V$ active columns are indexed by anchor pairs
$(k,l)\in[C_U]\times[C_V]$, with $j(k,l)=(k-1)C_V+l$.

\begin{lemma}[Pair-token Preprocessing]
\label{lemma:pair_token_preprocess}
Let $S_{n_x}(u)$ be the discretized representation of $u\in\cU$, and
let $\yb\in\Omega_{\cV}$ be the query point. Set
$$
\theta_j:=\frac{2\pi j}{P},
\qquad j\in[P].
$$
There exists a fixed preprocessing operator $\mathcal P$ such that
$\Zb_0=\mathcal P(S_{n_x}(u),\yb)\in\RR^{D\times P}$ has the form
\begin{equation}
\label{eq:pair_token_preprocess_z0}
\Zb_0=
\begin{bmatrix}
\xb_1 & \cdots & \xb_{n_x} & \mathbf 0_{d_1} & \mathbf 0_{d_1\times(P-n_x-2)} & \mathbf 0_{d_1}
\\
\mathbf 0_{d_2} & \cdots & \mathbf 0_{d_2} & \yb & \mathbf 0_{d_2\times(P-n_x-2)} & \mathbf 0_{d_2}
\\
u(\xb_1) & \cdots & u(\xb_{n_x}) & 0 & \mathbf 0_{1\times(P-n_x-2)} & 0
\\
0 & \cdots & 0 & 0 & \mathbf 0_{1\times(P-n_x-2)} & 1
\\
\mathbf 0_{n_x+3} & \cdots & \mathbf 0_{n_x+3} & \mathbf 0_{n_x+3} & \mathbf 0_{(n_x+3)\times(P-n_x-2)} & \mathbf 0_{n_x+3}
\\
1 & \cdots & 1 & 1 & \mathbf 1_{1\times(P-n_x-2)} & 1
\\
1 & \cdots & 1 & 1 & \mathbf 1_{1\times(P-n_x-2)} & 0
\\
\sin(\theta_1) & \cdots & \sin(\theta_{n_x}) & \sin(\theta_{n_x+1}) & \sin(\theta_{n_x+2}),\ldots,\sin(\theta_{P-1}) & \sin(\theta_P)
\\
\cos(\theta_1) & \cdots & \cos(\theta_{n_x}) & \cos(\theta_{n_x+1}) & \cos(\theta_{n_x+2}),\ldots,\cos(\theta_{P-1}) & \cos(\theta_P)
\end{bmatrix}.
\end{equation}
The $(d_1+d_2+2)$-th row is the null-column indicator
$(0,\ldots,0,1)$. This row is used only for routing in the first MHA
block and is not preserved after that block.
\end{lemma}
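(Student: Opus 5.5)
The plan is to realize $\mathcal P$ directly in the form \eqref{eq:transformer_preprocessing}, $\Zb_0=\Wb_E\Xb(u,\yb)+\bb_E\mathbf 1_P^\top+\Pb_{\rm pos}$. Only the first three row blocks of $\Zb_0$ depend on the data $(S_{n_x}(u),\yb)$. All remaining rows are data-independent and can be placed in $\bb_E\mathbf 1_P^\top+\Pb_{\rm pos}$.

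We first fix the input token matrix. We take $\Xb(u,\yb)$ from \eqref{eq:transformer_input_tokens}, padded with zero columns up to length $P$; this padding is possible because of the compatibility condition $P\ge n_x+2$. Its first $n_x$ columns are $(\xb_j,\mathbf 0_{d_2},u(\xb_j))$, column $n_x+1$ is $(\mathbf 0_{d_1},\yb,0)$, and columns $n_x+2,\ldots,P$ are zero. In particular the last column is zero, since $P\ge n_x+2$ places it strictly after the query column.

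Next we choose
$$\Wb_E=\begin{bmatrix}\mathbf I_{d_1+d_2+1}\\ \mathbf 0_{(n_x+7)\times(d_1+d_2+1)}\end{bmatrix}\in\RR^{D\times(d_1+d_2+1)}.$$
This copies the three data row blocks into rows $1,\ldots,d_1+d_2+1$ and leaves the other $D-(d_1+d_2+1)=n_x+8$ rows zero. The row count checks out: $d_1+d_2+1$ data rows, plus one null indicator, plus $n_x+3$ zero rows, plus the all-ones row, the active indicator, the sine row and the cosine row, gives $d_1+d_2+n_x+9=D$.

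For the bias we let $\bb_E$ have a $1$ in the all-ones row and in the active-indicator row, with all other entries zero. Then $\bb_E\mathbf 1_P^\top$ puts ones across both of these rows in every column.

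The positional encoding $\Pb_{\rm pos}$ supplies the remaining entries. In the null-indicator row (row $d_1+d_2+2$) it has the vector $(0,\ldots,0,1)$. In the active-indicator row it has $(0,\ldots,0,-1)$, which cancels the bias in the last column and yields the pattern $(1,\ldots,1,0)$. In the sine and cosine rows it has $\sin\theta_j$ and $\cos\theta_j$ in column $j$. All of its other entries are zero.

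Summing the three pieces reproduces \eqref{eq:pair_token_preprocess_z0} column by column, including the zero padding columns $n_x+2,\ldots,P-1$.

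The only delicate point is bookkeeping, and there is no analytic obstacle. We must check that $\Pb_{\rm pos}$ and $\bb_E$ are independent of $u$ and $\yb$, so that $\mathcal P$ is a genuinely fixed map. We must also check that the last column, reserved as the null column, does not collide with the query column; this is exactly what $P\ge n_x+2$ guarantees. The remark that the null-indicator row is used only for routing is a statement about later blocks and requires nothing at this stage.
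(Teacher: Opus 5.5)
Your proposal is correct and follows the paper's proof: the same $\Wb_E$ copying the $d_1+d_2+1$ data rows, the same zero-padded $\Xb(u,\yb)$, and a data-independent indicator/sinusoidal encoding. The only difference is cosmetic: the paper takes $\bb_E=\mathbf 0_D$ and puts $\mathbf 1_P^\top$ and $\mathbf 1_P^\top-\mathbf e_P^\top$ directly into $\Pb_{\rm pos}$ instead of splitting them between bias and encoding, and your displayed zero block in $\Wb_E$ should be $(n_x+8)\times(d_1+d_2+1)$, matching your own count $D-(d_1+d_2+1)=n_x+8$.
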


Lemma~\ref{lemma:pair_token_preprocess} is proved in Appendix~\ref{app:proof:lemma:pair_token_preprocess}.

\begin{lemma}[Parallel Pair-token Feature Construction via MHA]
\label{lemma:pair_token_affine_mha}
Let $\Zb_0\in\RR^{D\times P}$ be defined as in
Lemma~\ref{lemma:pair_token_preprocess}.
For every active column $j\in[P-1]$, write
$$
k_j:=\left\lceil\frac{j}{C_V}\right\rceil,
\qquad
l_j:=j-(k_j-1)C_V.
$$
Define the target feature terms
\begin{align*}
B_{i,k}(u)
&:=2M_u w_i u_k(\xb_i)u(\xb_i),
&& i\in[n_x],\ k\in[C_U],\\
N_k
&:=M_u\|u_k\|_{L^2(\Omega_{\cU})}^2,
&& k\in[C_U], \\
Q_l(\yb)
&:=2M_y\yb^\top\zb_l-M_y\|\zb_l\|_2^2,
&& l\in[C_V],\\
A_{k,l}
&:=G(u_k)(\zb_l),
&& k\in[C_U],\ l\in[C_V].
\end{align*}

Set $c_P:=1-\cos\left(\frac{2\pi}{P}\right)$ and define
\begin{equation}
\label{eq:B_aff_definition}
B_{\rm aff} := 1 + (2B_{\cU}+3B_{\cU}^2)M_u + 3d_2M_y + B_{\cV}.
\end{equation}
For $M>1$, define
$$
\varepsilon_{\rm feat} := 2P^2B_{\rm aff}e^{-M}.
$$
If $Pe^{-M}\le \frac14$, then there exists a first-block MHA layer
$A_1:\RR^{D\times P}\to\RR^{D\times P}$ with
$$
H^1=(n_x+3)P+2,
\qquad
d_k^1=5,
\qquad
d_v^1=2,
$$
such that
$$
\widehat{\Zb}_1=A_1(\Zb_0)
$$
has the following column representations.
\begin{equation}
\label{eq:first_block_mha_output}
(\widehat{\Zb}_1)_{:,j}
=
\begin{bmatrix}
\widetilde B_{1,k_j,l_j}\\
\vdots\\
\widetilde B_{n_x,k_j,l_j}\\
\widetilde Q_{k_j,l_j}\\
\widetilde N_{k_j,l_j}\\
\widetilde A_{k_j,l_j}\\
\mathbf 0_{D-n_x-7}\\
1\\
a_M\\
\lambda_{\rm id}\sin(\theta_j)\\
\lambda_{\rm id}\cos(\theta_j)
\end{bmatrix},
\quad j\in[P-1],
\qquad
(\widehat{\Zb}_1)_{:,P}
=
\begin{bmatrix}
\delta_1^B\\
\vdots\\
\delta_{n_x}^B\\
\delta^Q\\
\delta^N\\
\delta^A\\
\mathbf 0_{D-n_x-7}\\
1\\
0\\
\lambda_{\rm id}\sin(\theta_P)\\
\lambda_{\rm id}\cos(\theta_P)
\end{bmatrix}.
\end{equation}

The output components satisfy the following bounds.
\begin{enumerate}
\item \textbf{Feature outputs.}
For every $u\in\cU$, $\yb\in\Omega_{\cV}$, and $j\in[P-1]$,
$$
\max\left\{
\max_{i\in[n_x]}
\left|
\widetilde B_{i,k_j,l_j}-B_{i,k_j}(u)
\right|,
\left|
\widetilde Q_{k_j,l_j}-Q_{l_j}(\yb)
\right|,
\left|
\widetilde N_{k_j,l_j}-N_{k_j}
\right|,
\left|
\widetilde A_{k_j,l_j}-A_{k_j,l_j}
\right|
\right\}
\le
\varepsilon_{\rm feat}.
$$
Moreover, the feature leakage in the null column satisfies
$$
\max\left\{
\max_{i\in[n_x]}|\delta_i^B|,
|\delta^Q|,
|\delta^N|,
|\delta^A|
\right\}
\le
\varepsilon_{\rm feat}.
$$

\item \textbf{Structural outputs.}
There exist scalars $a_M,\lambda_{\rm id}>0$ such that, for every
$j\in[P]$,
$$
(\widehat{\Zb}_1)_{D-3,j}=1,
\qquad
(\widehat{\Zb}_1)_{D-2,j} = a_M\,1_{\{j<P\}},
$$
and
$$
(\widehat{\Zb}_1)_{D-1,j} = \lambda_{\rm id}\sin(\theta_j),
\qquad
(\widehat{\Zb}_1)_{D,j} = \lambda_{\rm id}\cos(\theta_j).
$$
Furthermore, under $Pe^{-M}\le \frac14$, $a_M^{-1}\le 2$ and $\lambda_{\rm id}^{-1}\le 2$.

\item \textbf{Parameter bounds.}
Define
$$
M_{A_1}
:=
\max\left\{
\|\Wb_1^O\|_{\max},
\max_{h\in[H^1]}
\max\left\{
\|\Qb_1^h\|_{\max},
\|\Kb_1^h\|_{\max},
\|\Vb_1^h\|_{\max}
\right\}
\right\}.
$$
Then
$$
M_{A_1} \le \max\left\{1,\, B_{\rm aff},\, \frac{5M}{c_P}\right\}.
$$
\end{enumerate}
\end{lemma}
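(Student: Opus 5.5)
The construction hinges on using each attention head as an approximate ``pointer'' that copies a prescribed source column into a prescribed target column. The positional rows $(\sin\theta_j,\cos\theta_j)$ in $\Zb_0$ make this possible. For a target column $j$ and a designated source column $m(j)$, I would choose query/key matrices so that the score of key $i$ against query $j$ is
\[
\frac{M}{c_P}\cos(\theta_i-\theta_{m(j)}) = \frac{M}{c_P}\bigl(\sin\theta_i\sin\theta_{m(j)}+\cos\theta_i\cos\theta_{m(j)}\bigr).
\]
The score gap between $i=m(j)$ and any $i\neq m(j)$ is then at least $M$, so the softmax puts mass at least $1-Pe^{-M}$ on column $m(j)$. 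Every other column receives weight at most $e^{-M}$.

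The difficulty is that the source index $m(j)$ must depend on $j$, for instance $m(j)=i$ for the $B$-feature reading $u(\xb_i)$, but $m(j)=n_x+1$ for the query row. The query vector must therefore contain $\sin\theta_{m}$ and $\cos\theta_{m}$ as linear functions of column $j$'s content. When $m$ is constant across $j$ this is immediate, using the constant-one row. This is why I would dedicate heads to fixed source indices. One family of $(n_x+3)P$ heads, indexed by (feature slot, target column), handles the routing, and the extra dimensions of $d_k^1=5$ let the key also use the null-indicator row and the all-ones rows. The null-indicator row lets a head dump its attention onto the null column $P$ (with value $0$) for all target columns except the intended one. Hence each head is localized to a single target column $j$ and a single source column.

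Concretely, for each slot $i\in[n_x]$ and active column $j$ with $k=k_j$, I would design a head whose query at column $j$ points to source $i$, while at columns $j'\ne j$ it points to the null column. This uses the $\sin/\cos$ of $\theta_j$ against a fixed vector, giving a bump at column $j$. The value matrix reads $u$ and multiplies by the constant $2M_uw_iu_k(\xb_i)$, which is baked into $\Vb$ or $\Wb^O$. Heads for $Q_l(\yb)$ copy $\yb$ from column $n_x+1$ with value weights $2M_y\zb_l$, and add the constant $-M_y\|\zb_l\|^2$ via the ones row. Heads for $N_k$ and $A_{k,l}$ are pure constants read from the ones row. The structural rows (the constant $1$, $a_M 1_{\{j<P\}}$ and $\lambda_{\rm id}(\sin,\cos)$) come from the remaining two heads. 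One is a near-identity head with a large diagonal score; its output is $\lambda_{\rm id}\ge 1/2$ times the positional rows, since the positional entry is reproduced up to softmax mass. The other is a head averaging the ``$1$ except null'' row, which produces $a_M$.

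For the error analysis, each output entry equals the intended value times the dominant attention weight, plus leakage. The leakage is bounded by $\sum_{i\ne m}$ weight times $|{\rm value}|\le Pe^{-M}\cdot B_{\rm aff}$. The deficit in the dominant weight is of the same order. Summing over the at most $P$ heads writing into a slot, via $\Wb^O$, gives at most $2P^2B_{\rm aff}e^{-M}=\varepsilon_{\rm feat}$. Here $B_{\rm aff}$ bounds every value magnitude using $|u|\le B_{\cU}$, $w_i\le1$, $\|\zb_l\|^2\le d_2$ and $|A|\le B_{\cV}$. The null column similarly has only leakage, which gives the bounds on $\delta$.

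The parameter bound follows because the entries are the score scale $M/c_P$ times bounded trigonometric entries, plus at most a factor of $5$ from the combined logits, together with the value constants bounded by $B_{\rm aff}$.

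The main obstacle is making the per-head localization to a single target column exact enough. The gap in $\cos(\theta_j-\theta_{j'})$ is only $c_P\sim P^{-2}$, which forces the scale $M/c_P$. Competition against the null column must also be handled uniformly, including at $j=P$. I would resolve both by adding the null-indicator row with weight $M$ in the key so that the comparison reduces to a two-way gap of size $M$.
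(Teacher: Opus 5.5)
Your overall architecture matches the paper's. It uses $(n_x+3)P$ heads indexed by (feature row, target column), with a fixed source column per row selected by constants $M_s\sin\theta_{i_r},M_s\cos\theta_{i_r}$ on the all-ones query row. It adds a target-column bump $M_\theta\cos(\theta_{j'}-\theta_j)$, uses the null column as a sink, lets $\Wb_1^O$ sum the $P$ heads of each row, and does the $2PB_{\rm aff}e^{-M}+(P-1)PB_{\rm aff}e^{-M}$ accounting.

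\textbf{Constants read from the all-ones row.} The construction as stated breaks where you read $-M_y\|\zb_l\|_2^2$, $N_k$ and $A_{k,l}$ from the all-ones row. The null column is a sink only if every value vector vanishes there, which you yourself assume (``with value $0$''). With the all-ones row, the $N$ and $A$ heads carry a value that is identical in every column. Each such head then outputs exactly $N_{k_j}$ (resp.\ $A_{k_j,l_j}$) in \emph{every} column, whatever the attention pattern. After $\Wb_1^O$ sums the slot's $P$ heads, every column receives $\sum_{j<P}N_{k_j}$. Likewise each $Q$ head leaks $-M_y\|\zb_{l_j}\|_2^2$ with weight close to $1$ into all non-target columns, an accumulated error $M_y\sum_{j\ne j'}\|\zb_{l_j}\|_2^2$ rather than $O(\varepsilon_{\rm feat})$.

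The paper's fix is to take all constants from the active-indicator row $\mathbf 1_P^\top-\mathbf e_P^\top$ (row $D-2$ of $\Zb_0$) at an active source column. It uses column $1$ for $N$ and $A$, and column $n_x+1$ for $Q$, so that $\mathbf w_{r,j}^\top(\Zb_0)_{:,P}=0$. The $B$ heads are fine because $P\ge n_x+2$ makes the $u$-row vanish at column $P$. The same distinction is needed in the key: the bump must be paired with the active-indicator row. Paired with the all-ones row, it shifts the null key and the active keys equally and cancels in the column softmax.

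\textbf{Calibrating the null score.} A null-key weight of $M$ does not work as stated. With source-locator scale $M_s=M/c_P$, at a non-target column adjacent to $j$ the designated source still scores $M_\theta(1-c_P)+M_s$; at the target it scores $M_\theta+M_s$. The null score must lie inside this window of width $M_\theta c_P$, whose location scales like $M/c_P$. So a fixed weight $M$ fails once $P$ is moderately large, unless you re-center all active scores, which you do not describe. The paper takes $M_\theta=4M/c_P$ and $M_0=M_\theta(1-c_P/2)+M_s$, the midpoint, giving a gap of $2M$ on both sides. The bound $M_0\le 5M/c_P$ is exactly where the $5M/c_P$ in the parameter estimate comes from.

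\textbf{Exactness of the structural outputs.} The structural outputs are exact identities, and your sketch does not deliver them. Softmax weights are strictly positive, so plainly averaging the row $(1,\dots,1,0)$ gives a strictly positive entry at column $P$, not $0$. The paper uses scores $M\chi_t\chi_{j'}+M\nu_t\nu_{j'}$ and the corrected value $\chi_t-(P-1)e^{-M}\nu_t$, which makes the null-column average exactly $0$. Lemma~\ref{lemma:pair_token_joint_logit_ffn} uses this exact zero to push $\widehat\Xi_P$ down by $M_{\rm out}$.

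Similarly, the positional head returns exactly $\lambda_{\rm id}(\sin\theta_j,\cos\theta_j)$ with a $j$-independent $\lambda_{\rm id}$ only because of the grid's symmetry. The scores $M_{\rm id}\cos(\theta_t-\theta_{j'})$ give a circulant attention matrix with an even kernel on the uniform grid $\theta_t=2\pi t/P$. The weighted average is therefore a rotation by $\theta_{j'}$ of a fixed vector whose sine component vanishes. Your justification, ``reproduced up to softmax mass'', would only give a $j$-dependent approximation.
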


Lemma~\ref{lemma:pair_token_affine_mha} is proved in
Appendix~\ref{app:proof:lemma:pair_token_affine_mha}.

\begin{lemma}[Pair-token Joint-logit Assembly via Point-wise FFN]
\label{lemma:pair_token_joint_logit_ffn}
Let $\widehat{\Zb}_1\in\RR^{D\times P}$ be the output matrix from Lemma~\ref{lemma:pair_token_affine_mha}. For every active column $j\in[P-1]$, the ideal joint logit can be written as
\begin{equation}
\label{eq:ideal_joint_logit}
\begin{aligned}
\Xi_j
&:=
\sum_{i=1}^{n_x} B_{i,k_j}(u) - N_{k_j} + Q_{l_j}(\yb) \\
&=
2M_u\langle u,u_{k_j}\rangle_{n_x} - M_u\|u_{k_j}\|_{L^2(\Omega_{\cU})}^2 + 2M_y\yb^\top\zb_{l_j} - M_y\|\zb_{l_j}\|_2^2.
\end{aligned}
\end{equation}
Let
$$
M_{\rm out} = M+(n_x+2)\left(B_{\rm aff}+2\varepsilon_{\rm feat}\right).
$$
For the null column, define
$$
\widehat\Xi_P = \sum_{i=1}^{n_x}\delta_i^B -\delta^N+\delta^Q-M_{\rm out}.
$$
There exists a point-wise FFN layer
$F_1:\RR^{D\times P}\to\RR^{D\times P}$ with hidden width
$d_{\rm ff}^1=2D$ such that $\Zb_1=F_1(\widehat{\Zb}_1)$ is of the form
\begin{equation}
\label{eq:first_block_ffn_output}
\Zb_1
=
\begin{bmatrix}
\widehat\Xi_1 & \cdots & \widehat\Xi_{P-1} & \widehat\Xi_P
\\
\widetilde A_1 & \cdots & \widetilde A_{P-1} & \delta^A
\\
1 & \cdots & 1 & 1
\\
\multicolumn{4}{c}{\mathbf 0_{(D-3)\times P}}
\end{bmatrix},
\end{equation}
where, for $j=j(k_j,l_j)\in[P-1]$,
$$
\widehat\Xi_j := \sum_{i=1}^{n_x}\widetilde B_{i,k_j,l_j} - \widetilde N_{k_j,l_j} + \widetilde Q_{k_j,l_j},
\qquad
\widetilde A_j := \widetilde A_{k_j,l_j}.
$$
For every $j=j(k_j,l_j)\in[P-1]$,
\begin{equation}
\label{eq:joint_logit_ffn_error}
|\widehat\Xi_j-\Xi_j| \le (n_x+2)\varepsilon_{\rm feat},
\qquad
|\widetilde A_j-A_{k_j,l_j}| \le \varepsilon_{\rm feat}.
\end{equation}

For the null column,
\begin{equation}
\label{eq:joint_logit_null_separation}
\widehat\Xi_P - \max_{j\in[P-1]}\widehat\Xi_j \le -M,
\qquad
|\delta^A| \le \varepsilon_{\rm feat}.
\end{equation}

Moreover, defining
$$
M_{F_1} :=
\max\left\{
\|\Wb_1^1\|_{\max},
\|\mathbf b_1^1\|_\infty,
\|\Wb_1^2\|_{\max},
\|\mathbf b_1^2\|_\infty
\right\},
$$
the FFN parameters can be chosen so that
\begin{equation}
\label{eq:first_ffn_parameter_bound}
M_{F_1} \le 2M_{\rm out}.
\end{equation}

\end{lemma}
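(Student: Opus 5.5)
We realize $F_1$ as an essentially linear map of each column of $\widehat{\Zb}_1$, written through ReLUs using $t=\sigma(t)-\sigma(-t)$. This costs hidden width $2D$: each of the $D$ coordinates gets a positive and a negative copy. The only non-linear ingredient is the null-column shift by $-M_{\rm out}$. We implement it with the structural rows: row $D-3$ is identically $1$, and row $D-2$ equals $a_M 1_{\{j<P\}}$. Since $a_M^{-1}\le2$, the indicator $1_{\{j<P\}}$ equals $a_M^{-1}(\widehat{\Zb}_1)_{D-2,j}$. The constant $-M_{\rm out}(1-1_{\{j<P\}})$ is therefore the linear expression
\[
-M_{\rm out}(\widehat{\Zb}_1)_{D-3,j}+M_{\rm out}a_M^{-1}(\widehat{\Zb}_1)_{D-2,j}.
\]
It vanishes on active columns and equals $-M_{\rm out}$ on the null column.

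\textbf{Step 1 (construction).} Set the first layer to
\[
\Wb_1^1=\begin{bmatrix}\mathbf I_D\\-\mathbf I_D\end{bmatrix},\qquad \bb_1^1=\mathbf 0,
\]
so the hidden layer stores $\sigma(\widehat{\Zb}_1)$ and $\sigma(-\widehat{\Zb}_1)$. Choose the rows of $\Wb_1^2$ as follows, with $\bb_1^2=\mathbf 0$.
\begin{itemize}
\item Row $1$ computes
\[
\sum_{i\le n_x}(\text{row }i)-(\text{row }n_x+2)+(\text{row }n_x+1)-M_{\rm out}(\text{row }D-3)+M_{\rm out}a_M^{-1}(\text{row }D-2).
\]
Each coefficient appears with sign $+$ on the positive copy and $-$ on the negative copy.
\item Row $2$ copies row $n_x+3$, which holds $\widetilde A$ or $\delta^A$.
\item Row $3$ copies row $D-3$, the constant $1$.
\item All remaining rows are zero.
\end{itemize}
Reading off \eqref{eq:first_block_mha_output}, this gives exactly the form \eqref{eq:first_block_ffn_output}, with $\widehat\Xi_j$ as defined for active columns and $\widehat\Xi_P$ as defined for the null column. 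The largest entry of the weights is $\max\{1,M_{\rm out},M_{\rm out}a_M^{-1}\}\le 2M_{\rm out}$, which gives \eqref{eq:first_ffn_parameter_bound}. If one prefers, $a_M^{-1}$ can be replaced by any fixed constant in $[1,2]$ after rescaling. Also $M_{\rm out}\ge1$ because $M>1$.

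\textbf{Step 2 (errors).} The identity \eqref{eq:ideal_joint_logit} follows from $\sum_i B_{i,k}(u)=2M_u\sum_i w_iu_k(\xb_i)u(\xb_i)=2M_u\langle u,u_k\rangle_{n_x}$ and the definitions of $N_k$ and $Q_l$. Next, $\widehat\Xi_j-\Xi_j$ is a sum of $n_x+2$ feature errors, each bounded by $\varepsilon_{\rm feat}$ through Lemma~\ref{lemma:pair_token_affine_mha}. This gives \eqref{eq:joint_logit_ffn_error}; the bound on $\widetilde A$ comes directly from the same lemma.

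\textbf{Step 3 (null separation), the main point.} We bound the logits in magnitude. Each $|B_{i,k}|\le 2M_uw_iB_{\cU}^2$, and $\sum_i w_i=1$, so $|\sum_iB_{i,k}|\le 2M_uB_{\cU}^2$. Also $|N_k|\le M_uB_{\cU}^2$ and $|Q_l|\le 2M_y\sqrt{d_2}+M_yd_2\le 3d_2M_y$. Hence $|\Xi_j|\le B_{\rm aff}$, and so
\[
\max_j\widehat\Xi_j\ge -B_{\rm aff}-(n_x+2)\varepsilon_{\rm feat}.
\]
For the null column,
\[
\widehat\Xi_P\le(n_x+2)\varepsilon_{\rm feat}-M_{\rm out}.
\]
Subtracting, the gap is at most
\[
-M_{\rm out}+B_{\rm aff}+2(n_x+2)\varepsilon_{\rm feat}\le -M,
\]
using $M_{\rm out}=M+(n_x+2)(B_{\rm aff}+2\varepsilon_{\rm feat})$ and $n_x+2\ge1$. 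The leakage bound $|\delta^A|\le\varepsilon_{\rm feat}$ is inherited from Lemma~\ref{lemma:pair_token_affine_mha}. The only delicate point is getting crude enough magnitude bounds on the logits to fit inside $B_{\rm aff}$; the definition of $M_{\rm out}$ leaves ample slack for this.
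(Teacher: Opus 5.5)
Your proposal is correct and follows the paper's proof essentially line by line. It uses the same FFN ($\Wb_1^1$ stacking $I_D$ and $-I_D$, $\Wb_1^2=[\Rb_1,\,-\Rb_1]$, zero biases) with the same null-column shift $-M_{\rm out}(\text{row }D-3)+a_M^{-1}M_{\rm out}(\text{row }D-2)$, and the same $(n_x+2)\varepsilon_{\rm feat}$ error count. Your null-separation step differs only cosmetically: you bound the ideal logit $|\Xi_j|\le B_{\rm aff}$ using $\sum_i w_i=1$ and then perturb, while the paper uses the cruder termwise bound $|\widehat\Xi_j|\le(n_x+2)(B_{\rm aff}+\varepsilon_{\rm feat})$; both fit inside $M_{\rm out}$.

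There are two harmless slips. First, $|\yb^\top\zb_l|$ is bounded by $d_2$, not $\sqrt{d_2}$, though your final bound $|Q_l|\le 3d_2M_y$ still holds. Second, drop the aside about replacing $a_M^{-1}$ by another constant: the coefficient on row $D-2$ is forced to be $a_M^{-1}M_{\rm out}$ so that the shift vanishes on active columns and equals $-M_{\rm out}$ on the null column.
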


Lemma~\ref{lemma:pair_token_joint_logit_ffn} is proved in
Appendix~\ref{app:proof:lemma:pair_token_joint_logit_ffn}.

\begin{lemma}[Pair-token Softmax Aggregation via MHA]
\label{lemma:pair_token_softmax_aggregation}
Let $\Zb_1\in\RR^{D\times P}$ be the output matrix from
Lemma~\ref{lemma:pair_token_joint_logit_ffn}.
For $j\in[P-1]$, define the active-only Softmax weights
$$
\widehat\omega_j
:=
\frac{\exp(\widehat\Xi_j)}
{\sum_{q=1}^{P-1}\exp(\widehat\Xi_q)}.
$$
There exists a single-head MHA layer
$A_2:\RR^{D\times P}\to\RR^{D\times P}$ with
$H^2=1$ and $d_k^2=d_v^2=1$ such that
$\widehat{\Zb}_2=A_2(\Zb_1)$ has the form
\begin{equation}
\label{eq:pair_token_softmax_output}
\widehat{\Zb}_2 =
\begin{bmatrix}
\widehat g(u,\yb)
&
\widehat g(u,\yb)
&
\cdots
&
\widehat g(u,\yb)
\\
\multicolumn{4}{c}{\mathbf 0_{(D-1)\times P}}
\end{bmatrix},
\end{equation}
where
$$
\widehat g(u,\yb)
=
\sum_{j=1}^{P-1}\gamma_j\widetilde A_j
+
\gamma_P\delta^A,
$$
and
$$
\gamma_j
=
\frac{\exp(\widehat\Xi_j)}
{\sum_{q=1}^{P}\exp(\widehat\Xi_q)},
\qquad
j\in[P].
$$
The attention weights satisfy
$$
0\le\gamma_P\le e^{-M},
$$
and
$$
\left|
\widehat g(u,\yb)
-
\sum_{j=1}^{P-1}\widehat\omega_j\widetilde A_j
\right|
\le
\left(B_{\cV}+2\varepsilon_{\rm feat}\right)e^{-M}.
$$

Moreover, defining
$$
M_{A_2}
:=
\max\left\{
\|\Wb_2^O\|_{\max},
\|\Qb_2\|_{\max},
\|\Kb_2\|_{\max},
\|\Vb_2\|_{\max}
\right\},
$$
the MHA parameters can be chosen so that
$$
M_{A_2}\le1.
$$
\end{lemma}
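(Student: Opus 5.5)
The construction is a single attention head in which the query is a constant, the key reads the logit row, and the value reads the anchor-value row. I will exploit the structure of $\Zb_1$ from Lemma~\ref{lemma:pair_token_joint_logit_ffn}: row $1$ carries the logits $\widehat\Xi_j$ (including the null logit $\widehat\Xi_P$), row $2$ carries $\widetilde A_j$ and $\delta^A$, and row $3$ is the constant $1$ in every column.

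\textbf{Parameters.} Take $d_k^2=d_v^2=1$ and set $\Kb_2=\mathbf e_1^\top$, $\Qb_2=\mathbf e_3^\top$ and $\Vb_2=\mathbf e_2^\top$. Then the score matrix is
$$
[\Sb_2(\Zb_1)]_{i,j}=(\Kb_2\Zb_1)_i(\Qb_2\Zb_1)_j=\widehat\Xi_i\cdot 1=\widehat\Xi_i .
$$
The column-wise softmax in \eqref{eq:attention_weights} therefore gives the same vector $\gamma=\operatorname{softmax}(\widehat\Xi_1,\ldots,\widehat\Xi_P)$ in every column $j$. The head output is $\Vb_2\Zb_1\Ab_2=\bigl(\sum_{i}\gamma_i(\Zb_1)_{2,i}\bigr)\mathbf 1_P^\top$. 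This equals $\widehat g(u,\yb)\mathbf 1_P^\top$, because $(\Zb_1)_{2,i}=\widetilde A_i$ for $i<P$ and $(\Zb_1)_{2,P}=\delta^A$. Setting $\Wb_2^O=\mathbf e_1\in\RR^{D\times 1}$ places this row in the first coordinate and zeros elsewhere, which is exactly \eqref{eq:pair_token_softmax_output}. All entries of these matrices lie in $\{0,1\}$, so $M_{A_2}\le 1$.

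\textbf{Null weight.} By \eqref{eq:joint_logit_null_separation}, $\widehat\Xi_P\le \max_{j<P}\widehat\Xi_j-M$. Hence
$$
\gamma_P=\frac{e^{\widehat\Xi_P}}{\sum_q e^{\widehat\Xi_q}}\le \frac{e^{\widehat\Xi_P}}{e^{\max_{j<P}\widehat\Xi_j}}\le e^{-M}.
$$

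\textbf{Comparison with the active-only weights.} For $j<P$ we have $\gamma_j=(1-\gamma_P)\widehat\omega_j$. Therefore
$$
\widehat g-\sum_{j<P}\widehat\omega_j\widetilde A_j=\gamma_P\Bigl(\delta^A-\sum_{j<P}\widehat\omega_j\widetilde A_j\Bigr).
$$
We bound the two terms inside the bracket separately. The leakage satisfies $|\delta^A|\le\varepsilon_{\rm feat}$ by \eqref{eq:joint_logit_null_separation}. Each $|\widetilde A_j|\le |A_{k_j,l_j}|+\varepsilon_{\rm feat}\le B_{\cV}+\varepsilon_{\rm feat}$, using \eqref{eq:joint_logit_ffn_error} and $|G(u_k)(\zb_l)|\le B_{\cV}$ from Assumption~\ref{assum:output_space}. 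Since the $\widehat\omega_j$ form a convex combination, the bracket has absolute value at most $B_{\cV}+2\varepsilon_{\rm feat}$. Multiplying by $\gamma_P\le e^{-M}$ gives the stated bound.

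No step is a serious obstacle. The only point to check is that the architecture applies the softmax column-wise over the key index, so that a constant query row makes every column identical. This is exactly the convention in \eqref{eq:attention_score}--\eqref{eq:attention_weights}, where $\Sb=(\Kb\Zb)^\top(\Qb\Zb)$ and the softmax runs over the first index.
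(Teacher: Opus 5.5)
Your proposal is correct and follows the paper's proof essentially step for step. It uses the same choices $\Qb_2=\mathbf e_3^\top$, $\Kb_2=\mathbf e_1^\top$, $\Vb_2=\mathbf e_2^\top$, $\Wb_2^O=\mathbf e_1$, the same column-independent scores $\widehat\Xi_j$, the same null-weight bound from the separation property, and the identity $\gamma_j=(1-\gamma_P)\widehat\omega_j$; factoring out $\gamma_P$ directly instead of bounding $\sum_{j<P}|\gamma_j-\widehat\omega_j|=\gamma_P$ is only cosmetic and gives the same constant $(B_{\cV}+2\varepsilon_{\rm feat})e^{-M}$.
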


Lemma~\ref{lemma:pair_token_softmax_aggregation} is proved in Appendix~\ref{app:proof:lemma:pair_token_softmax_aggregation}.

\subsection{Proof of Theorem~\ref{thm:transformer_operator_approx}}
\label{app:proof:transformer_operator_approx}

\begin{proof}[Proof of Theorem~\ref{thm:transformer_operator_approx}]
\phantomsection
\label{proof:transformer_operator_approx}
Choose $r_u := \left(\frac{\epsilon}{4(2^\gamma+1)L_G}\right)^{1/\gamma}$ and $r_y := \left(\frac{\epsilon}{4(2^\beta+1)B_{\cV}}\right)^{1/\beta}$. By choosing $\epsilon_0$ sufficiently small, we have $r_u,r_y\in(0,1]$. Let $\{u_k\}_{k=1}^{C_U}$ be the corresponding internal $r_u$-cover of $\cU$, and let $\{\zb_l\}_{l=1}^{C_V}$ be an $r_y$-cover of $\Omega_{\cV}$. Set $P:=C_UC_V+1$ and define $M_u$ and $M_y$ as in \eqref{eq:Mu_definition} and \eqref{eq:My_definition}.

Choose $n_x := \left\lceil \left(\frac{16B_{\cV}M_uC_{\rm quad}}{\epsilon}\right)^{d_1/\alpha}\right\rceil$. By taking $\epsilon_0$ sufficiently small, we have
$n_x\ge(2s_{\rm quad})^{d_1}$, so Lemma~\ref{lemma:quadrature} applies. We also use the sequence-length compatibility condition $P\ge n_x+2$. Then Lemma~\ref{lemma:total_approximation_error} gives
\begin{equation}
\label{eq:discrete_pou_approximation_error}
\sup_{u\in\cU}
\left\|
G(u)-\widetilde G_{r_u,r_y,n_x}(u)
\right\|_{L^2(\rho_y)}
\le
\frac{3\epsilon}{4}.
\end{equation}
% The target target features are uniformly bounded by $B_{\rm aff}$ as defined in \eqref{eq:B_aff_definition}.

\medskip
\noindent\textbf{Transformer construction.}
For each $u\in\cU$ and $\yb\in\Omega_{\cV}$,
Lemma~\ref{lemma:pair_token_preprocess} constructs the initial sequence
matrix
$$
\Zb_0 = \mathcal P(S_{n_x}(u),\yb) \in\RR^{D\times P},
$$
with the structure in \eqref{eq:pair_token_preprocess_z0}. The first
$n_x$ columns store the discretized input information, the $(n_x+1)$-st
column stores the query point $\yb$, and the last column is reserved as
the null column.

For Encoder Block~1, Lemma~\ref{lemma:pair_token_affine_mha}
first constructs
$$
\widehat{\Zb}_1=A_1(\Zb_0),
$$
whose active columns contain approximations of the target features
$B_{i,k}(u)$, $Q_l(\yb)$, $N_k$, and $A_{k,l}$.

Applying Lemma~\ref{lemma:pair_token_joint_logit_ffn} then gives
$$
\Zb_1=F_1(\widehat{\Zb}_1),
$$
with the form in \eqref{eq:first_block_ffn_output}. In particular, for
each active column $j=j(k,l)$, the first two coordinates are the assembled
joint logit $\widehat\Xi_j$ and the corresponding anchor value
$\widetilde A_j$, respectively. The null column satisfies the separation
property \eqref{eq:joint_logit_null_separation}.

For Encoder Block~2, applying Lemma~\ref{lemma:pair_token_softmax_aggregation} gives a single-head MHA
layer $\widehat{\Zb}_2 = A_2(\Zb_1)$ such that the output has the form \eqref{eq:pair_token_softmax_output}. In particular, the first coordinate of every column equals
$$
\widehat g(u,\yb) = \sum_{j=1}^{P-1} \gamma_j\widetilde A_j + \gamma_P\delta^A,
$$
where $\widetilde A_j=\widetilde A_{k_j,l_j}$ and $\gamma_1,\ldots,\gamma_P$ are the attention weights constructed in
Lemma~\ref{lemma:pair_token_softmax_aggregation}.

The second point-wise FFN is chosen as the identity map. Specifically,
set
$$
d_{\rm ff}^2=2D,
\qquad
\Wb_2^1 =
\begin{bmatrix}
I_D\\
-I_D
\end{bmatrix},
\qquad
\mathbf b_2^1=\mathbf 0_{2D},
$$
$$
\Wb_2^2 =
\begin{bmatrix}
I_D & -I_D
\end{bmatrix},
\qquad
\mathbf b_2^2=\mathbf 0_D.
$$
Using $x=\sigma(x)-\sigma(-x)$ componentwise,
$$
\Zb_2 = F_2(\widehat{\Zb}_2) = \widehat{\Zb}_2.
$$
Finally, the scalar output is obtained by the linear readout $\mathbf c^\top\operatorname{vec}(\Zb_2)$, where $\mathbf c\in\RR^{DP}$ is the first standard basis vector. Hence
$$
\mathbf c^\top\operatorname{vec}(\Zb_2) = (\Zb_2)_{1,1} = \widehat g(u,\yb).
$$

\medskip
\noindent\textbf{Error bound analysis.}
For an active column $j=j(k,l)\in[P-1]$, let $\Xi_j$ be the ideal joint logit defined in \eqref{eq:ideal_joint_logit}. Since it is the sum
of the input-space and output-domain logits,
$$
\frac{\exp(\Xi_j)}
{\sum_{q=1}^{P-1}\exp(\Xi_q)}
=
\widetilde\beta_{k,n_x}(u)\widetilde\eta_l(\yb).
$$
By Lemma~\ref{lemma:pair_token_joint_logit_ffn},
$$
\max_{j\in[P-1]}
|\widehat\Xi_j-\Xi_j|
\le
(n_x+2)\varepsilon_{\rm feat},
$$
and
$$
|\widetilde A_j-A_{k_j,l_j}|
\le
\varepsilon_{\rm feat}.
$$
Hence Lemma~\ref{lemma:softmax_lipschitz} gives
$$
\sum_{j=1}^{P-1} \left|\widehat\omega_j - \widetilde\beta_{k_j,n_x}(u)\widetilde\eta_{l_j}(\yb)\right|
\le
2(n_x+2)\varepsilon_{\rm feat}.
$$
Using $|\widetilde A_j-A_{k_j,l_j}| \le \varepsilon_{\rm feat}$, $|A_{k_j,l_j}| \le B_{\cV}$ and \eqref{eq:discrete_two_level_pou}, we obtain
$$
\left|\sum_{j=1}^{P-1}\widehat\omega_j\widetilde A_j - \widetilde G_{r_u,r_y,n_x}(u)(\yb)\right|
\le
\varepsilon_{\rm feat} + 2B_{\cV}(n_x+2)\varepsilon_{\rm feat}
=
\left(1+2B_{\cV}(n_x+2)\right)\varepsilon_{\rm feat}.
$$
Combining this with
Lemma~\ref{lemma:pair_token_softmax_aggregation} yields
$$
\left|
\widehat g(u,\yb)
-
\widetilde G_{r_u,r_y,n_x}(u)(\yb)
\right|
\le
\left(
1+2B_{\cV}(n_x+2)
\right)\varepsilon_{\rm feat}
+
\left(
B_{\cV}+2\varepsilon_{\rm feat}
\right)e^{-M}.
$$
Since $\varepsilon_{\rm feat} = 2P^2B_{\rm aff}e^{-M}$ and $M>1$, it follows that
$$
\left|\widehat g(u,\yb) - \widetilde G_{r_u,r_y,n_x}(u)(\yb) \right|
\le
\left[B_{\cV} + 2P^2B_{\rm aff}\left(3+2B_{\cV}(n_x+2)\right)\right]e^{-M}.
$$
Choose $M := \log\left(\frac{4\left[B_{\cV} + 2P^2B_{\rm aff}\left(3+2B_{\cV}(n_x+2)\right)\right]}{\epsilon}\right)$. Then
$$
\left|\widehat g(u,\yb) - \widetilde G_{r_u,r_y,n_x}(u)(\yb)\right|
\le
\frac{\epsilon}{4}.
$$
Since the preceding bound holds uniformly for $\yb\in\Omega_{\cV}$ and $\rho_y$ is a probability distribution,
$$
\left\|
\widehat g(u,\cdot)
-
\widetilde G_{r_u,r_y,n_x}(u)
\right\|_{L^2(\rho_y)}
\le
\frac{\epsilon}{4}.
$$
Moreover, since $B_{\rm aff}\ge1$, $P\ge1$, and $\epsilon\le1$,
this choice also gives $Pe^{-M}\le\frac14$, so the conditions of the preceding Transformer construction lemmas are satisfied.

Combining the implementation error with
\eqref{eq:discrete_pou_approximation_error} and using the triangle
inequality in $L^2(\rho_y)$ gives
$$
\begin{aligned}
\sup_{u\in\cU}
\left\|
T_\Theta(S_{n_x}(u),\cdot)-G(u)
\right\|_{L^2(\rho_y)}
&\le
\sup_{u\in\cU}
\left\|
T_\Theta(S_{n_x}(u),\cdot)
-
\widetilde G_{r_u,r_y,n_x}(u)
\right\|_{L^2(\rho_y)}
\\
&\quad+
\sup_{u\in\cU}
\left\|
\widetilde G_{r_u,r_y,n_x}(u)-G(u)
\right\|_{L^2(\rho_y)}
\\
&\le
\frac{\epsilon}{4}
+
\frac{3\epsilon}{4}
=
\epsilon.
\end{aligned}
$$

\medskip
\noindent\textbf{Parameter complexity.}
We count the total architectural (dense) parameters $N_{\rm total}$ of the constructed Transformer:

\begin{itemize}
\item \textbf{Pre-processing:} The shared affine embedding, its bias, and the fixed structural-positional matrix have $D(P+d_1+d_2+2)$ parameters.

\item \textbf{MHA$_1$:} With $H^1=(n_x+3)P+2$, $d_k^1=5$, and $d_v^1=2$, there are $14D\bigl((n_x+3)P+2\bigr)$ parameters.

\item \textbf{FFN$_1$:} Hidden dimension $2D$ gives $4D^2+3D$ parameters.

\item \textbf{MHA$_2$:} One head with $d_k^2=d_v^2=1$ gives $4D$ parameters.

\item \textbf{FFN$_2$:} Hidden dimension $2D$ gives $4D^2+3D$ parameters.

\item \textbf{Readout:} The linear projection over $\operatorname{vec}(\Zb_2)$ has $DP$ parameters.
\end{itemize}

Summing these contributions yields
$$
\begin{aligned}
N_{\rm total}
&=
D(P+d_1+d_2+2) + 14D\bigl((n_x+3)P+2\bigr) + 2(4D^2+3D) + 4D + DP
\\
&=
8D^2 + 14D(n_x+3)P + D(d_1+d_2+2P+2) + 38D.
\end{aligned}
$$
Since $n_x\ge1$, $P\ge1$, and $D=d_1+d_2+n_x+9 \le (d_1+d_2+10)n_x$, we obtain
\begin{align*}
N_{\rm total}
&\le
8(d_1+d_2+10)^2n_x^2P
+
56(d_1+d_2+10)n_x^2P
+
(d_1+d_2+10)(d_1+d_2+4)n_x^2P
+
38(d_1+d_2+10)n_x^2P
\\
&=
(d_1+d_2+10)(9d_1+9d_2+178)n_x^2P
\\
&=:C_{\rm arch}n_x^2P.
\end{align*}
where $C_{\rm arch} := (d_1+d_2+10)(9d_1+9d_2+178)$. By \eqref{eq:input_covering} and the choices of $r_u$ and $r_y$,
$$
C_U
\le
C_1
\left(
4(2^\gamma+1)L_G
\right)^{d_{\cU}/\gamma}
\epsilon^{-d_{\cU}/\gamma}.
$$
Since $\Omega_{\cV}\subseteq[0,1]^{d_2}$ is compact, it admits an
internal $r_y$-cover with
$$
C_V
\le
C_2 r_y^{-d_2},
$$
where $C_2>0$ depends only on $d_2$. Therefore, by the choice of $r_y$,
$$
C_V
\le
C_2
\left(
4(2^\beta+1)B_{\cV}
\right)^{d_2/\beta}
\epsilon^{-d_2/\beta}.
$$
for a constant $C_2>0$ depending only on $d_2$. Hence
\begin{equation}
\label{eq:P_epsilon_bound}
P=C_UC_V+1
\le
C_P
\epsilon^{-d_{\cU}/\gamma-d_2/\beta},
\end{equation}
where 
\begin{equation}
\label{eq:CP_definition}
C_P
:=
1+C_1C_2
\left(4(2^\gamma+1)L_G\right)^{d_{\cU}/\gamma}
\left(4(2^\beta+1)B_{\cV}\right)^{d_2/\beta}.
\end{equation}

By \eqref{eq:Mu_definition}, \eqref{eq:input_covering}, and the choice of
$r_u$, we have
$$
M_u
\le
\frac{1}{3}
\left(
\frac{4(2^\gamma+1)L_G}{\epsilon}
\right)^{2/\gamma}
\log\left[
\frac{2B_{\cV}C_1}{L_G}
\left(
\frac{4(2^\gamma+1)L_G}{\epsilon}
\right)^{(d_{\cU}+\gamma)/\gamma}
\right].
$$
Hence, for $\epsilon\le e^{-1}$,
\begin{equation}
\label{eq:Mu_epsilon_bound}
    M_u \le C_{M_u}\epsilon^{-2/\gamma}\log\frac1\epsilon,
\end{equation}
where
$$
C_{M_u}
=
\frac{\left(4(2^\gamma+1)L_G\right)^{2/\gamma}}{3}
\left[
\max\left\{
\log\left(
\frac{2B_{\cV}C_1}{L_G}
\left(4(2^\gamma+1)L_G\right)^{(d_{\cU}+\gamma)/\gamma}
\right),
0
\right\}
+
1+\frac{d_{\cU}}{\gamma}
\right].
$$
Therefore, by the choice of $n_x$,
\begin{equation}
\label{eq:nx_epsilon_bound}
n_x
\le
C_x\epsilon^{-\frac{d_1}{\alpha}(1+\frac{2}{\gamma})}\left(\log\frac1\epsilon\right)^{d_1/\alpha},
\end{equation}
where 
\begin{equation}
\label{eq:Cx_definition}
C_x
:=
2\left(
16B_{\cV}C_{\rm quad}C_{M_u}
\right)^{d_1/\alpha}.
\end{equation}

Combining \eqref{eq:P_epsilon_bound} and \eqref{eq:nx_epsilon_bound} with
$N_{\rm total}\le C_{\rm arch}n_x^2P$ gives
$$
N_{\rm total}
\le
C_N
\epsilon^{-s}
\left(\log\frac1\epsilon\right)^{2d_1/\alpha},
$$
where
\begin{equation}
\label{eq:CN_definition}
C_N:=C_{\rm arch}C_x^2C_P.
\end{equation}
\medskip
\noindent\textbf{Parameter magnitude.}
By Lemmas~\ref{lemma:pair_token_affine_mha},
\ref{lemma:pair_token_joint_logit_ffn}, and
\ref{lemma:pair_token_softmax_aggregation},
$$
M_{A_1}
\le
\max\left\{
1,\,
B_{\rm aff},\,
\frac{5M}{c_P}
\right\}.
$$
$$
M_{F_1}
\le
2M_{\rm out}
=
2M
+
2(n_x+2)
\left(
B_{\rm aff}+2\varepsilon_{\rm feat}
\right),
\qquad
M_{A_2}\le1.
$$
The preprocessing parameters, the second identity FFN, and the final
readout have magnitude at most $1$. By the choice of $M$ in the error-bound analysis,
$$
\varepsilon_{\rm feat}
\le
\frac{\epsilon}{12}
\le
\frac{1}{12}.
$$
Since $B_{\rm aff}\ge1$,
$$
B_{\rm aff}+2\varepsilon_{\rm feat}
\le
B_{\rm aff}+\frac{1}{6}
\le
\frac{7}{6}B_{\rm aff}.
$$
Using $n_x\ge1$, so that $n_x+2\le3n_x$, we obtain
$$
\begin{aligned}
M_{F_1}
&\le
2M
+
2(n_x+2)
\left(
B_{\rm aff}+2\varepsilon_{\rm feat}
\right)
\\
&\le
2M
+
2(3n_x)\frac{7}{6}B_{\rm aff}
\\
&=
2M+7n_xB_{\rm aff}.
\end{aligned}
$$
Moreover,
$$
c_P
=
2\sin^2\left(\frac{\pi}{P}\right)
\le2,
$$
and hence
$$
M
\le
2\frac{M}{c_P}.
$$
Therefore,
$$
M_{F_1}
\le
4\frac{M}{c_P}
+
7n_xB_{\rm aff}.
$$
Since $n_x\ge1$ and $B_{\rm aff}\ge1$,
$$
M_{A_1}
\le
5\max\left\{
\frac{M}{c_P},
n_xB_{\rm aff}
\right\}.
$$
Consequently,
$$
M_{\max}
\le
11\max\left\{
1,\frac{M}{c_P},n_xB_{\rm aff}
\right\}.
$$

The elementary bound $\sin(\pi/P)\ge2/P$ for $P\ge2$ gives
\begin{equation}
\label{eq:cP_inverse_bound}
\frac{1}{c_P}
\le
\frac{P^2}{8}.
\end{equation}

Moreover, by \eqref{eq:My_definition}, the choice of $r_y$, and
$C_V\le C_2r_y^{-d_2}$, for $\epsilon\le e^{-1}$,
\begin{equation}
\label{eq:My_epsilon_bound}
M_y
\le
C_{M_y}
\epsilon^{-2/\beta}
\log\frac1\epsilon,
\end{equation}
where
$$
C_{M_y}
=
\frac{\left(4(2^\beta+1)B_{\cV}\right)^{2/\beta}}{3}
\left[
\max\left\{
\log\left(
2C_2
\left(4(2^\beta+1)B_{\cV}\right)^{(d_2+\beta)/\beta}
\right),
0
\right\}
+
1+\frac{d_2}{\beta}
\right].
$$

By \eqref{eq:B_aff_definition},
\eqref{eq:Mu_epsilon_bound}, and
\eqref{eq:My_epsilon_bound}, define
$$
C_{\rm aff}
:=
1+B_{\cV}
+
(2B_{\cU}+3B_{\cU}^2)C_{M_u}
+
3d_2C_{M_y}.
$$
Then, since $\epsilon\le e^{-1}$,
\begin{equation}
\label{eq:Baff_epsilon_bound}
B_{\rm aff}
\le
C_{\rm aff}
\left(
\epsilon^{-2/\gamma}
+
\epsilon^{-2/\beta}
\right)
\log\frac1\epsilon.
\end{equation}

We next bound the routing parameter $M$. Set
$$
C_{\rm aux}
:=
B_{\cV}
+
4C_P^2C_{\rm aff}C_x
\left(3+6B_{\cV}\right).
$$
By the choice of $M$ in the error-bound analysis,
\eqref{eq:P_epsilon_bound},
\eqref{eq:nx_epsilon_bound}, and
\eqref{eq:Baff_epsilon_bound}, using $n_x+2\le3n_x$, we obtain
$$
B_{\cV}
+
2P^2B_{\rm aff}
\left(
3+2B_{\cV}(n_x+2)
\right)
\le
C_{\rm aux}
\epsilon^{-R_M}
\left(
\log\frac1\epsilon
\right)^{1+d_1/\alpha},
$$
where
$$
R_M
:=
2\left(
\frac{d_{\cU}}{\gamma}
+
\frac{d_2}{\beta}
\right)
+
\frac{d_1}{\alpha}
\left(
1+\frac{2}{\gamma}
\right)
+
\max\left\{
\frac{2}{\gamma},
\frac{2}{\beta}
\right\}.
$$
Therefore,
$$
M
\le
C_M\log\frac1\epsilon,
$$
where one may take
$$
C_M
:=
\max\left\{
1,\,
\max\left\{
\log(4C_{\rm aux}),
0
\right\}
+
R_M
+
\frac{d_1}{\alpha}
+
2
\right\}.
$$

Combining this bound with
\eqref{eq:cP_inverse_bound} and
\eqref{eq:P_epsilon_bound} gives
$$
\frac{M}{c_P}
\le
\frac{C_MC_P^2}{8}
\epsilon^{-2\left(
\frac{d_{\cU}}{\gamma}
+
\frac{d_2}{\beta}
\right)}
\log\frac1\epsilon.
$$
Moreover, by
\eqref{eq:nx_epsilon_bound} and
\eqref{eq:Baff_epsilon_bound},
$$
n_xB_{\rm aff}
\le
C_xC_{\rm aff}
\left[
\epsilon^{
-\frac{d_1}{\alpha}(1+\frac{2}{\gamma})
-\frac{2}{\gamma}}
+
\epsilon^{
-\frac{d_1}{\alpha}(1+\frac{2}{\gamma})
-\frac{2}{\beta}}
\right]
\left(
\log\frac1\epsilon
\right)^{1+d_1/\alpha}.
$$

Therefore,
$$
M_{\max}
\le
C_{\rm mag}
\epsilon^{-q_M}
\left(
\log\frac1\epsilon
\right)^{1+d_1/\alpha},
$$
where
$$
q_M
:=
\max\left\{
2\left(
\frac{d_{\cU}}{\gamma}
+
\frac{d_2}{\beta}
\right),
\frac{d_1}{\alpha}
\left(
1+\frac{2}{\gamma}
\right)
+
\frac{2}{\gamma},
\frac{d_1}{\alpha}
\left(
1+\frac{2}{\gamma}
\right)
+
\frac{2}{\beta}
\right\},
$$
and 
\begin{equation}
\label{eq:Cmag_definition}
C_{\rm mag}
:=
11\max\left\{
1,\,
\frac{C_MC_P^2}{8},\,
2C_xC_{\rm aff}
\right\}.
\end{equation}
\end{proof}

\section{Proof of Theorem~\ref{thm:generalization_bound}:
Generalization and Data Scaling}
\label{app:generalization_data_scaling}

Following the proof sketch in Section~\ref{sec:proof_sketch},
Appendix~\ref{app:covering_number} establishes the covering-number bound
for the Transformer class, which controls the variance term.
Appendix~\ref{app:proof:generalization_bound} then uses the approximation
bound from Theorem~\ref{thm:transformer_operator_approx} to control the
bias term, combines the resulting bias and variance bounds, and balances
them to prove Theorem~\ref{thm:generalization_bound} and derive the data
scaling law.

\subsection{Covering Number Bound for the Transformer Class}
\label{app:covering_number}

\begin{lemma}[Covering Number of the Transformer Class]
\label{lemma:transformer_covering}
Let $\cT_\epsilon$ denote the class of two-block Transformers with the
architecture and parameter bounds specified in Theorem~\ref{thm:transformer_operator_approx}, with fixed preprocessing
as in Lemma~\ref{lemma:pair_token_preprocess}, and let
$\cH_\epsilon^{\rm clip}$ be the corresponding clipped class.
Assume $P\ge n_x+2$ and $M_{\max}\ge1$. Then, for every
$\eta\in(0,1]$,
\begin{equation}
\label{eq:transformer_covering_bound}
\log\mathcal N
\left(
\eta,\cH_\epsilon^{\rm clip},
\|\cdot\|_{\infty,\infty}
\right)
\le
N_{\rm total}
\log\left(
\frac{
503552\,P^7D^{17}M_{\max}^{21}B_{\cU}^5
}{\eta}
\right).
\end{equation}
\end{lemma}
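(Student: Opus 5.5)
The plan is the standard parameter-perturbation argument. First we show that the map $\Theta\mapsto T_\Theta(S_{n_x}(u),\yb)$ is Lipschitz in $\|\Theta\|_\infty$, uniformly over $u\in\cU$ and $\yb\in\Omega_{\cV}$, with a constant $\Lambda$ polynomial in $P,D,M_{\max},B_{\cU}$. Then we cover the parameter cube $[-M_{\max},M_{\max}]^{N_{\rm total}}$ by a grid of mesh $\eta/\Lambda$. Clipping $\Pi_{B_{\cV}}$ is $1$-Lipschitz, so an $\eta$-cover of $\cT_\epsilon$ in $\|\cdot\|_{\infty,\infty}$ induces one of $\cH_\epsilon^{\rm clip}$. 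The grid has at most $(2M_{\max}\Lambda/\eta+1)^{N_{\rm total}}$ points, which gives
$$\log\mathcal N\le N_{\rm total}\log(3M_{\max}\Lambda/\eta).$$
It then remains to show $3M_{\max}\Lambda\le 503552\,P^7D^{17}M_{\max}^{21}B_{\cU}^5$.

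To bound $\Lambda$, we propagate two quantities layer by layer in the entrywise max norm (equivalently $\|\cdot\|_{\infty}$ on columns): forward magnitude bounds $\|\Zb\|_{\max}$, and perturbation bounds $\|\Zb-\Zb'\|_{\max}$ for two parameter sets $\Theta,\Theta'$ with $\|\Theta-\Theta'\|_\infty\le\delta$.

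\textbf{Preprocessing.} The input entries are bounded by $\max\{1,B_{\cU}\}=B_{\cU}$, since $\xb_j,\yb\in[0,1]$ and the trig rows lie in $[-1,1]$. Hence $\|\Zb_0\|_{\max}\lesssim DM_{\max}B_{\cU}$, and the perturbation is $\lesssim D B_{\cU}\delta$.

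\textbf{Attention layer.} For an attention head, the score entries are bounded by $d_k D^2M_{\max}^2\|\Zb\|_{\max}^2$. By Lemma~\ref{lemma:softmax_lipschitz} the softmax column is $2$-Lipschitz from $\ell_\infty$ to $\ell_1$, so a perturbation of the scores of size $\Delta_S$ changes the head output by at most $\|\Vb\Zb\|_{\max}\cdot2\Delta_S$ plus the value perturbation. Summing over $H^\ell d_v^\ell\le D\cdot$(small factor) head outputs through $\Wb^O$ costs a factor $H^\ell d_v^\ell M_{\max}$. Since $H^1\le(n_x+3)P+2\lesssim DP$, this is where the factors of $P$ enter.

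\textbf{Feedforward and readout.} ReLU is $1$-Lipschitz, so each FFN contributes factors $d_{\rm ff}DM_{\max}^2$. The readout sums $DP$ entries with weights at most $M_{\max}$.

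Composing preprocessing, $A_1$, $F_1$, $A_2$, $F_2$ and the readout gives $\Lambda$ as a product of these polynomial factors. We track exponents so that the result is dominated by $P^7D^{17}M_{\max}^{20}B_{\cU}^5$, using $M_{\max}\ge1$, $B_{\cU}>1$ and $\eta\le1$ to absorb lower-order terms and the "$+1$" in the grid count. The numeric constant $503552$ then comes from collecting the explicit constants ($d_k\le5$, $d_v\le2$, $d_{\rm ff}=2D$, the factor $3$ from the grid, and so on).

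The main obstacle is the second attention block. There the scores are quadratic in $\Zb_1$, whose magnitude is already polynomially large, so the perturbation bound for the scores involves $\|\Zb_1\|_{\max}\cdot\|\Zb_1-\Zb_1'\|_{\max}$. This is what produces the high powers of $M_{\max}$ and $D$. One must be careful to use the softmax bound (Lemma~\ref{lemma:softmax_lipschitz}) rather than exponential-type estimates, so that the dependence stays polynomial rather than $e^{M_{\max}}$. Attention weights lie in the simplex, so head outputs never exceed $\|\Vb\Zb\|_{\max}$; this keeps the forward magnitudes polynomial.
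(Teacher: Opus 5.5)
Your strategy is the paper's. You prove a uniform parameter-Lipschitz bound $\|T_\Theta-T_{\widetilde\Theta}\|_{\infty,\infty}\le L_{\rm par}\|\Theta-\widetilde\Theta\|_\infty$ by propagating magnitude and perturbation bounds in $\|\cdot\|_{\max}$ through $A_1,F_1,A_2,F_2$ and the readout, then grid the parameter cube and use that clipping is $1$-Lipschitz. You also correctly locate the dominant term: the contribution of size about $M_{\max}^3D^3r_1^2\|\Zb_1-\widetilde{\Zb}_1\|_{\max}$ coming from the quadratic block-2 scores.

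However, the content of the lemma is the explicit polynomial, and the bookkeeping you commit to does not produce it. The decisive point is the preprocessing. The lemma assumes the \emph{fixed} map of Lemma~\ref{lemma:pair_token_preprocess}, so $\Zb_0=\widetilde{\Zb}_0$ for both parameter sets and no input perturbation enters block~1. Moreover $r_0:=\|\Zb_0\|_{\max}\le B_{\cU}$, because every entry is a coordinate in $[0,1]$, a value of $u$, a $0/1$ indicator, or a sine/cosine. You instead perturb the embedding and take $\|\Zb_0\|_{\max}$ of order $DM_{\max}B_{\cU}$, and that extra factor cannot be absorbed:
\begin{itemize}
\item the block-1 head perturbation scales like $D^3M_{\max}^2r_0^3$;
\item $r_1$ is linear in $r_0$;
\item hence $r_1^2\|\Zb_1-\widetilde{\Zb}_1\|_{\max}$, and therefore $L_{\rm par}$, scales like $r_0^5$. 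This is exactly where $B_{\cU}^5$ comes from.
\end{itemize}
With your $r_0$ the bound is inflated by roughly $D^5M_{\max}^5$, giving something like $P^7D^{22}M_{\max}^{25}B_{\cU}^5$.

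There are two smaller discrepancies of the same kind.

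First, you bound $H^1$ by a multiple of $DP$ and never use the hypothesis $P\ge n_x+2$. The paper uses it to get $H^1d_v^1=2((n_x+3)P+2)\le 4P^2$. Because $D=d_1+d_2+n_x+9$ may exceed $P$, your choice turns $P^7D^{17}$ into roughly $P^4D^{20}$, which is not an absolute-constant multiple of the stated bound.

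Second, the constant. Careful propagation gives $L_{\rm par}=251776\,P^7D^{17}M_{\max}^{20}B_{\cU}^5$, and the stated $503552=2\cdot 251776$ comes from a $\delta$-cover of $[-M_{\max},M_{\max}]$ with at most $2M_{\max}/\delta$ points per coordinate. That is a grid of spacing $2\delta$, using $\delta\le M_{\max}$. Your grid of mesh $\eta/\Lambda$ already gives an $\eta/2$-cover, so you waste a factor. Your count $(2M_{\max}\Lambda/\eta+1)\le 3M_{\max}\Lambda/\eta$ would then require a Lipschitz constant about $1.5$ times smaller than this bookkeeping delivers.

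With $\Zb_0$ fixed, the $4P^2$ head count, and the spacing-$2\delta$ grid, your outline goes through and reproduces the stated bound.
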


The proof of Lemma~\ref{lemma:transformer_covering} is given in
Appendix~\ref{app:proof:lemma:transformer_covering}.

\subsection{Proof of Theorem~\ref{thm:generalization_bound}}
\label{app:proof:generalization_bound}

\begin{proof}[Proof of Theorem~\ref{thm:generalization_bound}]
\phantomsection
\label{proof:generalization_bound}

By Theorem~\ref{thm:transformer_operator_approx}, there exists
$T_\epsilon^\ast\in\cT_\epsilon$ such that
$$
\sup_{u\in\cU}
\left\|
T_\epsilon^\ast(S_{n_x}(u),\cdot)-G(u)
\right\|_{L^2(\rho_y)}
\le
\epsilon.
$$
Since $|G(u)(\yb)|\le B_{\cV}$ for every
$u\in\cU$ and $\yb\in\Omega_{\cV}$, clipping does not increase the
pointwise approximation error. Hence
$\Pi_{B_{\cV}}\circ T_\epsilon^\ast\in\cH_\epsilon^{\rm clip}$ and
$$
\cL(\Pi_{B_{\cV}}\circ T_\epsilon^\ast)
\le
\epsilon^2.
$$
For simplicity, we continue to denote this clipped comparator by
$T_\epsilon^\ast$.

Define the empirical clean error by
$$
\|T-G\|_{\cS}^2
:=
\frac{1}{nn_y}
\sum_{i=1}^n
\sum_{j=1}^{n_y}
\left(
T(S_{n_x}(u_i),\yb_{i,j})
-
G(u_i)(\yb_{i,j})
\right)^2.
$$
Following the decomposition in the proof of Theorem~2 of
\cite{liu2026scaling}, write
$$
\EE_{\cS}\cL(\widehat T_\epsilon)
=
T_1+T_2,
$$
where
$$
T_1
:=
2\EE_{\cS}
\|\widehat T_\epsilon-G\|_{\cS}^2,
$$
and
$$
T_2
:=
\EE_{\cS}\cL(\widehat T_\epsilon)
-
2\EE_{\cS}
\|\widehat T_\epsilon-G\|_{\cS}^2.
$$
\medskip
\noindent\textbf{Bounding $T_1$.}
Recall that
$$
v_{i,j}
=
G(u_i)(\yb_{i,j})+\xi_{i,j}.
$$
By the ERM property of $\widehat T_\epsilon$,
$$
\frac{1}{nn_y}
\sum_{i=1}^n
\sum_{j=1}^{n_y}
\left(
\widehat T_\epsilon(S_{n_x}(u_i),\yb_{i,j})-v_{i,j}
\right)^2
\le
\frac{1}{nn_y}
\sum_{i=1}^n
\sum_{j=1}^{n_y}
\left(
T_\epsilon^\ast(S_{n_x}(u_i),\yb_{i,j})-v_{i,j}
\right)^2.
$$
Expanding $v_{i,j}=G(u_i)(\yb_{i,j})+\xi_{i,j}$ gives
$$
\|\widehat T_\epsilon-G\|_{\cS}^2
\le
\|T_\epsilon^\ast-G\|_{\cS}^2
+
\frac{2}{nn_y}
\sum_{i=1}^n
\sum_{j=1}^{n_y}
\left(
\widehat T_\epsilon(S_{n_x}(u_i),\yb_{i,j})
-
T_\epsilon^\ast(S_{n_x}(u_i),\yb_{i,j})
\right)
\xi_{i,j}.
$$
% Taking expectation over the training sample and using that
% $T_\epsilon^\ast$ is fixed independently of the observation noise,
% $$
% \EE_{\cS}
% \|T_\epsilon^\ast-G\|_{\cS}^2
% =
% \cL(T_\epsilon^\ast)
% \le
% \epsilon^2,
% $$
Writing
$$
\widehat T_\epsilon-T_\epsilon^\ast
=
(\widehat T_\epsilon-G)
-
(T_\epsilon^\ast-G),
$$
the preceding inequality becomes
$$
\begin{aligned}
\|\widehat T_\epsilon-G\|_{\cS}^2
&\le
\|T_\epsilon^\ast-G\|_{\cS}^2
\\
&\quad+
\frac{2}{nn_y}
\sum_{i=1}^n\sum_{j=1}^{n_y}
\left(
\widehat T_\epsilon(S_{n_x}(u_i),\yb_{i,j})
-
G(u_i)(\yb_{i,j})
\right)\xi_{i,j}
\\
&\quad-
\frac{2}{nn_y}
\sum_{i=1}^n\sum_{j=1}^{n_y}
\left(
T_\epsilon^\ast(S_{n_x}(u_i),\yb_{i,j})
-
G(u_i)(\yb_{i,j})
\right)\xi_{i,j}.
\end{aligned}
$$
Since $T_\epsilon^\ast$ is fixed independently of the training sample
and $\EE[\xi_{i,j}\mid u_i,\yb_{i,j}]=0$,
$$
\begin{aligned}
&\EE_{\cS}
\left[
\left(
T_\epsilon^\ast(S_{n_x}(u_i),\yb_{i,j})
-
G(u_i)(\yb_{i,j})
\right)\xi_{i,j}
\right]
\\
&\qquad=
\EE_{u_i,\yb_{i,j}}
\left[
\left(
T_\epsilon^\ast(S_{n_x}(u_i),\yb_{i,j})
-
G(u_i)(\yb_{i,j})
\right)
\EE[\xi_{i,j}\mid u_i,\yb_{i,j}]
\right]
=0.
\end{aligned}
$$
Moreover,
$$
\EE_{\cS}\|T_\epsilon^\ast-G\|_{\cS}^2
=
\cL(T_\epsilon^\ast)
\le
\epsilon^2.
$$
Therefore,
$$
\EE_{\cS}
\|\widehat T_\epsilon-G\|_{\cS}^2
\le
\epsilon^2
+
\frac{2}{nn_y}
\EE_{\cS}
\sum_{i=1}^n
\sum_{j=1}^{n_y}
\left(
\widehat T_\epsilon(S_{n_x}(u_i),\yb_{i,j})
-
G(u_i)(\yb_{i,j})
\right)
\xi_{i,j}.
$$
The argument of Lemma~3 in \cite{liu2026scaling} applies to $\cH_\epsilon^{\rm clip}$ under our sampling and noise assumptions. Thus, for every $\theta>0$,
$$
\begin{aligned}
&\frac{1}{nn_y}
\EE_{\cS}
\sum_{i=1}^n
\sum_{j=1}^{n_y}
\left(
\widehat T_\epsilon(S_{n_x}(u_i),\yb_{i,j})
-
G(u_i)(\yb_{i,j})
\right)
\xi_{i,j}
\\
&\qquad\le
2\sigma
\left(
\sqrt{
\EE_{\cS}
\|\widehat T_\epsilon-G\|_{\cS}^2
}
+
\theta
\right)
\sqrt{
\frac{
4\log\mathcal N
\left(
\theta,
\cH_\epsilon^{\rm clip},
\|\cdot\|_{\infty,\infty}
\right)
+6
}{
nn_y
}
}
+
\sigma\theta.
\end{aligned}
$$
Consequently,
$$
T_1
=
2\EE_{\cS}
\|\widehat T_\epsilon-G\|_{\cS}^2
\le
2\epsilon^2
+
8\sigma
\left(
\sqrt{
\EE_{\cS}
\|\widehat T_\epsilon-G\|_{\cS}^2
}
+
\theta
\right)
\sqrt{
\frac{
4\log\mathcal N
\left(
\theta,
\cH_\epsilon^{\rm clip},
\|\cdot\|_{\infty,\infty}
\right)
+6
}{
nn_y
}
}
+
4\sigma\theta.
$$
Denote
$$
\eta
:=
\sqrt{
\EE_{\cS}
\|\widehat T_\epsilon-G\|_{\cS}^2
},
$$
and set
$$
a
:=
\epsilon^2
+
2\sigma\theta
+
4\sigma\theta
\sqrt{
\frac{
4\log\mathcal N
\left(
\theta,\cH_\epsilon^{\rm clip},
\|\cdot\|_{\infty,\infty}
\right)+6
}{
nn_y
}},
$$
$$
b
:=
2\sigma
\sqrt{
\frac{
4\log\mathcal N
\left(
\theta,\cH_\epsilon^{\rm clip},
\|\cdot\|_{\infty,\infty}
\right)+6
}{
nn_y
}}.
$$
Then
$$
\eta^2\le a+2b\eta.
$$
Hence
$$
(\eta-b)^2\le a+b^2,
$$
which implies
$$
\eta^2\le2a+4b^2.
$$
Therefore,
$$
\begin{aligned}
T_1
&=
2\EE_{\cS}
\|\widehat T_\epsilon-G\|_{\cS}^2
\\
&\le
4\epsilon^2
+
8\sigma\theta
+
16\sigma\theta
\sqrt{
\frac{
4\log\mathcal N
\left(
\theta,\cH_\epsilon^{\rm clip},
\|\cdot\|_{\infty,\infty}
\right)+6
}{
nn_y
}}
+
32\sigma^2
\frac{
4\log\mathcal N
\left(
\theta,\cH_\epsilon^{\rm clip},
\|\cdot\|_{\infty,\infty}
\right)+6
}{
nn_y
}.
\end{aligned}
$$
\medskip
\noindent\textbf{Bounding $T_2$.}
The argument of Lemma~4 in \cite{liu2026scaling} applies to the clipped
Transformer class $\cH_\epsilon^{\rm clip}$ under our sampling and boundedness assumptions. Thus, for every $\theta>0$,
$$
T_2
\le
\frac{19B_{\cV}^2}{n}
\log\mathcal N
\left(
\frac{\theta}{4B_{\cV}},
\cH_\epsilon^{\rm clip},
\|\cdot\|_{\infty,\infty}
\right)
+
6\theta.
$$
\medskip
\noindent\textbf{Putting $T_1$ and $T_2$ together.}
Combining the bounds for $T_1$ and $T_2$, we obtain
$$
\begin{aligned}
\EE_{\cS}\cL(\widehat T_\epsilon)
&=
T_1+T_2
\\
&\le
4\epsilon^2
+
8\sigma\theta
\\
&\quad+
16\sigma\theta
\sqrt{
\frac{
4\log\mathcal N
\left(
\theta,
\cH_\epsilon^{\rm clip},
\|\cdot\|_{\infty,\infty}
\right)
+6
}{
nn_y
}
}
\\
&\quad+
32\sigma^2
\frac{
4\log\mathcal N
\left(
\theta,
\cH_\epsilon^{\rm clip},
\|\cdot\|_{\infty,\infty}
\right)
+6
}{
nn_y
}
\\
&\quad+
\frac{19B_{\cV}^2}{n}
\log\mathcal N
\left(
\frac{\theta}{4B_{\cV}},
\cH_\epsilon^{\rm clip},
\|\cdot\|_{\infty,\infty}
\right)
+
6\theta.
\end{aligned}
$$

By Lemma~\ref{lemma:transformer_covering},
$$
\log\mathcal N
\left(
\theta,
\cH_\epsilon^{\rm clip},
\|\cdot\|_{\infty,\infty}
\right)
\le
N_{\rm total}
\log\left(
\frac{
503552P^7D^{17}M_{\max}^{21}B_{\cU}^5
}{\theta}
\right),
$$
and
$$
\log\mathcal N
\left(
\frac{\theta}{4B_{\cV}},
\cH_\epsilon^{\rm clip},
\|\cdot\|_{\infty,\infty}
\right)
\le
N_{\rm total}
\log\left(
\frac{
2014208B_{\cV}P^7D^{17}M_{\max}^{21}B_{\cU}^5
}{\theta}
\right).
$$
Therefore,
$$
\begin{aligned}
\EE_{\cS}\cL(\widehat T_\epsilon)
&\le
4\epsilon^2
+
(8\sigma+6)\theta
\\
&\quad+
16\sigma\theta
\sqrt{
\frac{
4N_{\rm total}
\log\left(
\frac{
503552P^7D^{17}M_{\max}^{21}B_{\cU}^5
}{\theta}
\right)
+6
}{
nn_y
}
}
\\
&\quad+
32\sigma^2
\frac{
4N_{\rm total}
\log\left(
\frac{
503552P^7D^{17}M_{\max}^{21}B_{\cU}^5
}{\theta}
\right)
+6
}{
nn_y
}
\\
&\quad+
\frac{19B_{\cV}^2N_{\rm total}}{n}
\log\left(
\frac{
2014208B_{\cV}P^7D^{17}M_{\max}^{21}B_{\cU}^5
}{\theta}
\right).
\end{aligned}
$$

We now use the covering-number estimate from
Lemma~\ref{lemma:transformer_covering}. Choose
$$
\theta
:=
\frac{\epsilon^2}{1+\sigma}.
$$
By decreasing $\epsilon_0$ if necessary, we may assume that both
$\theta$ and $\theta/(4B_{\cV})$ belong to $(0,1]$.

By Theorem~\ref{thm:transformer_operator_approx},
$$
P
\le
C_P
\epsilon^{-\frac{d_{\cU}}{\gamma}-\frac{d_2}{\beta}},
$$
$$
D
\le
(d_1+d_2+10)C_x
\epsilon^{-\frac{d_1}{\alpha}(1+\frac{2}{\gamma})}
\left(\log\frac1\epsilon\right)^{d_1/\alpha},
$$
and
$$
M_{\max}
\le
C_{\rm mag}
\epsilon^{-q_M}
\left(\log\frac1\epsilon\right)^{1+d_1/\alpha}.
$$
Define
$$
C_{\rm cov}
:=
2014208(1+\sigma)(1+B_{\cV})B_{\cU}^5
C_P^7
\left((d_1+d_2+10)C_x\right)^{17}
C_{\rm mag}^{21},
$$
and
$$
C_{\log}
:=
1+
|\log C_{\rm cov}|
+
23
+
7\left(
\frac{d_{\cU}}{\gamma}
+
\frac{d_2}{\beta}
\right)
+
17\frac{d_1}{\alpha}
\left(
1+\frac{2}{\gamma}
\right)
+
21q_M
+
38\frac{d_1}{\alpha}.
$$
Since $\epsilon\le e^{-1}$, we have
$\log\log(1/\epsilon)\le\log(1/\epsilon)$. Hence
$$
\log\left(
\frac{
503552P^7D^{17}M_{\max}^{21}B_{\cU}^5
}{\theta}
\right)
\le
C_{\log}\log\frac1\epsilon,
$$
and
$$
\log\left(
\frac{
2014208B_{\cV}P^7D^{17}M_{\max}^{21}B_{\cU}^5
}{\theta}
\right)
\le
C_{\log}\log\frac1\epsilon.
$$
Therefore, by Lemma~\ref{lemma:transformer_covering} and the parameter-count
bound in Theorem~\ref{thm:transformer_operator_approx},
$$
\log\mathcal N
\left(
\theta,
\cH_\epsilon^{\rm clip},
\|\cdot\|_{\infty,\infty}
\right)
\le
C_NC_{\log}
\epsilon^{-s}
\left(\log\frac1\epsilon\right)^{\frac{2d_1}{\alpha}+1},
$$
and
$$
\log\mathcal N
\left(
\frac{\theta}{4B_{\cV}},
\cH_\epsilon^{\rm clip},
\|\cdot\|_{\infty,\infty}
\right)
\le
C_NC_{\log}
\epsilon^{-s}
\left(\log\frac1\epsilon\right)^{\frac{2d_1}{\alpha}+1}.
$$

Moreover,
$$
(8\sigma+6)\theta
=
\frac{8\sigma+6}{1+\sigma}\epsilon^2
\le
8\epsilon^2.
$$
Using $2ab\le a^2+b^2$,
$$
\begin{aligned}
&16\sigma\theta
\sqrt{
\frac{
4\log\mathcal N
\left(
\theta,
\cH_\epsilon^{\rm clip},
\|\cdot\|_{\infty,\infty}
\right)
+6
}{
nn_y
}
}
\\
&\qquad\le
8\theta
+
8\sigma^2\theta
\frac{
4\log\mathcal N
\left(
\theta,
\cH_\epsilon^{\rm clip},
\|\cdot\|_{\infty,\infty}
\right)
+6
}{
nn_y
}
\\
&\qquad\le
8\epsilon^2
+
8\sigma^2
\frac{
4C_NC_{\log}
\epsilon^{-s}
\left(\log\frac1\epsilon\right)^{\frac{2d_1}{\alpha}+1}
+6
}{
nn_y
}.
\end{aligned}
$$
Since $\epsilon\le e^{-1}$,
$$
\epsilon^{-s}
\left(\log\frac1\epsilon\right)^{\frac{2d_1}{\alpha}+1}
\ge1.
$$
Substituting the preceding estimates into the bound for
$T_1+T_2$ gives
$$
\EE_{\cS}\cL(\widehat T_\epsilon)
\le
C_{\rm gen}
\left[
\epsilon^2
+
\left(
1+\frac{\sigma^2}{n_y}
\right)
\frac{
\epsilon^{-s}
\left(\log\frac1\epsilon\right)^{\frac{2d_1}{\alpha}+1}
}{n}
\right],
$$
where one may take
\begin{equation}
\label{eq:Cgen_definition}
C_{\rm gen}
:=
20
+
40\left(4C_NC_{\log}+6\right)
+
19B_{\cV}^2C_NC_{\log}.
\end{equation}
This proves the generalization bound.

To obtain the stated statistical rate, choose
$$
\epsilon
:=
\left[
\frac{1+\sigma^2/n_y}{n}
(\log n)^{\frac{2d_1}{\alpha}+1}
\right]^{\frac{1}{2+s}}.
$$
For any $n\ge2$ such that this choice satisfies
$\epsilon\in(0,\epsilon_0)$ and the sequence-length compatibility condition $P\ge n_x+2$, since $s\ge 2d_1/\alpha$ and $1+\sigma^2/n_y\ge1$, we have $\epsilon\ge n^{-1}$. Thus, $\log\frac1{\epsilon}\le \log n$.

Moreover, by definition of $\epsilon$,
$$
\epsilon^{2+s}
=
\frac{1+\sigma^2/n_y}{n}
(\log n)^{\frac{2d_1}{\alpha}+1}.
$$
Therefore,
$$
\begin{aligned}
&
\left(
1+\frac{\sigma^2}{n_y}
\right)
\frac{
\epsilon^{-s}
\left(\log\frac1{\epsilon}\right)^{\frac{2d_1}{\alpha}+1}
}{n}
\le
\left(
1+\frac{\sigma^2}{n_y}
\right)
\frac{
\epsilon^{-s}
(\log n)^{\frac{2d_1}{\alpha}+1}
}{n}
=
\epsilon^2.
\end{aligned}
$$
Hence
$$
\EE_{\cS}\cL(\widehat T_{\epsilon})
\le
2C_{\rm gen}\epsilon^2
=
C_{\rm rate}
\left[
\frac{1+\sigma^2/n_y}{n}
(\log n)^{\frac{2d_1}{\alpha}+1}
\right]^{\frac{2}{2+s}},
$$
where
$$
C_{\rm rate}:=2C_{\rm gen}.
$$
This completes the proof.
\end{proof}

\section{Proofs of Supporting Lemmas}
\label{app:lemma_proofs}

\subsection{Proofs for the Two-Level Softmax POU Approximation}

\subsubsection{Proof of Lemma~\ref{lemma:input_space_pou}}
\label{app:proof:lemma:input_space_pou}
\begin{proof}[Proof of Lemma~\ref{lemma:input_space_pou}]
\phantomsection
\label{proof:input_space_pou}

Fix an arbitrary $u\in\cU$.
\ \\
\medskip
\noindent\textbf{Step 1: Covering and definition of the input-space Softmax POU.}
Since $\{u_k\}_{k=1}^{C_U}\subset\cU$ is an internal
$r_u$-cover of $\cU$ under the $L^2(\Omega_{\cU})$ metric, we can choose $k^\ast(u)\in \arg\min_{k\in[C_U]} \|u - u_k\|_{L^2(\Omega_{\cU})}$ such that $\|u- u_{k^\ast(u)}\|_{L^2(\Omega_{\cU})}\le r_u$.
The input-space weights are defined in \eqref{eq:beta_k} using affine logits. Equivalently, they can be written in the distance-score form
\begin{equation}
\label{eq:beta_k_distance_form}
\widetilde\beta_k(u)
=
\frac{
\exp\left(
M_u\left(
r_u^2-\|u-u_k\|_{L^2(\Omega_{\cU})}^2
\right)
\right)
}{
\sum_{k'=1}^{C_U}
\exp\left(
M_u\left(
r_u^2-\|u-u_{k'}\|_{L^2(\Omega_{\cU})}^2
\right)
\right)
},
\qquad k\in[C_U].
\end{equation}
Then $\widetilde\beta_k(u)\ge0$ and
$\sum_{k=1}^{C_U}\widetilde\beta_k(u)=1$.

To see the equivalence with \eqref{eq:beta_k}, notice that
$r_u^2-\|u\|_{L^2(\Omega_{\cU})}^2$ is common to all Softmax logits and
therefore cancels in the normalization. Indeed,
$$
\begin{aligned}
\widetilde\beta_k(u)
&=
\frac{
\exp\left(
M_u\left(
r_u^2-\|u\|_{L^2(\Omega_{\cU})}^2
\right)
\right)
\exp\left(
2M_u\langle u,u_k\rangle_{L^2(\Omega_{\cU})}
-
M_u\|u_k\|_{L^2(\Omega_{\cU})}^2
\right)
}{
\sum_{k'=1}^{C_U}
\exp\left(
M_u\left(
r_u^2-\|u\|_{L^2(\Omega_{\cU})}^2
\right)
\right)
\exp\left(
2M_u\langle u,u_{k'}\rangle_{L^2(\Omega_{\cU})}
-
M_u\|u_{k'}\|_{L^2(\Omega_{\cU})}^2
\right)
}
\\
&=
\frac{
\exp\left(
2M_u\langle u,u_k\rangle_{L^2(\Omega_{\cU})}
-
M_u\|u_k\|_{L^2(\Omega_{\cU})}^2
\right)
}{
\sum_{k'=1}^{C_U}
\exp\left(
2M_u\langle u,u_{k'}\rangle_{L^2(\Omega_{\cU})}
-
M_u\|u_{k'}\|_{L^2(\Omega_{\cU})}^2
\right)
}.
\end{aligned}
$$

\medskip
\noindent\textbf{Step 2: Near and far error estimates.}
Using the partition-of-unity property, for every $u\in\cU$,
$$
G(u)-G_{r_u}(u)
=
G(u)-\sum_{k=1}^{C_U}\widetilde\beta_k(u)G(u_k)
=
\sum_{k=1}^{C_U}
\widetilde\beta_k(u)\bigl(G(u)-G(u_k)\bigr).
$$
By the triangle inequality in $L^2(\rho_y)$,
$$
\begin{aligned}
\|G(u)-G_{r_u}(u)\|_{L^2(\rho_y)}
&\le
\sum_{k=1}^{C_U}
\widetilde\beta_k(u)
\|G(u)-G(u_k)\|_{L^2(\rho_y)}
\\
&=
\underbrace{
\sum_{\substack{k:\,
\|u-u_k\|_{L^2(\Omega_{\cU})}\le 2r_u}}
\widetilde\beta_k(u)
\|G(u)-G(u_k)\|_{L^2(\rho_y)}
}_{(I)}
\\
&\quad+
\underbrace{
\sum_{\substack{k:\,
\|u-u_k\|_{L^2(\Omega_{\cU})}>2r_u}}
\widetilde\beta_k(u)
\|G(u)-G(u_k)\|_{L^2(\rho_y)}
}_{(II)}.
\end{aligned}
$$

For the near-center term $(I)$, Assumption~\ref{assum:operator} yields
$$
\|G(u)-G(u_k)\|_{L^2(\rho_y)}
\le
L_G\|u-u_k\|_{L^2(\Omega_{\cU})}^{\gamma}
\le
L_G(2r_u)^\gamma
$$
for all $k$ satisfying
$\|u-u_k\|_{L^2(\Omega_{\cU})}\le2r_u$.
Therefore,
$$
(I)\le 2^\gamma L_Gr_u^\gamma.
$$

For the far-center term $(II)$, since $\rho_y$ is a probability distribution and Assumption~\ref{assum:output_space} gives $\|G(u)\|_{L^\infty(\Omega_{\cV})}\le B_{\cV}$ for every $u\in\cU$,
we have
$$
\begin{aligned}
\|G(u)-G(u_k)\|_{L^2(\rho_y)}
&\le
\|G(u)\|_{L^2(\rho_y)}
+
\|G(u_k)\|_{L^2(\rho_y)}
\\
&\le
\|G(u)\|_{L^\infty(\Omega_{\cV})}
+
\|G(u_k)\|_{L^\infty(\Omega_{\cV})}
\\
&\le
2B_{\cV}.
\end{aligned}
$$
It remains to bound the total Softmax mass assigned to far centers.
By the distance-score representation in \eqref{eq:beta_k_distance_form},
the denominator satisfies
$$
\sum_{k'=1}^{C_U}
\exp\left(
M_u\left(
r_u^2-\|u-u_{k'}\|_{L^2(\Omega_{\cU})}^2
\right)
\right)
\ge
\exp\left(
M_u\left(
r_u^2-\|u-u_{k^\ast(u)}\|_{L^2(\Omega_{\cU})}^2
\right)
\right)
\ge 1.
$$
If $\|u-u_k\|_{L^2(\Omega_{\cU})}>2r_u$, then
$$
r_u^2-\|u-u_k\|_{L^2(\Omega_{\cU})}^2
<
r_u^2-4r_u^2
=
-3r_u^2.
$$
So, $\widetilde\beta_k(u) \le \exp(-3M_ur_u^2)$. Then 
$$
\sum_{\substack{k:\,\|u-u_k\|_{L^2(\Omega_{\cU})}>2r_u}} \widetilde\beta_k(u) \le C_Ue^{-3M_ur_u^2}.
$$ 
Therefore,
$$
(II) \le 2B_{\cV}C_Ue^{-3M_ur_u^2}.
$$

Combining the estimates for $(I)$ and $(II)$, we obtain
$$
\|G(u)-G_{r_u}(u)\|_{L^2(\rho_y)}
\le
2^\gamma L_Gr_u^\gamma
+
2B_{\cV}C_Ue^{-3M_ur_u^2}.
$$

\medskip
\noindent\textbf{Step 3: Choice of $M_u$.}
By the definition of $M_u$ in \eqref{eq:Mu_definition}, $M_u = \frac{1}{3r_u^2} \log\left(\frac{2B_{\cV}C_U}{L_Gr_u^\gamma}\right)$. Therefore,
$$
2B_{\cV}C_Ue^{-3M_ur_u^2} = L_Gr_u^\gamma.
$$
Since $u\in\cU$ is arbitrary, taking the supremum over $u$ gives
$$
\sup_{u\in\cU}
\|G(u)-G_{r_u}(u)\|_{L^2(\rho_y)}
\le
(2^\gamma+1)L_Gr_u^\gamma.
$$
This proves \eqref{eq:input_space_pou_error}.
\end{proof}
\subsubsection{Proof of Lemma~\ref{lemma:output_domain_pou}}
\label{app:proof:lemma:output_domain_pou}
\begin{proof}[Proof of Lemma~\ref{lemma:output_domain_pou}]
\phantomsection
\label{proof:output_domain_pou}

Fix an arbitrary $k\in[C_U]$ and $\yb\in\Omega_{\cV}$.

\medskip
\noindent\textbf{Step 1: Covering and definition of the output-domain Softmax POU.}
Since $\{\zb_l\}_{l=1}^{C_V}$ is an $r_y$-cover of $\Omega_{\cV}$ under the Euclidean metric, we can choose $l^\ast(\yb)\in\arg\min_{l\in[C_V]}\|\yb-\zb_l\|_2$ such that
$\|\yb-\zb_{l^\ast(\yb)}\|_2\le r_y$. The output-domain weights are defined in \eqref{eq:eta_l} using affine logits. Equivalently, they can be written in the distance-score form
\begin{equation}
\label{eq:eta_l_distance_form}
\widetilde\eta_l(\yb)
=
\frac{
\exp\left(M_y\left(r_y^2-\|\yb-\zb_l\|_2^2\right)\right)
}{
\sum_{l'=1}^{C_V}
\exp\left(M_y\left(r_y^2-\|\yb-\zb_{l'}\|_2^2\right)\right)
},
\qquad l\in[C_V].
\end{equation}
Then $\widetilde\eta_l(\yb)\ge0$ and
$\sum_{l=1}^{C_V}\widetilde\eta_l(\yb)=1$.
To see the equivalence with \eqref{eq:eta_l}, notice that the common term
$r_y^2-\|\yb\|_2^2$ exactly cancels in the Softmax normalization. Indeed,
$$
\begin{aligned}
\widetilde\eta_l(\yb)
&=
\frac{
\exp\left(M_y\left(r_y^2-\|\yb\|_2^2\right)\right)
\exp\left(2M_y\yb^\top\zb_l-M_y\|\zb_l\|_2^2\right)
}{
\sum_{l'=1}^{C_V}
\exp\left(M_y\left(r_y^2-\|\yb\|_2^2\right)\right)
\exp\left(2M_y\yb^\top\zb_{l'}-M_y\|\zb_{l'}\|_2^2\right)
}
\\
&=
\frac{
\exp\left(2M_y\yb^\top\zb_l-M_y\|\zb_l\|_2^2\right)
}{
\sum_{l'=1}^{C_V}
\exp\left(2M_y\yb^\top\zb_{l'}-M_y\|\zb_{l'}\|_2^2\right)
}.
\end{aligned}
$$

\medskip
\noindent\textbf{Step 2: Near and far error estimates.}
Using the partition-of-unity property, for every $k\in[C_U]$ and
$\yb\in\Omega_{\cV}$,
$$
G(u_k)(\yb)-\widetilde v_k(\yb)
=
G(u_k)(\yb)
-
\sum_{l=1}^{C_V}G(u_k)(\zb_l)\widetilde\eta_l(\yb)
=
\sum_{l=1}^{C_V}
\widetilde\eta_l(\yb)
\bigl(G(u_k)(\yb)-G(u_k)(\zb_l)\bigr).
$$
Taking the absolute value and applying the triangle inequality, we obtain
$$
\begin{aligned}
|G(u_k)(\yb)-\widetilde v_k(\yb)|
&\le
\sum_{l=1}^{C_V}
\widetilde\eta_l(\yb)
|G(u_k)(\yb)-G(u_k)(\zb_l)|
\\
&=
\underbrace{
\sum_{\substack{l:\,
\|\yb-\zb_l\|_2\le2r_y}}
\widetilde\eta_l(\yb)
|G(u_k)(\yb)-G(u_k)(\zb_l)|
}_{(I)}
\\
&\quad+
\underbrace{
\sum_{\substack{l:\,
\|\yb-\zb_l\|_2>2r_y}}
\widetilde\eta_l(\yb)
|G(u_k)(\yb)-G(u_k)(\zb_l)|
}_{(II)}.
\end{aligned}
$$

For the near-center term $(I)$, Assumption~\ref{assum:output_space} yields
$$
|G(u_k)(\yb)-G(u_k)(\zb_l)|
\le
B_{\cV}\|\yb-\zb_l\|_2^\beta
\le
B_{\cV}(2r_y)^\beta
$$
for all $l$ satisfying $\|\yb-\zb_l\|_2\le2r_y$.
Therefore,
$$
(I)\le2^\beta B_{\cV}r_y^\beta.
$$

For the far-center term $(II)$, Assumption~\ref{assum:output_space} gives
$$
|G(u_k)(\yb)-G(u_k)(\zb_l)|
\le
|G(u_k)(\yb)|+|G(u_k)(\zb_l)|
\le
2B_{\cV}.
$$
It remains to bound the total Softmax mass assigned to far centers.
By the distance-score representation in \eqref{eq:eta_l_distance_form}, the denominator satisfies
$$
\sum_{l'=1}^{C_V}
\exp\left(M_y\left(r_y^2-\|\yb-\zb_{l'}\|_2^2\right)\right)
\ge
\exp\left(M_y\left(r_y^2-\|\yb-\zb_{l^\ast(\yb)}\|_2^2\right)\right)
\ge1.
$$
If $\|\yb-\zb_l\|_2>2r_y$, then
$$
r_y^2-\|\yb-\zb_l\|_2^2
<
r_y^2-4r_y^2
=
-3r_y^2.
$$
So, $\widetilde\eta_l(\yb)\le e^{-3M_yr_y^2}$. Then,
$$
\sum_{\substack{l:\,
\|\yb-\zb_l\|_2>2r_y}}
\widetilde\eta_l(\yb)
\le
C_Ve^{-3M_yr_y^2},
$$
Therefore,
$$
(II)\le2B_{\cV}C_Ve^{-3M_yr_y^2}.
$$

Combining the estimates for $(I)$ and $(II)$, we obtain
$$
|G(u_k)(\yb)-\widetilde v_k(\yb)|
\le
2^\beta B_{\cV}r_y^\beta
+
2B_{\cV}C_Ve^{-3M_yr_y^2}.
$$

\medskip
\noindent\textbf{Step 3: Choice of $M_y$.}
By the definition of $M_y$ in \eqref{eq:My_definition}, $M_y = \frac{1}{3r_y^2} \log\left(\frac{2C_V}{r_y^\beta}\right)$. Therefore,
$$
2B_{\cV}C_Ve^{-3M_yr_y^2}
=
B_{\cV}r_y^\beta.
$$
Since $k\in[C_U]$ and $\yb\in\Omega_{\cV}$ are arbitrary, taking the supremum over $k$ and $\yb$ gives
$$
\sup_{k\in[C_U]}
\|G(u_k)-\widetilde v_k\|_{L^\infty(\Omega_{\cV})}
\le
(2^\beta+1)B_{\cV}r_y^\beta.
$$
This proves \eqref{eq:output_domain_pou_error}.
\end{proof}

\subsubsection{Proof of Lemma~\ref{lemma:quadrature}}
\label{app:proof:lemma:quadrature}
\begin{proof}[Proof of Lemma~\ref{lemma:quadrature}]
\phantomsection
\label{proof:quadrature}
With $\alpha=m_\alpha+\nu$, $s_{\rm quad}$, and $q$ chosen as in Lemma~\ref{lemma:quadrature}, we have $2s_{\rm quad}-1\ge m_\alpha$. Partitioning $\Omega_{\cU}=[0,1]^{d_1}$ into $q^{d_1}$ congruent subcubes and applying the same quadrature rule on each subcube gives a multidimensional compound rule \cite[Section~5.8]{davis1984methods}.

On each subcube, we use the tensor-product $s_{\rm quad}$-point Gauss--Legendre rule. The one-dimensional $s_{\rm quad}$-point Gauss--Legendre rule has positive weights and is exact for polynomials of degree at most $2s_{\rm quad}-1$ \cite[Section~2.7]{davis1984methods}. By the Cartesian product construction \cite[Section~5.6]{davis1984methods}, the resulting tensor-product rule has $s_{\rm quad}^{d_1}$ nodes on each subcube and is exact for every polynomial whose degree in each coordinate is at most $2s_{\rm quad}-1$.

For each cube $K$, let $p_K$ be the Taylor polynomial of $h$ of degree $m_\alpha$ about the center of $K$. The H\"older Taylor remainder gives
$$
\|h-p_K\|_{L^\infty(K)}
\le
C_T q^{-\alpha}
\|h\|_{C^\alpha(\Omega_{\cU})},
$$
where $C_T=C_T(d_1,\alpha)>0$. Since $2s_{\rm quad}-1\ge m_\alpha$, the quadrature rule is exact on $p_K$. Moreover, its weights are positive and sum to $|K|$. Hence
$$
\begin{aligned}
\left|
\int_K h(\xb)\,d\xb-Q_K(h)
\right|
&\le
\left|
\int_K(h-p_K)(\xb)\,d\xb
\right|
+
\left|Q_K(h-p_K)\right|
\\
&\le
2|K|
\|h-p_K\|_{L^\infty(K)}
\\
&\le
2C_T|K|q^{-\alpha}
\|h\|_{C^\alpha(\Omega_{\cU})}.
\end{aligned}
$$
Summing over all $q^{d_1}$ cubes and using
$\sum_K|K|=1$ gives
$$
\left|\int_{\Omega_{\cU}}h(\xb)\,d\xb-Q_q(h)\right|
\le
2C_Tq^{-\alpha}\|h\|_{C^\alpha(\Omega_{\cU})}.
$$

The composite tensor-product rule uses $N_q=(s_{\rm quad}q)^{d_1}$ quadrature points. By the choice of $q$ in Lemma~\ref{lemma:quadrature}, $N_q\le n_x$ and $q\ge \frac{n_x^{1/d_1}}{2s_{\rm quad}}$. If $N_q<n_x$, append distinct grid points with zero weights. Thus,
$$
\left|\int_{\Omega_{\cU}}h(\xb)\,d\xb - \sum_{j=1}^{n_x}w_jh(\xb_j) \right|
\le
2C_T(2s_{\rm quad})^\alpha \|h\|_{C^\alpha(\Omega_{\cU})} n_x^{-\alpha/d_1}.
$$
Therefore, the desired estimate holds with $C_{\rm quad,0}:=2C_T(2s_{\rm quad})^\alpha$.

The constructed weights are nonnegative. Since the quadrature rule is exact for the constant polynomial $1$,
$$
\sum_{j=1}^{n_x}w_j = \int_{\Omega_{\cU}}1\,d\xb = 1,
$$
which proves \eqref{eq:quadrature_weight_stability}.

Since the cover is internal, $u_k\in\cU$. By
Assumption~\ref{assum:input_space} and the standard product estimate in H\"older spaces,
$$
\|uu_k\|_{C^\alpha(\Omega_{\cU})}
\le
C_{\alpha,d_1}
\|u\|_{C^\alpha(\Omega_{\cU})}
\|u_k\|_{C^\alpha(\Omega_{\cU})}
\le
C_{\alpha,d_1}B_{\cU}^2.
$$
Applying the preceding quadrature estimate to $h=uu_k$ gives
$$
\left|
\langle u,u_k\rangle_{L^2(\Omega_{\cU})}
-
\langle u,u_k\rangle_{n_x}
\right|
\le
C_{\rm quad,0}
C_{\alpha,d_1}B_{\cU}^2
n_x^{-\alpha/d_1}.
$$
Define
\begin{equation}
\label{eq:Cquad_definition}
C_{\rm quad}
:=
C_{\rm quad,0}C_{\alpha,d_1}B_{\cU}^2.
\end{equation}
Taking the supremum over $u\in\cU$ and the maximum over
$k\in[C_U]$ then proves \eqref{eq:quadrature_error}.
\end{proof}

\subsubsection{Proof of Lemma~\ref{lemma:total_approximation_error}}
\label{app:proof:lemma:total_approximation_error}

\begin{proof}[Proof of Lemma~\ref{lemma:total_approximation_error}]
\phantomsection
\label{proof:total_approximation_error}
Fix arbitrary $u\in\cU$.

\medskip
\noindent\textbf{Step 1: Error decomposition.}
By the definitions of $G_{r_u}$ in \eqref{eq:input_space_pou_operator}, $\widetilde G_{r_u,r_y}$ in \eqref{eq:continuous_two_level_pou}, and $\widetilde G_{r_u,r_y,n_x}$ in \eqref{eq:discrete_two_level_pou}, the triangle inequality in $L^2(\rho_y)$ gives
$$
\begin{aligned}
\left\|G(u)-\widetilde G_{r_u,r_y,n_x}(u)\right\|_{L^2(\rho_y)}
&\le
\left\|G(u)-G_{r_u}(u)\right\|_{L^2(\rho_y)}
\\
&\quad+
\left\|G_{r_u}(u)-\widetilde G_{r_u,r_y}(u)\right\|_{L^2(\rho_y)}
\\
&\quad+
\left\|\widetilde G_{r_u,r_y}(u) - \widetilde G_{r_u,r_y,n_x}(u) \right\|_{L^2(\rho_y)}.
\end{aligned}
$$

\medskip
\noindent\textbf{Step 2: Bounds for the three terms.}
By Lemma~\ref{lemma:input_space_pou},
$$
\|G(u)-G_{r_u}(u)\|_{L^2(\rho_y)}
\le
(2^\gamma+1)L_Gr_u^\gamma.
$$

For the second term, using
\eqref{eq:input_space_pou_operator},
\eqref{eq:output_domain_pou_approximation}, and
\eqref{eq:continuous_two_level_pou},
$$
G_{r_u}(u)-\widetilde G_{r_u,r_y}(u) = \sum_{k=1}^{C_U} \widetilde\beta_k(u) \bigl(G(u_k)-\widetilde v_k\bigr).
$$
Therefore, by the triangle inequality in $L^2(\rho_y)$,
$$
\left\|G_{r_u}(u)-\widetilde G_{r_u,r_y}(u)\right\|_{L^2(\rho_y)}
\le
\sum_{k=1}^{C_U}\widetilde\beta_k(u)\left\|G(u_k)-\widetilde v_k \right\|_{L^2(\rho_y)}
\le
\sum_{k=1}^{C_U} \widetilde\beta_k(u) \left\|G(u_k)-\widetilde v_k \right\|_{L^\infty(\Omega_{\cV})}.
$$
By Lemma~\ref{lemma:output_domain_pou} and $\sum_{k=1}^{C_U}\widetilde\beta_k(u)=1$,
$$
\left\|G_{r_u}(u)-\widetilde G_{r_u,r_y}(u)\right\|_{L^2(\rho_y)}
\le
(2^\beta+1)B_{\cV}r_y^\beta.
$$
It remains to bound the third term. Define the continuous and discrete
logits by
$$
\ell_k(u) := 2M_u\langle u,u_k\rangle_{L^2(\Omega_{\cU})} - M_u\|u_k\|_{L^2(\Omega_{\cU})}^2,
\qquad
\ell_{k,n_x}(u) := 2M_u\langle u,u_k\rangle_{n_x} - M_u\|u_k\|_{L^2(\Omega_{\cU})}^2.
$$
By Lemma~\ref{lemma:quadrature},
$$
\left|\langle u,u_k\rangle_{L^2(\Omega_{\cU})} - \langle u,u_k\rangle_{n_x}\right|
\le
C_{\rm quad}n_x^{-\alpha/d_1}
$$
uniformly over $u\in\cU$ and $k\in[C_U]$.  Therefore, for every $k\in[C_U]$,
$$
|\ell_k(u)-\ell_{k,n_x}(u)|
=
2M_u\left|\langle u,u_k\rangle_{L^2(\Omega_{\cU})} - \langle u,u_k\rangle_{n_x}\right|
\le
2M_uC_{\rm quad}n_x^{-\alpha/d_1}.
$$
Define $\mathbf a(u) := (\ell_1(u),\ldots,\ell_{C_U}(u))$ and $\mathbf b(u) := (\ell_{1,n_x}(u),\ldots,\ell_{C_U,n_x}(u))$. Then
$$
\|\mathbf a(u)-\mathbf b(u)\|_\infty \le 2M_uC_{\rm quad}n_x^{-\alpha/d_1}.
$$
By Lemma~\ref{lemma:softmax_lipschitz},
$$
\sum_{k=1}^{C_U}
|\widetilde\beta_k(u)-\widetilde\beta_{k,n_x}(u)|
=
\|\operatorname{softmax}(\mathbf a(u)) - \operatorname{softmax}(\mathbf b(u))\|_1 
\le
2\|\mathbf a(u)-\mathbf b(u)\|_\infty \le
4M_uC_{\rm quad}n_x^{-\alpha/d_1}.
$$
Finally, using \eqref{eq:continuous_two_level_pou} and
\eqref{eq:discrete_two_level_pou},
$$
|\widetilde G_{r_u,r_y}(u)(\yb) -\widetilde G_{r_u,r_y,n_x}(u)(\yb)| 
=
\left|\sum_{k=1}^{C_U}\bigl(\widetilde\beta_k(u)-\widetilde\beta_{k,n_x}(u)\bigr)\widetilde v_k(\yb)\right|
\le
\sum_{k=1}^{C_U}|\widetilde\beta_k(u)-\widetilde\beta_{k,n_x}(u)| \,|\widetilde v_k(\yb)|.
$$
Since $\widetilde v_k(\yb)$ is a convex combination of
$\{G(u_k)(\zb_l)\}_{l=1}^{C_V}$, Assumption~\ref{assum:output_space} gives
$|\widetilde v_k(\yb)|\le B_{\cV}$. Therefore,
$$
|\widetilde G_{r_u,r_y}(u)(\yb) -\widetilde G_{r_u,r_y,n_x}(u)(\yb)|
\le
4B_{\cV}M_uC_{\rm quad}n_x^{-\alpha/d_1}.
$$
Since $\rho_y$ is a probability distribution, this uniform bound implies
$$
\left\|\widetilde G_{r_u,r_y}(u) -\widetilde G_{r_u,r_y,n_x}(u)\right\|_{L^2(\rho_y)}
\le
4B_{\cV}M_uC_{\rm quad}n_x^{-\alpha/d_1}.
$$
Combining the three bounds and taking the supremum over $u\in\cU$
proves \eqref{eq:total_approximation_error}.
\end{proof}

\subsection{Proofs for the Transformer Realization}

\subsubsection{Proof of Lemma~\ref{lemma:pair_token_preprocess}}
\label{app:proof:lemma:pair_token_preprocess}

\begin{proof}[Proof of Lemma~\ref{lemma:pair_token_preprocess}]
\phantomsection
\label{proof:pair_token_preprocess}

Let
$$
\Xb(u,\yb):=
\begin{bmatrix}
\xb_1 & \cdots & \xb_{n_x} & \mathbf 0_{d_1} & \mathbf 0_{d_1\times(P-n_x-1)}
\\
\mathbf 0_{d_2} & \cdots & \mathbf 0_{d_2} & \yb & \mathbf 0_{d_2\times(P-n_x-1)}
\\
u(\xb_1) & \cdots & u(\xb_{n_x}) & 0 & \mathbf 0_{1\times(P-n_x-1)}
\end{bmatrix}
\in\RR^{(d_1+d_2+1)\times P}.
$$
Thus, the first $n_x$ columns contain the discretized input information, the $(n_x+1)$-st column contains the query point $\yb$, and all remaining input coordinates are zero.

Choose the shared token-wise affine embedding
$$
\Wb_E=
\begin{bmatrix}
I_{d_1+d_2+1}
\\
0_{(D-d_1-d_2-1)\times(d_1+d_2+1)}
\end{bmatrix}
\in\RR^{D\times(d_1+d_2+1)},
\qquad
\bb_E=\mathbf 0_D.
$$
Then
$$
\Sb := \Wb_E\Xb(u,\yb)+\bb_E\mathbf 1_P^\top\in\RR^{D\times P}
$$
places the discretized input and query information in the first
$d_1+d_2+1$ coordinates and sets all remaining coordinates to zero.

Define $\mathbf s_\theta := \bigl(\sin(\theta_1),\ldots,\sin(\theta_P)\bigr)$ and $\mathbf c_\theta := \bigl(\cos(\theta_1),\ldots,\cos(\theta_P)\bigr)$. Let the fixed structural-positional encoding matrix be
$$
\mathbf P_{\rm pos} :=
\begin{bmatrix}
0_{(d_1+d_2+1)\times P}
\\
\mathbf e_P^\top
\\
0_{(n_x+3)\times P}
\\
\mathbf 1_P^\top
\\
\mathbf 1_P^\top-\mathbf e_P^\top
\\
\mathbf s_\theta
\\
\mathbf c_\theta
\end{bmatrix}
\in\RR^{D\times P},
$$
where $\mathbf e_P\in\RR^P$ is the $P$-th standard basis vector.
Here, $\mathbf e_P^\top=(0,\ldots,0,1)$ is the null-column indicator, while $\mathbf 1_P^\top-\mathbf e_P^\top=(1,\ldots,1,0)$ is the active-column indicator.

Finally, define the preprocessing operator by
$$
\Zb_0 := \Sb+ \Pb_{\rm pos} = \Wb_E \Xb(u,\yb) + \bb_E\mathbf 1_P^\top + \Pb_{\rm pos}.
$$
This gives exactly the representation in \eqref{eq:pair_token_preprocess_z0}.

\end{proof}

\subsubsection{Proof of Lemma~\ref{lemma:pair_token_affine_mha}}
\label{app:proof:lemma:pair_token_affine_mha}

\begin{proof}[Proof of Lemma~\ref{lemma:pair_token_affine_mha}]
\phantomsection
\label{proof:pair_token_affine_mha}

Set $M_s:=\frac{M}{c_P}$, $M_\theta:=\frac{4M}{c_P}$.
Let
$$
\chi_t:=1_{\{t<P\}},
\qquad
\nu_t:=1_{\{t=P\}},
\qquad
t\in[P].
$$
Recall that row $D-3$ of $\Zb_0$ is the all-one row, row $D-2$
equals $(1,\ldots,1,0)$ and hence stores $\chi_t$, while row
$d_1+d_2+2$ is the null-column indicator and hence stores $\nu_t$.

\medskip
\noindent\textbf{Step 1: Feature extraction heads.}
In this step, each head fixes one output feature row $r$ and one output column $j$. The indices $t$ and $j'$ are used for the  source and target columns of the attention matrix. For
$$
h=(r-1)P+j, \qquad r\in[n_x+3], \qquad j\in[P],
$$
define the source column
$$
i_r:=
\begin{cases}
r, & r\in[n_x],\\
n_x+1, & r=n_x+1,\\
1, & r\in\{n_x+2,n_x+3\}.
\end{cases}
$$
Set $M_0:=M_\theta\left(1-\frac{c_P}{2}\right)+M_s$. This step uses exactly $(n_x+3)P$ heads.

For $j<P$, write
$$
j=j(k_j,l_j)=(k_j-1)C_V+l_j.
$$
Choose $\mathbf w_{r,j}^\top\in\RR^{1\times D}$ by
$$
\mathbf w_{r,j}^\top
:=
\begin{cases}
2M_u w_r u_{k_j}(\xb_r)\mathbf e_{d_1+d_2+1}^\top,
& r\in[n_x],\ j<P,
\\[1ex]
2M_y\displaystyle\sum_{a=1}^{d_2}(\zb_{l_j})_a\mathbf e_{d_1+a}^\top - M_y\|\zb_{l_j}\|_2^2\mathbf e_{D-2}^\top,
& r=n_x+1,\ j<P,
\\[2ex]
N_{k_j}\mathbf e_{D-2}^\top,
& r=n_x+2,\ j<P,
\\[1ex]
A_{k_j,l_j}\mathbf e_{D-2}^\top,
& r=n_x+3,\ j<P,
\\[1ex]
\mathbf 0_{1\times D},
& j=P.
\end{cases}
$$

Denote the ideal target features by
$$
F_{r,j} := \mathbf w_{r,j}^\top(\Zb_0)_{:,i_r}.
$$
Then
$$
F_{r,j}
=
\begin{cases}
B_{r,k_j}(u),
& r\in[n_x],\ j<P,
\\
Q_{l_j}(\yb),
& r=n_x+1,\ j<P,
\\
N_{k_j},
& r=n_x+2,\ j<P,
\\
A_{k_j,l_j},
& r=n_x+3,\ j<P,
\\
0,
& j=P.
\end{cases}
$$
Thus $F_{r,j}$ is the ideal target feature that the first MHA layer
targets at row $r$ and column $j$.

For $j<P$,
$$
\|\mathbf w_{r,j}\|_\infty
\le
\begin{cases}
2M_uB_{\cU},
& r\in[n_x],
\\
(2+d_2)M_y,
& r=n_x+1,
\\
M_uB_{\cU}^2,
& r=n_x+2,
\\
B_{\cV},
& r=n_x+3.
\end{cases}
$$
Moreover, for every $t\in[P]$,
$$
\left|
\mathbf w_{r,j}^\top(\Zb_0)_{:,t}
\right|
\le
\begin{cases}
2M_u B_{\cU}^2,
& r\in[n_x],
\\
3d_2M_y,
& r=n_x+1,
\\
M_u B_{\cU}^2,
& r=n_x+2,
\\
B_{\cV},
& r=n_x+3.
\end{cases}
$$
Hence, by \eqref{eq:B_aff_definition},
$$
\|\mathbf w_{r,j}\|_\infty\le B_{\rm aff},
\qquad
\left|
\mathbf w_{r,j}^\top(\Zb_0)_{:,t}
\right|
\le B_{\rm aff},
$$
for every $r\in[n_x+3]$ and $j,t\in[P]$.

We construct the query and key matrices as
$$
\Qb_1^h =
\begin{bmatrix}
\mathbf e_{D-3}^\top
\\
\mathbf e_{D-1}^\top
\\
\mathbf e_D^\top
\\
M_s\sin(\theta_{i_r})\mathbf e_{D-3}^\top
\\
M_s\cos(\theta_{i_r})\mathbf e_{D-3}^\top
\end{bmatrix}
\in\RR^{5\times D},
\qquad
\Kb_1^h =
\begin{bmatrix}
M_0\mathbf e_{d_1+d_2+2}^\top
\\
M_\theta\sin(\theta_j)\mathbf e_{D-2}^\top
\\
M_\theta\cos(\theta_j)\mathbf e_{D-2}^\top
\\
\mathbf e_{D-1}^\top -\sin(\theta_P)\mathbf e_{d_1+d_2+2}^\top
\\
\mathbf e_D^\top -\cos(\theta_P)\mathbf e_{d_1+d_2+2}^\top
\end{bmatrix}
\in\RR^{5\times D}.
$$
It follows that
$$
\mathbf q_{j'}^h := (\Qb_1^h\Zb_0)_{:,j'} =
\begin{bmatrix}
1
\\
\sin(\theta_{j'})
\\
\cos(\theta_{j'})
\\
M_s\sin(\theta_{i_r})
\\
M_s\cos(\theta_{i_r})
\end{bmatrix}
\in\RR^5.
$$
And
$$
\mathbf k_t^h := (\Kb_1^h\Zb_0)_{:,t} =
\begin{bmatrix}
0
\\
M_\theta\sin(\theta_j)
\\
M_\theta\cos(\theta_j)
\\
\sin(\theta_t)
\\
\cos(\theta_t)
\end{bmatrix}
\in\RR^5,
\quad
t<P,
\qquad
\mathbf k_P^h := (\Kb_1^h\Zb_0)_{:,P} =
\begin{bmatrix}
M_0
\\
0
\\
0
\\
0
\\
0
\end{bmatrix}.
$$
Thus
$$
s_{t,j'}^h =
\begin{cases}
M_\theta\cos(\theta_{j'}-\theta_j) + M_s\cos(\theta_t-\theta_{i_r}),
& t<P,
\\
M_0,
& t=P.
\end{cases}
$$
The source-column locating term satisfies
$$
M_s\cos(\theta_t-\theta_{i_r}) = M_s,
\qquad
t=i_r,
$$
and
$$
M_s\cos(\theta_t-\theta_{i_r})
\le
M_s(1-c_P),
\qquad
t<P,\quad t\ne i_r.
$$
The output-column locating term satisfies
$$
M_\theta\cos(\theta_{j'}-\theta_j) = M_\theta,
\qquad
j'=j,
$$
and
$$
M_\theta\cos(\theta_{j'}-\theta_j)
\le
M_\theta(1-c_P),
\qquad
j'\ne j.
$$
Choose the value matrix as
$$
\Vb_1^h =
\begin{bmatrix}
\mathbf w_{r,j}^\top
\\
\mathbf 0_{1\times D}
\end{bmatrix}
\in\RR^{2\times D}.
$$
Therefore,
$$
(\Vb_1^h\Zb_0)_{:,t} =
\begin{bmatrix}
\mathbf w_{r,j}^\top(\Zb_0)_{:,t}
\\
0
\end{bmatrix},
$$
and
$$
\left|(\Vb_1^h\Zb_0)_{1,t}\right|
\le
B_{\rm aff}.
$$
We separate the target and non-target attention columns. For the target column $j'=j$, for every $t<P$ with $t\ne i_r$,
$$
s_{i_r,j}^h-s_{t,j}^h = M_s\left(1-\cos(\theta_t-\theta_{i_r})
\right) \ge c_PM_s = M,
$$
and
$$
s_{i_r,j}^h-s_{P,j}^h = M_\theta+M_s-M_0 = \frac{c_P}{2}M_\theta =
2M.
$$
Therefore,
$$
\sum_{t\ne i_r}(\Ab_1^h)_{t,j} \le \sum_{t\ne i_r} e^{s_{t,j}^h-s_{i_r,j}^h} \le Pe^{-M}.
$$
Hence,
$$
\left|\bigl(\Vb_1^h\Zb_0\Ab_1^h\bigr)_{1,j} - F_{r,j}\right| 
=
\left|\sum_{t\ne i_r}(\Ab_1^h)_{t,j}\left(\mathbf w_{r,j}^\top(\Zb_0)_{:,t} - \mathbf w_{r,j}^\top(\Zb_0)_{:,i_r}\right)\right|
\le
2PB_{\rm aff}e^{-M}.
$$
For a non-target column $j'\ne j$, for every $t<P$,
$$
s_{P,j'}^h-s_{t,j'}^h \ge M_0-M_\theta(1-c_P)-M_s = \frac{c_P}{2}M_\theta = 2M.
$$
Therefore,
$$
\sum_{t<P}(\Ab_1^h)_{t,j'}
\le
\sum_{t<P}e^{s_{t,j'}^h-s_{P,j'}^h}
\le
Pe^{-2M}
\le
Pe^{-M}.
$$
Since
$$
\mathbf w_{r,j}^\top(\Zb_0)_{:,P}=0,
$$
we have
$$
\left|\bigl(\Vb_1^h\Zb_0\Ab_1^h\bigr)_{1,j'}\right|
\le
PB_{\rm aff}e^{-M}.
$$
\medskip
\noindent\textbf{Step 2: Indicator and positional encoding heads.}
Let $h_0:=(n_x+3)P$. The head $h_0+1$ preserves the all-one row and the active-indicator row. We construct the query and key matrices as
$$
\Qb_1^{h_0+1} =
\begin{bmatrix}
\mathbf e_{D-2}^\top
\\
\mathbf e_{d_1+d_2+2}^\top
\\
\mathbf 0_{1\times D}
\\
\mathbf 0_{1\times D}
\\
\mathbf 0_{1\times D}
\end{bmatrix}
\in\RR^{5\times D},
\qquad
\Kb_1^{h_0+1} =
\begin{bmatrix}
M\mathbf e_{D-2}^\top
\\
M\mathbf e_{d_1+d_2+2}^\top
\\
\mathbf 0_{1\times D}
\\
\mathbf 0_{1\times D}
\\
\mathbf 0_{1\times D}
\end{bmatrix}
\in\RR^{5\times D}.
$$
Then
$$
s_{t,j'}^{h_0+1} = M\chi_t\chi_{j'} + M\nu_t\nu_{j'}.
$$
Define
$$
v_t^{\rm ind} := \chi_t-(P-1)e^{-M}\nu_t.
$$
Choose the value matrix
$$
\Vb_1^{h_0+1} =
\begin{bmatrix}
\mathbf e_{D-3}^\top
\\
\mathbf e_{D-2}^\top - (P-1)e^{-M}\mathbf e_{d_1+d_2+2}^\top
\end{bmatrix}
\in\RR^{2\times D}.
$$
Therefore,
$$
(\Vb_1^{h_0+1}\Zb_0)_{:,t} =
\begin{bmatrix}
1
\\
v_t^{\rm ind}
\end{bmatrix},
\qquad
t\in[P].
$$
For $j'<P$,
$$
\sum_{t=1}^P(\Ab_1^{h_0+1})_{t,j'}v_t^{\rm ind}
=
\frac{(P-1)e^M-(P-1)e^{-M}}{(P-1)e^M+1}
=
\frac{(P-1)(e^M-e^{-M})}{(P-1)e^M+1}
=:
a_M,
$$
whereas for $j'=P$,
$$
\sum_{t=1}^P(\Ab_1^{h_0+1})_{t,P}v_t^{\rm ind}
=
\frac{(P-1)-e^M(P-1)e^{-M}}{(P-1)+e^M}
=
0.
$$
Hence,
$$
\bigl(\Vb_1^{h_0+1}\Zb_0\Ab_1^{h_0+1}\bigr)_{:,j'}
=
\begin{cases}
\begin{bmatrix}
1\\
a_M
\end{bmatrix},
& j'<P,
\\[3ex]
\begin{bmatrix}
1\\
0
\end{bmatrix},
& j'=P.
\end{cases}
$$
Moreover,
$$
1-a_M =
\frac{1+(P-1)e^{-M}}{(P-1)e^M+1}
\le
2e^{-M}.
$$
Since $P\ge2$ and $Pe^{-M}\le1/4$, we have $e^{-M}\le1/8$ and hence
$$
a_M \ge 1-2e^{-M} \ge \frac34,
\qquad
a_M^{-1}\le2.
$$
Let $M_{\rm id}:=\frac{M}{c_P}$. The head $h_0+2$ preserves the sinusoidal positional rows. We use
$$
\Qb_1^{h_0+2} =
\begin{bmatrix}
\mathbf 0_{1\times D}
\\
\mathbf e_{D-1}^\top
\\
\mathbf e_D^\top
\\
\mathbf 0_{1\times D}
\\
\mathbf 0_{1\times D}
\end{bmatrix},
\qquad
\Kb_1^{h_0+2} =
\begin{bmatrix}
\mathbf 0_{1\times D}
\\
M_{\rm id}\mathbf e_{D-1}^\top
\\
M_{\rm id}\mathbf e_D^\top
\\
\mathbf 0_{1\times D}
\\
\mathbf 0_{1\times D}
\end{bmatrix},
$$
where $\Qb_1^{h_0+2}, \Kb_1^{h_0+2} \in\RR^{5\times D}$. Then
$$
s_{t,j'}^{h_0+2} = M_{\rm id}\cos(\theta_t-\theta_{j'}),
$$
and, for $t\ne j'$,
$$
s_{j',j'}^{h_0+2} - s_{t,j'}^{h_0+2} \ge c_PM_{\rm id} = M.
$$
Consequently,
$$
\sum_{t\ne j'}(\Ab_1^{h_0+2})_{t,j'} \le Pe^{-M}.
$$
Choose
$$
\Vb_1^{h_0+2} =
\begin{bmatrix}
\mathbf e_{D-1}^\top
\\
\mathbf e_D^\top
\end{bmatrix}
\in\RR^{2\times D}.
$$
By the shift-invariant Softmax calculation on the uniform periodic grid, \cite[Proof of Lemma~5]{shi2026learning},
$$
\bigl(\Vb_1^{h_0+2}\Zb_0\Ab_1^{h_0+2}\bigr)_{:,j'}
=
\lambda_{\rm id}
\begin{bmatrix}
\sin(\theta_{j'})
\\
\cos(\theta_{j'})
\end{bmatrix},
\qquad
j'\in[P],
$$
where
$$
\lambda_{\rm id} :=
\frac{\sum_{t=1}^P e^{M_{\rm id}\cos(\theta_t)}\cos(\theta_t)}{\sum_{t=1}^P e^{M_{\rm id}\cos(\theta_t)}}.
$$
Furthermore,
$$
0 \le 1-\lambda_{\rm id} \le 2(P-1)e^{-M}.
$$
Since $Pe^{-M}\le\frac14$, $2(P-1)e^{-M}\le \frac12$. Hence
$$
\lambda_{\rm id}\ge\frac12,
\qquad
\lambda_{\rm id}^{-1}\le2.
$$
\medskip
\noindent\textbf{Step 3: Output projection and feature error bounds.}
For each head $h\in[H^1]$, write
$$
\Cb^h := \Vb_1^h\Zb_0\Ab_1^h \in\RR^{2\times P}.
$$
For $r\in[n_x+3]$ and $j,q\in[P]$, define
$$
G_{r,j,q} := \left(\Cb^{(r-1)P+j}\right)_{1,q} \in\RR.
$$
The $P$ heads assigned to feature row $r$ form the block
$$
\Cb^{(r)} :=
\begin{bmatrix}
G_{r,1,1} & G_{r,1,2} & \cdots & G_{r,1,P}
\\
0 & 0 & \cdots & 0
\\
G_{r,2,1} & G_{r,2,2} & \cdots & G_{r,2,P}
\\
0 & 0 & \cdots & 0
\\
\vdots & \vdots & \ddots & \vdots
\\
G_{r,P,1} & G_{r,P,2} & \cdots & G_{r,P,P}
\\
0 & 0 & \cdots & 0
\end{bmatrix}
\in\RR^{2P\times P}.
$$
The remaining two heads give
$$
\Cb_{\rm ind} :=
\begin{bmatrix}
1 & 1 & \cdots & 1 & 1
\\
a_M & a_M & \cdots & a_M & 0
\end{bmatrix}
\in\RR^{2\times P},
$$
and
$$
\Cb_\theta :=
\lambda_{\rm id}
\begin{bmatrix}
\sin(\theta_1)
&
\sin(\theta_2)
&
\cdots
&
\sin(\theta_{P-1})
&
\sin(\theta_P)
\\
\cos(\theta_1)
&
\cos(\theta_2)
&
\cdots
&
\cos(\theta_{P-1})
&
\cos(\theta_P)
\end{bmatrix}
\in\RR^{2\times P}.
$$
Concatenating all head outputs gives
$$
\Cb_1 :=
\begin{bmatrix}
\Cb^{(1)}
\\
\vdots
\\
\Cb^{(n_x+3)}
\\
\Cb_{\rm ind}
\\
\Cb_\theta
\end{bmatrix}
\in\RR^{2H^1\times P}.
$$
Let
$$
\mathbf I_{\rm sum} :=
\begin{bmatrix}
1 & 0 & 1 & 0 & \cdots & 1 & 0
\end{bmatrix}
\in\RR^{1\times 2P},
$$
and define
$$
\Rb_{\rm feat} :=
\begin{bmatrix}
\mathbf I_{\rm sum}
&
\mathbf 0_{1\times 2P}
&
\cdots
&
\mathbf 0_{1\times 2P}
\\
\mathbf 0_{1\times 2P}
&
\mathbf I_{\rm sum}
&
\cdots
&
\mathbf 0_{1\times 2P}
\\
\vdots
&
\vdots
&
\ddots
&
\vdots
\\
\mathbf 0_{1\times 2P}
&
\mathbf 0_{1\times 2P}
&
\cdots
&
\mathbf I_{\rm sum}
\end{bmatrix}
\in
\RR^{(n_x+3)\times 2(n_x+3)P}.
$$
Choose
$$
\Wb_1^O :=
\begin{bmatrix}
\Rb_{\rm feat}
&
\mathbf 0_{(n_x+3)\times 4}
\\
\mathbf 0_{(D-n_x-7)\times 2(n_x+3)P}
&
\mathbf 0_{(D-n_x-7)\times 4}
\\
\mathbf 0_{4\times 2(n_x+3)P}
&
I_4
\end{bmatrix}
\in
\RR^{D\times 2H^1}.
$$

For $r\in[n_x+3]$ and $j\in[P]$, define
$$
\widehat F_{r,j} := \sum_{j'=1}^P G_{r,j',j}.
$$
Then the output of the first MHA layer,
$$
\widehat{\Zb}_1 = \Wb_1^O\Cb_1,
$$
satisfies, for $j<P$,
$$
(\widehat{\Zb}_1)_{:,j} =
\begin{bmatrix}
\widehat F_{1,j}
\\
\vdots
\\
\widehat F_{n_x+3,j}
\\
\mathbf 0_{D-n_x-7}
\\
1
\\
a_M
\\
\lambda_{\rm id}\sin(\theta_j)
\\
\lambda_{\rm id}\cos(\theta_j)
\end{bmatrix},
\qquad
(\widehat{\Zb}_1)_{:,P} =
\begin{bmatrix}
\widehat F_{1,P}
\\
\vdots
\\
\widehat F_{n_x+3,P}
\\
\mathbf 0_{D-n_x-7}
\\
1
\\
0
\\
\lambda_{\rm id}\sin(\theta_P)
\\
\lambda_{\rm id}\cos(\theta_P)
\end{bmatrix}.
$$
For each feature row $r\in[n_x+3]$ and each column $j\in[P]$,
one feature head contributes the target-column error and the remaining $P-1$ heads contribute non-target-column errors. Hence
$$
\begin{aligned}
|\widehat F_{r,j}-F_{r,j}|
&=
\left|
G_{r,j,j}-F_{r,j}
+
\sum_{j'\ne j}G_{r,j',j}
\right|
\\
&\le
|G_{r,j,j}-F_{r,j}|
+
\sum_{j'\ne j}|G_{r,j',j}|
\\
&\le
2PB_{\rm aff}e^{-M}
+
(P-1)PB_{\rm aff}e^{-M}
\\
&\le
2P^2B_{\rm aff}e^{-M}
=
\varepsilon_{\rm feat}.
\end{aligned}
$$
For an active column $j=j(k_j,l_j)$, this gives
$$
\max\left\{
\max_{i\in[n_x]}
\left|
\widehat F_{i,j}-B_{i,k_j}(u)
\right|,
\left|
\widehat F_{n_x+1,j}-Q_{l_j}(\yb)
\right|,
\left|
\widehat F_{n_x+2,j}-N_{k_j}
\right|,
\left|
\widehat F_{n_x+3,j}-A_{k_j,l_j}
\right|
\right\}
\le
\varepsilon_{\rm feat}.
$$
Since $F_{r,P}=0$ for every $r\in[n_x+3]$, define $\delta_i^B:=\widehat F_{i,P}$, $\delta^Q:=\widehat F_{n_x+1,P}$, $\delta^N:=\widehat F_{n_x+2,P}$ and $\delta^A:=\widehat F_{n_x+3,P}$. Then
$$
\max\left\{\max_{i\in[n_x]}|\delta_i^B|, |\delta^Q|, |\delta^N|, |\delta^A| \right\}
\le
\varepsilon_{\rm feat}.
$$

\medskip
\noindent\textbf{Step 4: Parameter bound.}

We next bound the parameters of the MHA layer. For the feature extraction heads, \eqref{eq:B_aff_definition} gives $\|\Vb_1^h\|_{\max}\le B_{\rm aff}$. Moreover, since $M_s=\frac{M}{c_P}$ and $M_\theta=\frac{4M}{c_P}$,
$$
|M_0| \le M_\theta+M_s = \frac{5M}{c_P}.
$$
Thus, the feature-extraction heads satisfy
$$
\|\Qb_1^h\|_{\max},
\|\Kb_1^h\|_{\max}
\le
\max\left\{1,\frac{5M}{c_P}\right\},
\qquad
\|\Vb_1^h\|_{\max}\le B_{\rm aff}.
$$
The two auxiliary heads satisfy the same parameter bound, and
$\|\Wb_1^O\|_{\max}\le1$. Therefore,
$$
M_{A_1}
\le
\max\left\{
1,\,
B_{\rm aff},\,
\frac{5M}{c_P}
\right\}.
$$
\end{proof}

\subsubsection{Proof of Lemma~\ref{lemma:pair_token_joint_logit_ffn}}
\label{app:proof:lemma:pair_token_joint_logit_ffn}

\begin{proof}[Proof of Lemma~\ref{lemma:pair_token_joint_logit_ffn}]
\phantomsection
\label{proof:pair_token_joint_logit_ffn}

Let $\mathbf e_r\in\RR^D$ denote the $r$-th standard basis vector.
Define
$$
\mathbf r_\Xi^\top
:=
\sum_{i=1}^{n_x}\mathbf e_i^\top
+
\mathbf e_{n_x+1}^\top
-
\mathbf e_{n_x+2}^\top
-
M_{\rm out}\mathbf e_{D-3}^\top
+
a_M^{-1}M_{\rm out}\mathbf e_{D-2}^\top.
$$
Also define
$$
\mathbf r_A^\top:=\mathbf e_{n_x+3}^\top,
\qquad
\mathbf r_1^\top:=\mathbf e_{D-3}^\top.
$$
Let
$$
\Rb_1
:=
\begin{bmatrix}
\mathbf r_\Xi^\top
\\
\mathbf r_A^\top
\\
\mathbf r_1^\top
\\
\mathbf 0_{(D-3)\times D}
\end{bmatrix}
\in\RR^{D\times D}.
$$

For every active column $j=j(k_j,l_j)\in[P-1]$, using
\eqref{eq:first_block_mha_output},
$$
\mathbf r_\Xi^\top(\widehat{\Zb}_1)_{:,j}
=
\sum_{i=1}^{n_x}\widetilde B_{i,k_j,l_j}
+
\widetilde Q_{k_j,l_j}
-
\widetilde N_{k_j,l_j}
-
M_{\rm out}
+
a_M^{-1}M_{\rm out}a_M
=
\widehat\Xi_j.
$$
Moreover,
$$
\mathbf r_A^\top(\widehat{\Zb}_1)_{:,j}
=
\widetilde A_{k_j,l_j}
=
\widetilde A_j,
\qquad
\mathbf r_1^\top(\widehat{\Zb}_1)_{:,j}
=
1.
$$

For the null column,
$$
\mathbf r_\Xi^\top(\widehat{\Zb}_1)_{:,P}
=
\sum_{i=1}^{n_x}\delta_i^B
-\delta^N
+\delta^Q
-M_{\rm out}
=
\widehat\Xi_P,
$$
since $(\widehat{\Zb}_1)_{D-2,P}=0$. Also,
$$
\mathbf r_A^\top(\widehat{\Zb}_1)_{:,P}
=
\delta^A,
\qquad
\mathbf r_1^\top(\widehat{\Zb}_1)_{:,P}
=
1.
$$
Hence
$$
\Rb_1\widehat{\Zb}_1
=
\begin{bmatrix}
\widehat\Xi_1 & \cdots & \widehat\Xi_{P-1} & \widehat\Xi_P
\\
\widetilde A_1 & \cdots & \widetilde A_{P-1} & \delta^A
\\
1 & \cdots & 1 & 1
\\
\multicolumn{4}{c}{\mathbf 0_{(D-3)\times P}}
\end{bmatrix}.
$$

We realize this linear map by a point-wise ReLU FFN of hidden width
$d_{\rm ff}^1=2D$. Choose
$$
\Wb_1^1
=
\begin{bmatrix}
I_D
\\
-I_D
\end{bmatrix},
\qquad
\mathbf b_1^1=\mathbf 0_{2D},
$$
and
$$
\Wb_1^2
=
\begin{bmatrix}
\Rb_1 & -\Rb_1
\end{bmatrix},
\qquad
\mathbf b_1^2=\mathbf 0_D.
$$
Using $x=\sigma(x)-\sigma(-x)$ componentwise, for every
$\mathbf z\in\RR^D$,
$$
\Wb_1^2
\sigma\left(
\Wb_1^1\mathbf z+\mathbf b_1^1
\right)
+
\mathbf b_1^2
=
\Rb_1\mathbf z.
$$
Therefore,
$$
\Zb_1
=
F_1(\widehat{\Zb}_1)
=
\Rb_1\widehat{\Zb}_1,
$$
which proves \eqref{eq:first_block_ffn_output}.

For every active column $j=j(k_j,l_j)$, the feature bounds in
Lemma~\ref{lemma:pair_token_affine_mha} give
$$
\begin{aligned}
|\widehat\Xi_j-\Xi_j|
&\le
\sum_{i=1}^{n_x}
\left|
\widetilde B_{i,k_j,l_j}-B_{i,k_j}(u)
\right|
\\
&\quad+
\left|
\widetilde N_{k_j,l_j}-N_{k_j}
\right|
+
\left|
\widetilde Q_{k_j,l_j}-Q_{l_j}(\yb)
\right|
\\
&\le
(n_x+2)\varepsilon_{\rm feat}.
\end{aligned}
$$
Similarly,
$$
|\widetilde A_j-A_{k_j,l_j}|
\le
\varepsilon_{\rm feat}.
$$
This proves \eqref{eq:joint_logit_ffn_error}.

For the null column,
$$
\widehat\Xi_P
\le
-M_{\rm out}
+
(n_x+2)\varepsilon_{\rm feat}.
$$
Moreover, for every active $j\in[P-1]$,
$$
|\widehat\Xi_j|
\le
(n_x+2)
\left(
B_{\rm aff}+\varepsilon_{\rm feat}
\right).
$$
Therefore,
$$
\widehat\Xi_P-\max_{j\in[P-1]}\widehat\Xi_j
\le
-M_{\rm out}
+
(n_x+2)\varepsilon_{\rm feat}
+
(n_x+2)
\left(
B_{\rm aff}+\varepsilon_{\rm feat}
\right)
=
-M,
$$
by the definition of $M_{\rm out}$.
Together with $|\delta^A|\le\varepsilon_{\rm feat}$ from
Lemma~\ref{lemma:pair_token_affine_mha}, this proves
\eqref{eq:joint_logit_null_separation}.

Finally, under $Pe^{-M}\le1/4$,
Lemma~\ref{lemma:pair_token_affine_mha} gives $a_M^{-1}\le2$.
Since $M_{\rm out}>M>1$,
$$
\|\Rb_1\|_{\max}\le 2M_{\rm out}.
$$
Hence the displayed FFN matrices satisfy
$$
M_{F_1}
\le
2M_{\rm out},
$$
which proves \eqref{eq:first_ffn_parameter_bound}.

\end{proof}

\subsubsection{Proof of Lemma~\ref{lemma:pair_token_softmax_aggregation}}
\label{app:proof:lemma:pair_token_softmax_aggregation}

\begin{proof}[Proof of Lemma~\ref{lemma:pair_token_softmax_aggregation}]
\phantomsection
\label{proof:pair_token_softmax_aggregation}
Choose
$$
\Qb_2=\mathbf e_3^\top,
\qquad
\Kb_2=\mathbf e_1^\top,
\qquad
\Vb_2=\mathbf e_2^\top.
$$
By the form of $\Zb_1$ in
\eqref{eq:first_block_ffn_output}, for every target column $j'\in[P]$,
$$
q_{j'}:=(\Qb_2\Zb_1)_{j'}=1,
$$
while for every source column $j\in[P]$,
$$
k_j:=(\Kb_2\Zb_1)_j=\widehat\Xi_j.
$$
Hence the attention scores are
$$
s_{j,j'}=k_jq_{j'}=\widehat\Xi_j,
$$
independent of $j'$, and therefore every target column receives the same
attention weights
$$
\gamma_j
=
\frac{\exp(\widehat\Xi_j)}
{\sum_{q=1}^{P}\exp(\widehat\Xi_q)}.
$$

By \eqref{eq:joint_logit_null_separation},
$$
\widehat\Xi_P-\max_{j<P}\widehat\Xi_j\le -M,
$$
and thus
$$
\gamma_P
\le
\exp\left(
\widehat\Xi_P-\max_{j<P}\widehat\Xi_j
\right)
\le e^{-M}.
$$

The value projection satisfies
$$
(\Vb_2\Zb_1)_j
=
\widetilde A_j,
\qquad j<P,
$$
and
$$
(\Vb_2\Zb_1)_P=\delta^A.
$$
Therefore,
$$
\widehat g(u,\yb)
=
\sum_{j=1}^{P-1}\gamma_j\widetilde A_j
+
\gamma_P\delta^A.
$$
Choosing $\Wb_2^O=\mathbf e_1$ yields $\widehat{\Zb}_2 = A_2(\Zb_1)$, with the form in \eqref{eq:pair_token_softmax_output}.

For $j<P$,
$$
\gamma_j=(1-\gamma_P)\widehat\omega_j,
$$
so
$$
\sum_{j=1}^{P-1}
|\gamma_j-\widehat\omega_j|
=
\gamma_P
\le e^{-M}.
$$
Using
$|\widetilde A_j|\le B_{\cV}+\varepsilon_{\rm feat}$
and
$|\delta^A|\le\varepsilon_{\rm feat}$,
we obtain
$$
\left|
\widehat g(u,\yb)
-
\sum_{j=1}^{P-1}
\widehat\omega_j\widetilde A_j
\right|
\le
\left(
B_{\cV}+2\varepsilon_{\rm feat}
\right)e^{-M}.
$$

Finally,
$$
\|\Qb_2\|_{\max}
=
\|\Kb_2\|_{\max}
=
\|\Vb_2\|_{\max}
=
\|\Wb_2^O\|_{\max}
=
1,
$$
and hence $M_{A_2}\le1$.
\end{proof}

\subsection{Proof of the Covering Number Bound}
\label{app:proof:lemma:transformer_covering}

\begin{proof}[Proof of Lemma~\ref{lemma:transformer_covering}]
\phantomsection
\label{proof:transformer_covering}

Let $T_\Theta$ and $T_{\widetilde\Theta}$ be two Transformer networks in
$\cT_\epsilon$ parameterized by $\Theta$ and $\widetilde\Theta$,
respectively, where $\|\Theta\|_\infty,\, \|\widetilde\Theta\|_\infty \le M_{\max}$. Assume the parameter difference is bounded by $\|\Theta-\widetilde\Theta\|_\infty \le \delta$. We aim to bound $\|T_\Theta-T_{\widetilde\Theta}\|_{\infty,\infty}$ with respect to $\delta$.

\medskip
\noindent\textbf{Step 1: Pre-processing and First Encoder Block.}
By Lemma~\ref{lemma:pair_token_preprocess}, the fixed preprocessing map
gives the same initial representation for the two networks, $\Zb_0=\widetilde{\Zb}_0$. Since $\xb_i\in[0,1]^{d_1}$, $\yb\in[0,1]^{d_2}$, $\|u\|_{L^\infty(\Omega_{\cU})}\le B_{\cU}$, and all
structural-positional entries are bounded by $1$, we have
$$
r_0
:=
\|\Zb_0\|_{\max}
=
\|\widetilde{\Zb}_0\|_{\max}
\le
B_{\cU}.
$$

Let $A_1$ and $F_1$ denote the MHA and FFN mappings in the first encoder
block. We first consider the concatenated output of the $H^1$ attention
heads before the output projection, denoted by
$$
\mathbf H_1(\Zb_0)
\in
\mathbb R^{(H^1d_v^1)\times P}.
$$
For a single attention head $h\in[H^1]$, define the pre-Softmax score
matrix
$$
\Sb_1^h
:=
\Zb_0^\top
(\Kb_1^h)^\top
\Qb_1^h
\Zb_0
\in
\mathbb R^{P\times P},
$$
the corresponding attention matrix
$$
\Ab_1^h
:=
\operatorname{Softmax}(\Sb_1^h)
\in
\mathbb R^{P\times P},
$$
and the head output
$$
\mathbf H_1^h(\Zb_0)
:=
\Vb_1^h\Zb_0\Ab_1^h
\in
\mathbb R^{d_v^1\times P}.
$$
Define $\widetilde{\Sb}_1^h$, $\widetilde{\Ab}_1^h$, and
$\widetilde{\mathbf H}_1^h$ analogously. Recall that
$$
d_k^1=5,
\qquad
d_v^1=2,
\qquad
H^1=(n_x+3)P+2.
$$
We decompose the difference into a parameter error and an input error:
$$
\left\|
\mathbf H_1(\Zb_0)
-
\widetilde{\mathbf H}_1(\widetilde{\Zb}_0)
\right\|_{\max}
\le
\underbrace{
\left\|
\mathbf H_1(\Zb_0)
-
\widetilde{\mathbf H}_1(\Zb_0)
\right\|_{\max}
}_{\mathrm{(I)\ Parameter\ Error}}
+
\underbrace{
\left\|
\widetilde{\mathbf H}_1(\Zb_0)
-
\widetilde{\mathbf H}_1(\widetilde{\Zb}_0)
\right\|_{\max}
}_{\mathrm{(II)\ Input\ Error}}.
$$
Since the preprocessing map is fixed, $\Zb_0=\widetilde{\Zb}_0$, and hence $\mathrm{(II)}=0$. It remains to bound the parameter error $\mathrm{(I)}$.

For the parameter error, fix a head $h\in[H^1]$. Since the input
$\Zb_0$ is the same for the two parameter choices,
$$
\begin{aligned}
\left\|
(\Kb_1^h)^\top\Qb_1^h
-
(\widetilde{\Kb}_1^h)^\top\widetilde{\Qb}_1^h
\right\|_{\max}
&\le
d_k^1
\|\Kb_1^h\|_{\max}
\|\Qb_1^h-\widetilde{\Qb}_1^h\|_{\max}
\\
&\quad+
d_k^1
\|\Kb_1^h-\widetilde{\Kb}_1^h\|_{\max}
\|\widetilde{\Qb}_1^h\|_{\max}
\\
&\le
2d_k^1M_{\max}\delta
\\
&\le
10M_{\max}\delta.
\end{aligned}
$$
Therefore,
$$
\begin{aligned}
\|\Sb_1^h-\widetilde{\Sb}_1^h\|_{\max}
&=
\left\|
\Zb_0^\top
\left[
(\Kb_1^h)^\top\Qb_1^h
-
(\widetilde{\Kb}_1^h)^\top\widetilde{\Qb}_1^h
\right]
\Zb_0
\right\|_{\max}
\\
&\le
10D^2r_0^2M_{\max}\delta.
\end{aligned}
$$
By Lemma~\ref{lemma:softmax_lipschitz}, for every column $j\in[P]$,
$$
\left\|
(\Ab_1^h)_{:,j}
-
(\widetilde{\Ab}_1^h)_{:,j}
\right\|_1
\le
2\|\Sb_1^h-\widetilde{\Sb}_1^h\|_{\max}
\le
20D^2r_0^2M_{\max}\delta.
$$

Since each attention matrix is column-stochastic,
$$
\|(\Ab_1^h)_{:,j}\|_1
=
\|(\widetilde{\Ab}_1^h)_{:,j}\|_1
=
1.
$$
Hence
$$
\begin{aligned}
\left\|
\mathbf H_1^h(\Zb_0)
-
\widetilde{\mathbf H}_1^h(\Zb_0)
\right\|_{\max}
&=
\left\|
\Vb_1^h\Zb_0\Ab_1^h
-
\widetilde{\Vb}_1^h\Zb_0\widetilde{\Ab}_1^h
\right\|_{\max}
\\
&\le
\left\|
(\Vb_1^h-\widetilde{\Vb}_1^h)
\Zb_0\Ab_1^h
\right\|_{\max}
\\
&\quad+
\left\|
\widetilde{\Vb}_1^h
\Zb_0
(\Ab_1^h-\widetilde{\Ab}_1^h)
\right\|_{\max}
\\
&\le
D\delta r_0
+
DM_{\max}r_0
\max_{j\in[P]}
\left\|
(\Ab_1^h)_{:,j}
-
(\widetilde{\Ab}_1^h)_{:,j}
\right\|_1
\\
&\le
\left(
Dr_0
+
20D^3M_{\max}^2r_0^3
\right)\delta
\\
&\le
21D^3M_{\max}^2r_0^3\delta.
\end{aligned}
$$

Since $\mathbf H_1$ is obtained by concatenating the $H^1$ head outputs,
the max norm does not increase under concatenation. Therefore,
$$
\left\|
\mathbf H_1(\Zb_0)
-
\widetilde{\mathbf H}_1(\widetilde{\Zb}_0)
\right\|_{\max}
\le
21D^3M_{\max}^2B_{\cU}^3\delta.
$$
Moreover,
$$
\|\mathbf H_1(\Zb_0)\|_{\max},
\,
\|\widetilde{\mathbf H}_1(\widetilde{\Zb}_0)\|_{\max}
\le
DM_{\max}B_{\cU}.
$$
Since $P\ge n_x+2$,
$$
H^1d_v^1
=
2\bigl((n_x+3)P+2\bigr)
\le
4P^2.
$$
The MHA outputs are
$$
A_1(\Zb_0)
=
\Wb_1^O\mathbf H_1(\Zb_0),
\qquad
\widetilde A_1(\widetilde{\Zb}_0)
=
\widetilde{\Wb}_1^O
\widetilde{\mathbf H}_1(\widetilde{\Zb}_0).
$$
Therefore,
$$
\begin{aligned}
\left\|
A_1(\Zb_0)
-
\widetilde A_1(\widetilde{\Zb}_0)
\right\|_{\max}
&\le
4P^2
\left(
21D^3M_{\max}^3B_{\cU}^3
+
DM_{\max}B_{\cU}
\right)\delta
\\
&\le
88P^2D^3M_{\max}^3B_{\cU}^3\delta.
\end{aligned}
$$
Moreover,
$$
\|A_1(\Zb_0)\|_{\max},
\,
\|\widetilde A_1(\widetilde{\Zb}_0)\|_{\max}
\le
4P^2DM_{\max}^2B_{\cU}.
$$

For the first point-wise FFN, define
$$
\mathbf Y_1
:=
\sigma\left(
\Wb_1^1A_1(\Zb_0)+\mathbf b_1^1\mathbf 1_P^\top
\right),
$$
and
$$
\widetilde{\mathbf Y}_1
:=
\sigma\left(
\widetilde{\Wb}_1^1
\widetilde A_1(\widetilde{\Zb}_0)
+
\widetilde{\mathbf b}_1^1\mathbf 1_P^\top
\right).
$$
Since ReLU is $1$-Lipschitz,
$$
\begin{aligned}
\|\mathbf Y_1-\widetilde{\mathbf Y}_1\|_{\max}
&\le
\left(
4P^2D^2M_{\max}^2B_{\cU}
+
88P^2D^4M_{\max}^4B_{\cU}^3
+
1
\right)\delta
\\
&\le
93P^2D^4M_{\max}^4B_{\cU}^3\delta.
\end{aligned}
$$
The magnitude is bounded by
$$
\|\mathbf Y_1\|_{\max},
\,
\|\widetilde{\mathbf Y}_1\|_{\max}
\le
5P^2D^2M_{\max}^3B_{\cU}.
$$

Let
$$
\Zb_1
=
F_1(A_1(\Zb_0)),
\qquad
\widetilde{\Zb}_1
=
\widetilde F_1(\widetilde A_1(\widetilde{\Zb}_0)).
$$
Since $d_{\rm ff}^1=2D$,
$$
\begin{aligned}
\|\Zb_1-\widetilde{\Zb}_1\|_{\max}
&\le
\left(
10P^2D^3M_{\max}^3B_{\cU}
+
186P^2D^5M_{\max}^5B_{\cU}^3
+
1
\right)\delta
\\
&\le
197P^2D^5M_{\max}^5B_{\cU}^3\delta.
\end{aligned}
$$
Finally, set $r_1 := 11P^2D^3M_{\max}^4B_{\cU}$. Then
$$
\|\Zb_1\|_{\max},
\,
\|\widetilde{\Zb}_1\|_{\max}
\le
r_1.
$$
Moreover, since $P,D,M_{\max},B_{\cU}\ge1$, we have $r_1\ge1$.

\medskip
\noindent\textbf{Step 2: Second Encoder Block.}
We apply the same decomposition to the second encoder block, now accounting for both the parameter perturbation and the input perturbation $\Zb_1-\widetilde{\Zb}_1$. Since
$$
H^2=1,
\qquad
d_k^2=d_v^2=1,
\qquad
d_{\rm ff}^2=2D,
$$
the MHA bound simplifies. Adapting the derivation from Step~1 to the
second block gives
$$
\begin{aligned}
\left\|
\mathbf H_2(\Zb_1)
-
\widetilde{\mathbf H}_2(\widetilde{\Zb}_1)
\right\|_{\max}
&\le
5M_{\max}^2D^3r_1^3\delta
+
5M_{\max}^3D^3r_1^2
\|\Zb_1-\widetilde{\Zb}_1\|_{\max}
\\
&\le
5M_{\max}^2D^3
\left(
11P^2D^3M_{\max}^4B_{\cU}
\right)^3
\delta
\\
&\quad+
5M_{\max}^3D^3
\left(
11P^2D^3M_{\max}^4B_{\cU}
\right)^2
\\
&\qquad\qquad\cdot
\left(
197P^2D^5M_{\max}^5B_{\cU}^3\delta
\right)
\\
&\le
6655P^6D^{12}M_{\max}^{14}B_{\cU}^3\delta
\\
&\quad+
119185P^6D^{14}M_{\max}^{16}B_{\cU}^5\delta
\\
&\le
125840P^6D^{14}M_{\max}^{16}B_{\cU}^5\delta.
\end{aligned}
$$
Its magnitude is bounded by
$$
\|\mathbf H_2(\Zb_1)\|_{\max},
\,
\|\widetilde{\mathbf H}_2(\widetilde{\Zb}_1)\|_{\max}
\le
DM_{\max}r_1
\le
11P^2D^4M_{\max}^5B_{\cU}.
$$

Applying the output projection
$\Wb_2^O\in\RR^{D\times1}$, since $H^2d_v^2=1$, we have
$$
\begin{aligned}
\left\|
A_2(\Zb_1)
-
\widetilde A_2(\widetilde{\Zb}_1)
\right\|_{\max}
&\le
M_{\max}
\left\|
\mathbf H_2(\Zb_1)
-
\widetilde{\mathbf H}_2(\widetilde{\Zb}_1)
\right\|_{\max}
\\
&\quad+
\delta
\|\widetilde{\mathbf H}_2(\widetilde{\Zb}_1)\|_{\max}
\\
&\le
M_{\max}
\left(
125840P^6D^{14}M_{\max}^{16}B_{\cU}^5\delta
\right)
\\
&\quad+
\delta
\left(
11P^2D^4M_{\max}^5B_{\cU}
\right)
\\
&\le
125851P^6D^{14}M_{\max}^{17}B_{\cU}^5\delta.
\end{aligned}
$$
Moreover,
$$
\begin{aligned}
\|A_2(\Zb_1)\|_{\max},
\,
\|\widetilde A_2(\widetilde{\Zb}_1)\|_{\max}
&\le
M_{\max}
\max\left\{
\|\mathbf H_2(\Zb_1)\|_{\max},
\|\widetilde{\mathbf H}_2(\widetilde{\Zb}_1)\|_{\max}
\right\}
\\
&\le
11P^2D^4M_{\max}^6B_{\cU}.
\end{aligned}
$$

For the second point-wise FFN, define
$$
\mathbf Y_2
:=
\sigma\left(
\Wb_2^1A_2(\Zb_1)
+
\mathbf b_2^1\mathbf 1_P^\top
\right),
\qquad
\widetilde{\mathbf Y}_2
:=
\sigma\left(
\widetilde{\Wb}_2^1
\widetilde A_2(\widetilde{\Zb}_1)
+
\widetilde{\mathbf b}_2^1\mathbf 1_P^\top
\right).
$$
Their magnitudes satisfy
$$
\begin{aligned}
\|\mathbf Y_2\|_{\max},
\,
\|\widetilde{\mathbf Y}_2\|_{\max}
&\le
DM_{\max}
\max\left\{
\|A_2(\Zb_1)\|_{\max},
\|\widetilde A_2(\widetilde{\Zb}_1)\|_{\max}
\right\}
+
M_{\max}
\\
&\le
12P^2D^5M_{\max}^7B_{\cU}.
\end{aligned}
$$
Since ReLU is $1$-Lipschitz,
$$
\begin{aligned}
\|\mathbf Y_2-\widetilde{\mathbf Y}_2\|_{\max}
&\le
D\delta
\|\widetilde A_2(\widetilde{\Zb}_1)\|_{\max}
\\
&\quad+
DM_{\max}
\left\|
A_2(\Zb_1)
-
\widetilde A_2(\widetilde{\Zb}_1)
\right\|_{\max}
+
\delta
\\
&\le
D\delta
\left(
11P^2D^4M_{\max}^6B_{\cU}
\right)
\\
&\quad+
DM_{\max}
\left(
125851P^6D^{14}M_{\max}^{17}B_{\cU}^5\delta
\right)
+
\delta
\\
&\le
125863P^6D^{15}M_{\max}^{18}B_{\cU}^5\delta.
\end{aligned}
$$

Let
$$
\Zb_2
=
F_2(A_2(\Zb_1)),
\qquad
\widetilde{\Zb}_2
=
\widetilde F_2(\widetilde A_2(\widetilde{\Zb}_1)).
$$
Since $d_{\rm ff}^2=2D$, it follows that
$$
\begin{aligned}
\|\Zb_2-\widetilde{\Zb}_2\|_{\max}
&\le
2D\delta
\|\widetilde{\mathbf Y}_2\|_{\max}
+
2DM_{\max}
\|\mathbf Y_2-\widetilde{\mathbf Y}_2\|_{\max}
+
\delta
\\
&\le
2D\delta
\left(
12P^2D^5M_{\max}^7B_{\cU}
\right)
\\
&\quad+
2DM_{\max}
\left(
125863P^6D^{15}M_{\max}^{18}B_{\cU}^5\delta
\right)
+
\delta
\\
&\le
251751P^6D^{16}M_{\max}^{19}B_{\cU}^5\delta.
\end{aligned}
$$

Finally, define
$$
r_2
:=
\max\left\{
\|\Zb_2\|_{\max},
\|\widetilde{\Zb}_2\|_{\max}
\right\}.
$$
Then
$$
\begin{aligned}
r_2
&\le
2DM_{\max}
\max\left\{
\|\mathbf Y_2\|_{\max},
\|\widetilde{\mathbf Y}_2\|_{\max}
\right\}
+
M_{\max}
\\
&\le
25P^2D^6M_{\max}^8B_{\cU}.
\end{aligned}
$$

\medskip
\noindent\textbf{Step 3: Readout Layer and Covering Number.}
The final output is computed by a linear readout vector
$\mathbf c\in\RR^{PD}$ with
$\|\mathbf c\|_\infty\le M_{\max}$. Hence,
$$
\begin{aligned}
\left|
T_\Theta(S_{n_x}(u),\yb)
-
T_{\widetilde\Theta}(S_{n_x}(u),\yb)
\right|
&\le
\|\mathbf c\|_1
\|\Zb_2-\widetilde{\Zb}_2\|_{\max}
+
\|\mathbf c-\widetilde{\mathbf c}\|_1
\|\widetilde{\Zb}_2\|_{\max}
\\
&\le
(PDM_{\max})
\left(
251751P^6D^{16}M_{\max}^{19}B_{\cU}^5\delta
\right)
\\
&\quad+
(PD\delta)
\left(
25P^2D^6M_{\max}^8B_{\cU}
\right)
\\
&\le
251776P^7D^{17}M_{\max}^{20}B_{\cU}^5\delta.
\end{aligned}
$$
Since the preceding bound is uniform over
$u\in\cU$ and $\yb\in\Omega_{\cV}$, we obtain
$$
\|T_\Theta-T_{\widetilde\Theta}\|_{\infty,\infty}
\le
L_{\rm par}\delta,
$$
where
$$
L_{\rm par}
:=
251776P^7D^{17}M_{\max}^{20}B_{\cU}^5.
$$

To obtain an $\eta$-cover of $\cT_\epsilon$ in the
$\|\cdot\|_{\infty,\infty}$ norm, it suffices to construct a
$\delta_\eta$-cover of the parameter cube
$[-M_{\max},M_{\max}]^{N_{\rm total}}$, where
$$
\delta_\eta
:=
\frac{\eta}{L_{\rm par}}.
$$
Since $\eta\le1$, $L_{\rm par}\ge1$, and $M_{\max}\ge1$, we have $\delta_\eta\le M_{\max}$. Hence the parameter cube admits an $\ell^\infty$ $\delta_\eta$-cover of
cardinality at most $\left(\frac{2M_{\max}}{\delta_\eta}\right)^{N_{\rm total}}$.

Therefore,
$$
\log\mathcal N
\left(
\eta,\cT_\epsilon,\|\cdot\|_{\infty,\infty}
\right)
\le
N_{\rm total}
\log\left(
\frac{2M_{\max}L_{\rm par}}{\eta}
\right)
=
N_{\rm total}
\log\left(
\frac{
503552P^7D^{17}M_{\max}^{21}B_{\cU}^5
}{\eta}
\right).
$$
Finally, since the clipping map $\Pi_{B_{\cV}}$ is $1$-Lipschitz,
clipping does not increase the covering number. Hence
$$
\log\mathcal N
\left(
\eta,\cH_\epsilon^{\rm clip},\|\cdot\|_{\infty,\infty}
\right)
\le
N_{\rm total}
\log\left(
\frac{
503552P^7D^{17}M_{\max}^{21}B_{\cU}^5
}{\eta}
\right).
$$
This proves \eqref{eq:transformer_covering_bound}.
\end{proof}

\section{Additional Experimental Details}
\label{app:additional_experiments}

We provide additional implementation, evaluation, and prediction details
for the Burgers' and KdV experiments in Section~\ref{sec:experiments}.

\subsection{Common Experimental Setup}
\label{app:common_experimental_setup}

Both experiments use a two-block encoder-only Transformer with model
dimension $192$, four attention heads per block, and feedforward
dimension $384$. The architecture is fixed across sample sizes and
intrinsic dimensions within each problem. Table~\ref{tab:experiment_setup}
summarizes the training and evaluation settings.

\begin{table}[h]
\centering
\caption{Training and evaluation settings for the Burgers' and KdV experiments.}
\label{tab:experiment_setup}
\begin{tabular}{lcc}
\toprule
 & Burgers' & KdV \\
\midrule
Initial learning rate & $10^{-3}$ & $3\times10^{-4}$ \\
Batch size & 256 & 64 \\
Random output queries per training function & 8 & 16 \\
Maximum epochs & 12,000 & 6,000 \\
Test functions & 128 & 128 \\
Test spatial grid & 1024 uniform points & 1024 uniform points \\
Repeats per $(d_{\cU},n)$ & 5 & 5 \\
\bottomrule
\end{tabular}
\end{table}

All models are trained with Adam. All experiments use noiseless training targets. Learning rates are reduced on validation plateaus, and early stopping
and checkpoint selection are based only on validation error. The sample
size $n$ denotes the number of independently sampled training functions
rather than the number of query--function pairs.

For both equations, the reported test MSE averages squared prediction
errors in physical units over $128$ held-out functions and $1024$
uniformly spaced spatial points. This uniform-grid evaluation provides
a discrete estimate of the population risk with $\rho_y$ taken to be
uniform over the spatial domain. At each $(d_{\cU},n)$, the plotted
marker is the geometric mean of the five test MSEs, the error bar is
one sample standard deviation in log MSE, and the exponent $p$ is
obtained from the equal-weight log-space fit
\begin{equation}
\label{eq:empirical_scaling_fit}
\log E(n)=a-p\log n.
\end{equation}

\subsection{Viscous Burgers' Equation}
\label{app:Burgers'_experiments}

For Burgers', we consider
\begin{equation}
\label{eq:Burgers'_pde}
u_t+uu_x=\nu u_{xx},
\end{equation}
with $\nu=0.02$ and $T=0.05$. The input family is
\begin{equation}
\label{eq:Burgers'_input_family}
u_0(x;\theta)
=
\sum_{j=0}^{d_{\cU}-1}\theta_j\phi_j(x),
\qquad
\theta_j\overset{\rm i.i.d.}{\sim}
\operatorname{Unif}[-0.25,0.25].
\end{equation}
To define the basis functions, let
\begin{equation}
\label{eq:Burgers'_bump}
b_h(r)
=
\begin{cases}
\exp\!\left[-\dfrac{1}{1-(r/h)^2}\right],
& |r|<h,\\
0, & |r|\ge h,
\end{cases}
\qquad h=0.05.
\end{equation}
We set
\begin{equation}
\label{eq:Burgers'_bump_basis}
\phi_j(x)
=
\frac{b_h(r_j(x))}{Z_h},
\qquad
r_j(x)
=
\left((x-j/d_{\cU}+1/2)\bmod 1\right)-1/2,
\end{equation}
for $j=0,\ldots,d_{\cU}-1$, where $Z_h$ is the periodic $L^2$
normalization constant, computed on a uniform $65{,}536$-point grid.
We consider $d_{\cU}\in\{2,8\}$ and
$n\in\{32,64,128,256,512,1024\}$.
The smaller training sets are nested subsets of the corresponding
largest training pool.

The training and validation targets are generated using a Fourier
pseudo-spectral method with two-thirds dealiasing and fourth-order
Runge--Kutta time stepping. For the uniform-grid test evaluation,
we recompute the held-out reference solutions using the periodic
Cole--Hopf transformation with a $4096$-point Fourier discretization,
validated by spatial refinement and comparison with an independent
Fourier--Runge--Kutta solver. This reevaluation uses the originally
trained checkpoints without retraining. We use a fixed subset of
$128$ functions from the original held-out test set for every sample
size.

Figure~\ref{fig:Burgers'_predictions} uses repeat $0$ and fixed test
sample $0$ within each intrinsic dimension. For visualization, the Burgers' periodic domain is shifted and displayed
using the normalized coordinate $\widetilde x\in[0,1]$.

\begin{figure}[t]
    \centering
    \includegraphics[width=0.9\linewidth]
    {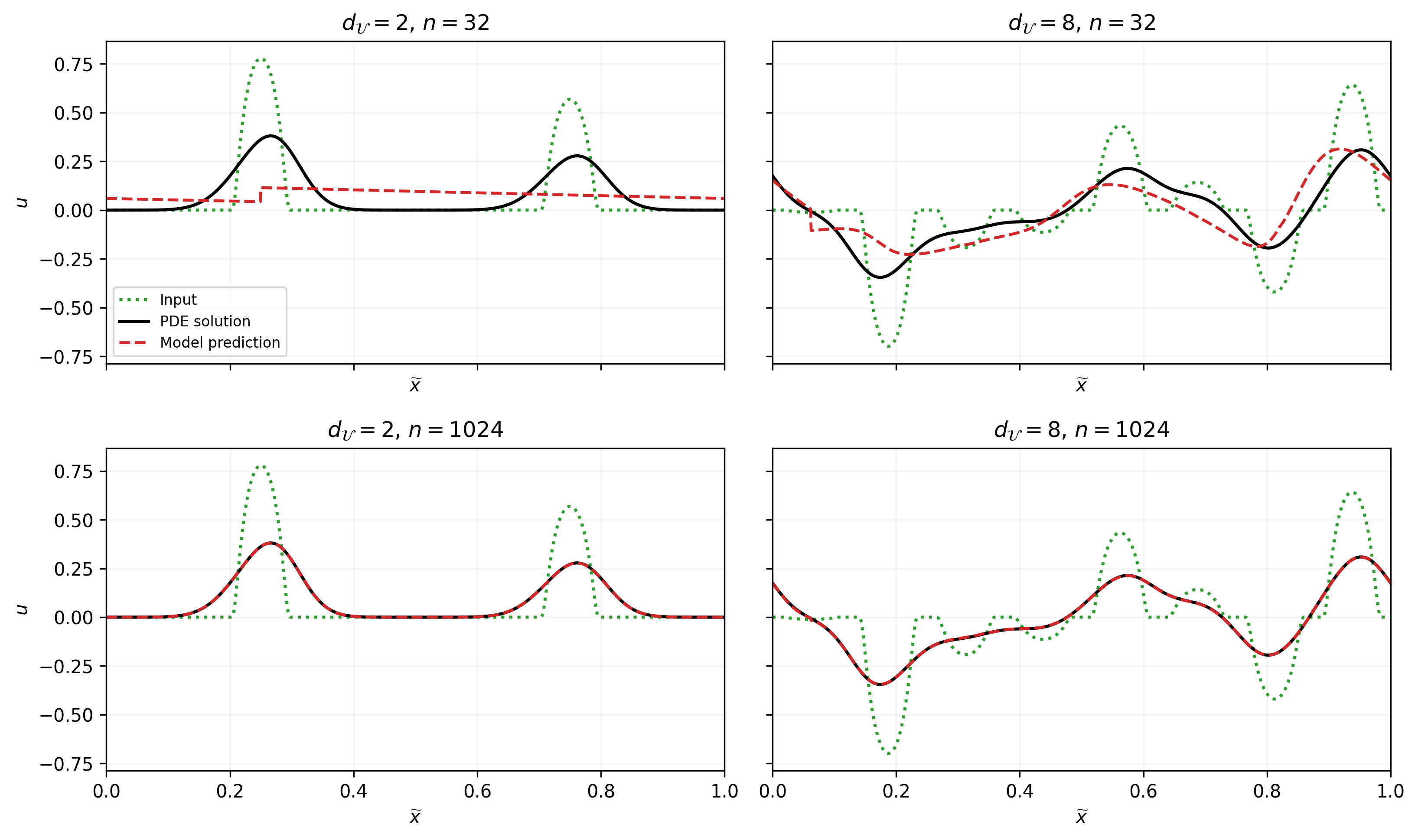}
    \caption{
    Representative Burgers' predictions. The top and bottom rows
    correspond to $n=32$ and $n=1024$, respectively; the left and right
    columns correspond to $d_{\cU}=2$ and $8$.
    Dotted curves show inputs, solid curves reference solutions,
    and dashed curves model predictions. Each column uses the same
    held-out test input across both sample sizes.
    }
    \label{fig:Burgers'_predictions}
\end{figure}

\subsection{KdV Equation}
\label{app:kdv_experiments}

For KdV, we consider
\begin{equation}
\label{eq:kdv_pde}
u_t+6uu_x+u_{xxx}=0
\end{equation}
on $x\in[0,44]$ with $T=1$. We use exact $N$-soliton solution
families parameterized by the speed and phase of each soliton. Writing
\begin{equation}
\label{eq:kdv_parameterization}
\theta
=
(c_1,p_1,\ldots,c_N,p_N),
\end{equation}
let $u^{(N)}(x,t;\theta)$ denote the corresponding exact $N$-soliton
solution and set
$$
u_0(x;\theta)=u^{(N)}(x,0;\theta),
$$
so that $d_{\cU}=2N$. We take $N=2$ and $4$, giving
$d_{\cU}=4$ and $8$, respectively.
The parameters are sampled independently according to
\begin{equation}
\label{eq:kdv_parameter_sampling}
c_j\sim
\operatorname{Unif}[0.97\bar c_j,1.03\bar c_j],
\qquad
p_j\sim
\operatorname{Unif}[\bar p_j-0.8,\bar p_j+0.8].
\end{equation}
For $d_{\cU}=4$, the nominal parameters are
$(\bar c_j)=(16,8)$ and $(\bar p_j)=(10,20)$; for
$d_{\cU}=8$, they are
$(\bar c_j)=(20,15,11,8)$ and
$(\bar p_j)=(4,10,16,22)$.
We use $n\in\{32,64,128,256,512,1024\}$.

The training, validation, and test targets are generated from the exact
multi-soliton solution. During training, both input and output amplitudes
are divided by $10$; predictions are multiplied by $10$ before computing
the test MSE in physical units. For visualization, we use $\widetilde x=x/44$ and display the amplitude
as $u/10$.

Figure~\ref{fig:kdv_predictions} uses repeat $0$ and fixed test
sample $0$ within each intrinsic dimension, with validation-selected
checkpoints and no visual sample selection.

\begin{figure}[t]
    \centering
    \includegraphics[width=0.9\linewidth]
    {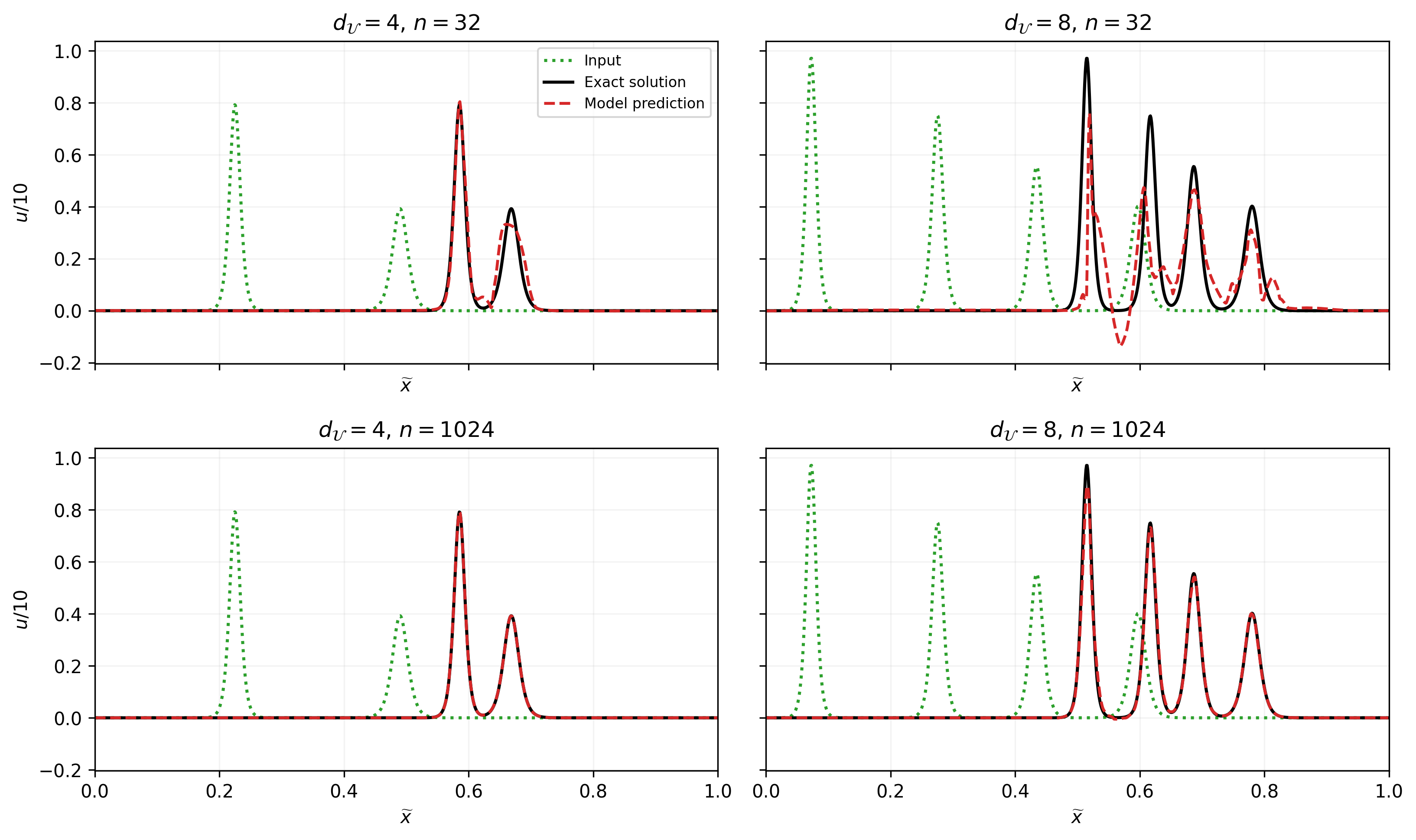}
    \caption{
    Representative KdV predictions. The top and bottom rows correspond
    to $n=32$ and $n=1024$, respectively; the left and right columns
    correspond to $d_{\cU}=4$ and $8$. Dotted curves show inputs,
    solid curves exact solutions, and dashed curves model predictions.
    Each column uses the same held-out test input across both sample sizes.
    }
    \label{fig:kdv_predictions}
\end{figure}

\end{document}